\def\flow{RMP{flow}\xspace}
\def\algebra{RMP-algebra\xspace}
\def\tree{RMP-tree\xspace}
\def\pushforward{{\small\texttt{pushforward}}\xspace}
\def\pullback{{\small\texttt{pullback}}\xspace}
\def\resolve{{\small\texttt{resolve}}\xspace}
\theoremstyle{plain}
\newtheorem{corollary}{Corollary}
\theoremstyle{definition}
\theoremstyle{remark}
\def\AA{\mathcal{A}}\def\CC{\mathcal{C}}
\def\MM{\mathcal{M}}\def\NN{\mathcal{N}}
\def\SS{\mathcal{S}}\def\TT{\mathcal{T}}
\def\XX{\mathcal{X}}
\def\Bb{\mathbf{B}}\def\Cb{\mathbf{C}}
\def\Gb{\mathbf{G}}\def\Hb{\mathbf{H}}\def\Ib{\mathbf{I}}
\def\Jb{\mathbf{J}}\def\Lb{\mathbf{L}}
\def\Mb{\mathbf{M}}
\def\Rb{\mathbf{R}}
\def\ab{\mathbf{a}}
\def\fb{\mathbf{f}}
\def\gb{\mathbf{g}}
\def\mbb{\mathbf{m}}
\def\qb{\mathbf{q}}
\def\ub{\mathbf{u}}
\def\vb{\mathbf{v}}\def\xb{\mathbf{x}}
\def\Rbb{\mathbb{R}}
\def\R{\Rbb}
\def\diag{\mathrm{diag}}
\def\t{\top}
\def\*{\star}
\newcommand{\norm}[1]{ \| #1 \|  }
\newcommand{\lr}[2]{ \left\langle #1, #2 \right\rangle}
\DeclareMathOperator*{\argmin}{arg\,min}
\newcommand{\paral}{{\!/\mkern-5mu/\!}}
\newcommand{\ma}{\mathbf{a}}
\newcommand{\q}{\mathbf{q}}
\newcommand{\qd}{{\dot{\q}}}
\newcommand{\qdd}{{\ddot{\q}}}
\newcommand{\uu}{\mathbf{u}}
\newcommand{\vv}{\mathbf{v}}
\newcommand{\x}{\mathbf{x}}
\newcommand{\xd}{{\dot{\x}}}
\newcommand{\xdd}{{\ddot{\x}}}
\newcommand{\y}{\mathbf{y}}
\newcommand{\yd}{{\dot{\y}}}
\newcommand{\ydd}{{\ddot{\y}}}
\newcommand{\z}{\mathbf{z}}
\newcommand{\f}{\mathbf{f}}
\newcommand{\zero}{\mathbf{0}}
\newcommand{\J}{\mathbf{J}}
\newcommand{\Jd}{{\dot{\J}}}
\newcommand{\Jp}{\J_{\phi}}
\newcommand{\Jdp}{\Jd_{\phi}}
\newcommand{\B}{\mathbf{B}}
\newcommand{\C}{\mathbf{C}}
\newcommand{\D}{\mathbf{D}}
\newcommand{\G}{\mathbf{G}}
\newcommand{\I}{\mathbf{I}}
\newcommand{\M}{\mathbf{M}}
\newcommand{\mT}{\mathbf{T}}
\newcommand{\V}{\mathbf{V}}
\newcommand{\wt}[1]{{\widetilde{#1}}}
\newcommand{\T}{\top}
\newcommand{\green}[1]{{\leavevmode\color{black}#1}}
\newcommand{\red}[1]{{\leavevmode\color{black}#1}}
\newcommand{\blueold}[1]{{\leavevmode\color{black}#1}}
\newif\ifLONG
\newif\ifAPP
\begin{document}
\title{RMP\textit{flow}: A Geometric Framework for \\ Generation of Multi-Task Motion Policies}

\newcommand{\inst}[1]{$^{#1}$}
\author{
	Ching-An Cheng\inst{1,2},
	Mustafa Mukadam\inst{1,2},
	Jan Issac\inst{1},
	Stan Birchfield\inst{1},\\
	Dieter Fox\inst{1,3},
	Byron Boots\inst{1,2},
	Nathan Ratliff\inst{1}
\thanks{\inst{1}Seattle Robotics Lab, NVIDIA, Seattle, WA, USA,
\inst{2}Robot Learning Lab, Georgia Institute of Technology, Atlanta, GA, USA, \inst{3}Robotics and State Estimation Lab, University of Washington, Seattle, WA, USA}
\thanks{Manuscript received June 24, 2019; revised May 31, 2020.}
}


\markboth{RMPflow - journal version - ArXiv, 2020}%
{Cheng \MakeLowercase{\textit{et al.}}: RMPflow}

\IEEEspecialpapernotice{(Invited Paper)}

\maketitle

\begin{abstract}
Generating robot motion for multiple tasks in dynamic environments is
challenging, requiring an algorithm to respond reactively while
accounting for complex nonlinear relationships between tasks.
In this paper, we develop a novel policy synthesis algorithm, 
\flow, based
on geometrically consistent transformations of Riemannian Motion Policies (RMPs). RMPs are a class of reactive motion policies that parameterize non-Euclidean behaviors as dynamical systems in intrinsically nonlinear task spaces.  Given a set of RMPs designed for individual tasks, \flow can combine these
policies to generate an expressive global policy, while simultaneously exploiting
sparse structure for computational efficiency.  We
study the geometric properties of \flow  and provide sufficient
conditions for stability.
Finally, we experimentally demonstrate that accounting for \red{the natural Riemannian}
geometry of task policies can simplify
classically difficult problems, such as
planning
through clutter on
high-DOF manipulation systems.
\end{abstract}

\def\abstractname{Note to Practitioners}
\begin{abstract}

    Requirements on safety and responsiveness for collaborative robots have
    driven a need for new ideas in control design that bridge between standard
    objectives in low-level control (such as trajectory tracking)   and
    high-level behavioral objectives (such as collision avoidance)
    often relegated to planning systems.
 	Modern results from geometric control, which promise
	stable controllers that can smoothly and safely transition between many
    behavioral tasks, therefore become highly relevant. However, for years this
    field has remained inaccessible due to its mathematical
    complexity.
    This paper aims to 1) make those ideas accessible to robotics and control
    experts by recasting them in a concrete algorithmic framework amenable to
    controller design, and 2) to additionally generalize them to better satisfy
    the specific needs of robotic behavior generation.
    Our experiments demonstrate that the resulting controllers can engender
    natural behavior that adapts instantaneously to changing surroundings with
    zero planning while performing manipulation tasks.
    The framework is gaining traction within the robotics community, finding
    increasing application in areas such as autonomous navigation, tactile
    servoing, and multi-agent systems.
    Future research will address learning these controllers from data to
    simplify that process of design and tuning, which at present can require experience.

\end{abstract}

\begin{IEEEkeywords}
\red{Operational Space Control, Acceleration Control,} Motion and Path Planning, Collision Avoidance, Dynamics
\end{IEEEkeywords}

\IEEEpeerreviewmaketitle

\vspace{-2mm}
\section{Introduction}

\IEEEPARstart{I}{n} this work, we develop a new  motion generation and control
framework that enables globally stable controller design for 
non-Euclidean spaces (namely, spaces defined by non-constant Riemannian metrics with non-trivial curvature).
\red{Non-Euclidean geometries arise commonly in the natural world, in particular
in the problem of obstacle avoidance.  When obstacles are present, straight
lines are no longer a reasonable definition of geodesics (shortest length paths).  Rather, geodesics must naturally flow around these obstacles that, in effect, become holes in the space and block trajectories from passing.}  This behavior implies a form of non-Euclidean geometry because the space is naturally curved by the presence of obstacles.

The planning literature has made substantial progress in modeling non-Euclidean task-space behaviors, but at the expense of efficiency and reactivity.
Starting with early differential geometric models of obstacle avoidance~\cite{rimon-ams-1991} and building toward modern planning algorithms and optimization techniques~\cite{RIEMORatliff2015ICRA,VijayakumarTopologyMotionPlanning2013,Watterson-TrajOptManifolds-RSS-18,ToussaintTrajOptICML2009,LavallePlanningAlgorithms06,KaramanRRTStar2011,GammellBitStar2014,mukadam2017continuous},
these algorithms can calculate highly nonlinear trajectories. However, they are often computationally intensive, sensitive to noise, and unresponsive to perturbation. In addition, the internal nonlinearities of robots due to kinematic constraints are sometimes simplified in the optimization. \red{While fast approximation and replanning heuristics have been proposed, the above characteristics in their nature make them unsuitable for motion generation in dynamic situations.}

At the same time, a separate thread of literature, emphasizing fast reactive responses over computationally expensive planning, developed efficient closed-loop
control techniques such as \red{operational space control}~\cite{khatib1987unified}. But while these techniques account for internal geometries from the robot's kinematic structure, they assume simple Euclidean geometry in task spaces~\cite{Peters_AR_2008,UdwadiaGaussPrincipleControl2003}, \red{thus} failing to provide a complete treatment of external geometries.
\red{For example, the unified formulation of operational space control~\cite{Peters_AR_2008,UdwadiaGaussPrincipleControl2003} is implicitly built on a classical mechanics concept called Gauss's principle of least constraint~\cite{udwadia1996analytical} which assumes each task space is Euclidean. }
Consequently obstacle avoidance, e.g., has to rely on \emph{extrinsic} potential functions, leading to undesirable \red{deceleration} behavior when the robot is close to the obstacle. \red{If somehow} the non-Euclidean geometry can be \emph{intrinsically} considered, then fast obstacle avoidance motion would naturally arise as traveling along the induced geodesics.
The need for a holistic solution to motion generation and control has motivated a number of recent system architectures that tightly integrate planning and control~\cite{2017_rss_system,mukadam2017approximately}.

\red{We improve upon these works by developing}
a new approach to synthesizing control policies that can intrinsically
accommodate and leverage the modeling capacity of
non-Euclidean robotics tasks.
Taking inspiration from Geometric Control
Theory~\cite{bullo2004geometric},\footnote{See
	\cref{app:GDSs} for a discussion of why geometric
	mechanics and geometric control theory constitute a good starting point.} we
design a novel recursive algorithm, \flow, which represents a class of nonlinear policies in terms of a recently proposed
control-policy descriptor \red{known as the} Riemannian
Motion Policy (RMP)~\cite{ratliff2018riemannian}.
\flow
enables geometrically consistent fusion of many component policies defined in non-Euclidean task spaces that are related through a tree structure.
\red{In essence, \flow computes a robot's desired acceleration by solving a high-dimensional weighted least-squared problem in which the weight matrices are nonlinear functions of the robot's position and velocity (i.e., the system's state).
While solving a high-dimensional optimization problem seems computationally difficult at first glance, \flow avoids this pitfall by computing the policy through performing forward and backward message passing along the tree structure that relates different task spaces.
As a result, the computation paths shared across different tasks can be leveraged to achieve efficiency.
Algorithmically, we can view \flow as mimicking} the Recursive Newton-Euler algorithm~\cite{walker1982efficient} in structure, but generalizing it beyond rigid-body systems to a broader class of highly nonlinear transformations and spaces.

In contrast to existing frameworks, 
our framework\red{, through the use of nonlinear weight matrix functions,}
naturally models non-Euclidean task spaces with Riemannian metrics that are not
only configuration dependent, but also \emph{velocity} dependent.
This allows \flow to consider, e.g., the \emph{direction} a robot travels to define the importance weights in combing policies.
For example, an obstacle, despite
being close to the robot, can usually be ignored if robot is heading away from
it.
This new class of policies leads to an extension of Geometric Control Theory,
building a new class of non-physical mechanical systems
we call geometric dynamical systems (GDS).

\red{While \flow offers extra flexibility in control design, one might naturally ask if it is even stable, as the use of weight function introduces additional feedback signals that could destroy the original stability of the component policies.
The answer to this question is affirmative.
We prove that \flow is Lyapunov-stable. Moreover, we show that the construction of \flow is coordinate-free.} In particular, when using \flow,
robots can be viewed each as different parameterizations of the same task space, defining a precise notion of behavioral
consistency between robots.
Additionally, under this framework, the implicit curvature arising from
non-constant Riemannian metrics (which may be roughly viewed as configuration-velocity dependent inertia matrices in operational space control) produces nontrivial and intuitive policy contributions
that 
are critical to guaranteeing stability and generalization across embodiments.

\red{We demonstrate the properties of \flow in simulations and experiments.}
Our experimental results illustrate how these {curvature terms} can be impactful in practice, generating nonlinear geodesics that result in curving or orbiting around obstacles.
Furthermore, we demonstrate the utility of our framework with a fully reactive real-world system
implementation on multiple dual-arm manipulation problems.

\red{An earlier conference version of this paper was published
as~\cite{cheng2018rmpflow} with more details in a corresponding
technical report~\cite{cheng2018rmpflowarxiv} which includes
many specific examples of the RMPs used in the experiments (Appendix D).
\green{In addition to providng extra details to~\cite{cheng2018rmpflow}, this
extended manuscript offers a new tutorial in \cref{sec:tutorial} that discusses in depth
the design rationale behind \flow and how \flow relates to and generalizes
existing schemes.}
The design of \flow is highly inspired by the seminal work of RMPs~\cite{ratliff2018riemannian} that promotes the concept of including geometric information in policy fusion. This paper and its former version~\cite{cheng2018rmpflow} formalize the original intuition in \cite{ratliff2018riemannian} and further extend this idea to geometric mechanics and beyond.
Increasingly RMPs and \flow have been applied broadly into robotic systems, finding applications in
autonomous navigation \cite{meng2019NeuralAutoNavigation,meng2020TopologicalNavigation},
manipulation systems \cite{2017_rss_system}, \green{humanoid control \cite{wingo2020Extendings}}, reactive logical task sequencing
\cite{paxton2019RLDS},
tactile servoing \cite{sutanto2019TactileServoing},
and multi-agent systems \cite{li2019MultiAgentRMPsArXiv,li2019LyapunovRMPs}.
And related work has begun exploring the learning of RMPs
\cite{mukadam2019RMPFusion,rana2019LearningRmpsFromDemonstration}.
}

\section{Motion Generation and Control}

Motion generation and control can be formulated as the problem of transforming curves \green{between} the configuration space $\CC$ to the task space $\TT$.
Specifically, let the configuration space
$\CC$ be a $d$-dimensional smooth manifold. A robot's motion can be described as a curve $q:[0,\infty) \to \CC$ such that the robot's configuration at time $t$ is a point $q(t) \in \CC$.
Without loss of generality, suppose $\CC$ has a global coordinate $\q: \CC \to \R^d$, called the \emph{generalized coordinate}; for brevity, we identify the curve $q$ with its coordinate expression  $\q \circ q$
and write $\q(q(t))$ as $\q(t) \in \R^d$.
A typical example of the generalized coordinate is the joint angles of a $d$-DOF (degrees-of-freedom) robot: we denote $\q(t)$ as the joint angles at time $t$ and $\qd(t)$, $\qdd(t)$ as the joint velocities and accelerations, respectively.
To describe the tasks, we consider another manifold $\TT$, the task space, which is related to the configuration space $\CC$ through a smooth \emph{task map} $\psi: \CC \to \TT$. The task space $\TT$ can be the end-effector position/orientation~\cite{khatib1987unified,albu2002cartesian}, or more generally can be a space that describes whole-body robot motion, e.g., in simultaneous tracking and collision avoidance~\cite{sentis2006whole,lo2016virtual}.
\red{Under this setup, thus the goal of motion generation and control can be viewed as designing the curve $q$ (in a closed-loop manner) so that the transformed curve $\psi \circ q $ exhibits desired behaviors on the task space $\TT$.}

\red{To simplify the exposition, below we suppose that the robot's dynamics
have been feedback linearized and restrict our attention to designing
acceleration-based controllers. We remark that a torque-based setup can be
similarly derived by redefining the pseudo-inverse in \resolve in Section~\ref{sec:RMPAlgebra} in terms of the inner product space induced by the robot's physical inertia on $\CC$~\cite{Peters_AR_2008},
so long as the system is
fully actuated and the inverse dynamics can be modeled.}

\subsection{Notation}
For clarity, we use boldface to distinguish the coordinate-dependent representations from abstract objects; e.g. we write $ q(t) \in \CC$ and $\q(t) \in \R^d$.  In addition, we will often omit the time- and input-dependency of objects unless necessary; e.g., we may write $ q \in \CC$ and $(\q,\qd, \qdd)$.
For derivatives, we use both symbols $\nabla$ and $\partial$, with a transpose relationship: for $\x \in \R^m$ and a differential map $\y:\R^m \to \R^n$, we write $\nabla_\x \y(\x) = \partial_\x \y(\x)^\t \in \R^{m \times n}$.
\red{This choice of notation allows us to write $\nabla_\y f(\y) \in \R^n$ when $f$ is a scalar function and perform chain-rule $\partial_x f(\y(\x)) = \partial_\y  f(\y) \partial_\x \y(\x)$ in the usual way. }
For a matrix $\M \in \R^{m\times m}$, we denote $\mbb_i = (\Mb)_i$ as its $i$th column and $M_{ij} = (\Mb)_{ij}$ as its $(i,j)$ element. To compose a matrix from vector or scalar elements,
we use $(\cdot)_{\cdot}^\cdot$ for vertical (or matrix) concatenation  and $[\cdot]_{\cdot}^\cdot$ for horizontal concatenation. For example, we write $\M = [\mbb_i]_{i=1}^m = (M_{ij})_{i,j=1}^m$ and $\M^\t = (\mbb_i^\t)_{i=1}^m =  (M_{ji})_{i,j=1}^m$. We use $\R^{m\times m}_{+}$ and $\R^{m\times m}_{++}$ to denote the symmetric, positive semi-definite/definite matrices, respectively.

\subsection{Motion Policies and the Geometry of Motion}

We model motion as a second-order
differential equation
of $\qdd = \pi(\q, \qd)$, where we call $\pi$ a \textit{motion policy} and $(\q, \qd)$  the \emph{state}.
In contrast to an open-loop trajectory, which forms the basis
of many motion planners, a motion policy expresses the entire continuous collection of its integral trajectories{\ifLONG\footnote{An integral curve is the trajectory starting from a particular state.}\fi} and therefore is robust to perturbations.
Motion policies can model many adaptive behaviors, such as reactive obstacle avoidance
~\cite{DRCIntegratedSystemTodorov2013,2017_rss_system} or responses driven by planned Q-functions~\cite{OptimalControlTheoryTodorov06}, and their  second-order formulation enables rich behavior that cannot be realized by the velocity-based approach~\cite{liegeois1977automatic}.

The geometry of motion has been considered by many planning and control algorithms.
Geometrical modeling of task spaces is used in topological motion
planning~\cite{VijayakumarTopologyMotionPlanning2013}, and motion optimization
has leveraged Hessian to exploit the natural geometry of costs~\cite{RatliffCHOMP2009,ToussaintTrajOptICML2009,Mukadam-ICRA-16,Dong-RSS-16}.
Ratliff et al.~\cite{RIEMORatliff2015ICRA}, e.g., use the workspace geometry inside a Gauss-Newton optimizer and generate natural obstacle-avoiding reaching motion through traveling along geodesics of curved spaces.

Geometry-aware motion policies were also developed in parallel by the control community. \red{Operational space control} is the best example~\cite{khatib1987unified}.
Unlike the planning approaches, operational space control focuses on the internal geometry of the robot and considers only simple task-space geometry: it reshapes the workspace dynamics into a simple spring-mass-damper system with a {constant} inertia matrix, enforcing a form of Euclidean geometry in the task space.
\red{By contrast, pure potential-field approaches~\cite{KhatibPotentialFields1985,flacco2012icra,kaldestad2014icra} fail to realize this idea of task-space geometry and lead to inconsistent behaviors across robots.}
Variants of operational space control have been proposed to consider different metrics~\cite{Nakanishi_IJRR_2008,Peters_AR_2008,lo2016virtual}, task hierarchies~\cite{sentis2006whole,platt2011multiple}, and non-stationary inputs~\cite{IjspeertDMPs2013}.

While these algorithms have led to many advances, we argue that their isolated focuses on either the internal or the external geometry limit the performance. The planning approach fails to consider reactive dynamic behaviors; the control approach cannot\footnote{\red{Existing works, like variants of operational space control and designs centered around Geometric Control Theory~\cite{bullo2004geometric}, can consider at most position-dependent metrics.}} model the effects of velocity dependent metrics, which are critical to generating sensible obstacle avoidance motions, as discussed in the introduction.
While the benefits of velocity dependent metrics was recently explored using RMPs~\cite{ratliff2018riemannian}, a systematic understanding of its properties, like stability, is still an open question.

\section{From Operational Space Control to Geometric Control} \label{sec:tutorial}

We set the stage for our development of RMPflow and geometric dynamical systems (GDSs)
in Section~\ref{sec:RMP algebra} and \ref{sec:analysis} by first giving some
background on the key tools central to this work.
We will first give a tutorial on a controller design technique known as energy shaping and the geometric
formulation of classical mechanics, both of which are commonly less
familiar to robotics researchers.
Then we will show how geometric control \cite{bullo2004geometric}, which to a great extent developed
independently of operational space control within a distinct community,
nicely summarizes these two ideas and leads to constructive techniques of
leveraging energy shaping in the context of geometric mechanics.

\green{This section targets at readers more familiar with operational space control and
introduces relevant geometric ideas in a way that we hope is more accessible than
the traditional exposition of geometric control/mechanics which assumes a background in differential geometry.
The material presented in this section primarily rehashes existing techniques
from a perhaps unfamiliar community, restating them in a way designed to be more
natural to researchers familar with operational space control.
}

\green{ To set the stage, we start with a simple example of controller design
using these techniques where a robot must trade off competing tasks of reaching
to a target, obstacle avoidance, speed regulation, and joint limit avoidance.
We then incrementally present the underlying ideas:} First we review classical
operational space control wherein tasks are represented as hard constraints on the mechanical
system. Next we show how energy shaping and the geometric mechanics formalism enable us to
easily develop provably stable operational space controllers that
simultaneously trade off many tasks.
Finally, we end with a discussion of the limitations of these geometric control techniques that RMPflow
and GDSs will address in Section~\ref{sec:RMP algebra} and \ref{sec:analysis}.

\subsection{Motivating Example for Geometric Control}

\green{

The goal of this tutorial is to build an understanding of geometric mechanics
and how it is used in geometric control. As motivation, we present a basic
example of how geometric control can be used for intuitive controller design.
Geometric control is only the precursor to this paper's main topic of RMPflow;
its limitations are discussed in Section~\ref{sec:LimitationsOfGeometricControl}.

In our example, we consider the problem of getting a manipulator to 1) reach toward
a target while 2) avoiding an obstacle, 3) satisfying joint limits, and 4)
regulating the speed of its body.
With geometric control, this problem can be framed as building controllers in task spaces defined by
differentiable maps from the robot's configuration space $\CC$.
Specifically, the configuration space $\CC$
is $d$ dimensional (each joint represents a dimension) and our task spaces of interest are:
1) the end-effector location $\x_e = \psi_e(\q)$ (end-effector forward
   kinematics);
2) a collection of $n$ control points on the robot $\x_i = \psi_i(\q)$ for
   $i=1,\ldots,n$ (body-point forward kinematics);
3) the scalar distance $z_i = d_\mathrm{obs}(\x_i)$ between each of these
   control points and the obstacle;
4) a joint limit control space $\uu = \psi_\mathrm{jl}(\q)$. This final map is a
   map whose inverse $\q = \psi_\mathrm{jl}^{-1}(\uu)$ takes the entire
   unconstrained $\R^d$
   and squishes each dimension so that $\psi_\mathrm{jl}^{-1}(\R^d)$
   fits within the joint limits. Sigmoids, for
   instance, work well for this inverse function, and $\psi_\mathrm{jl}$ is then
   just the inverse of that.

These maps create a tree structure (known as a transform tree in general,
or what we will call the \tree below), with $\q$ at the root, $\uu$, $\x_e$ and
$\x_i$ at depth 1, and $z_i$ at depth 2 as a child of $\x_i$.
At each node of this transform tree, we place controllers with associated
nonlinear priorities:
1) An end-effector attractor on $\x_e$ (with damping) pulling toward the goal
   $\x_g$, which increases in priority as the system approaches the target.
2) Obstacle avoidance controllers on $z_i$ pushing away from obstacles with
   priority increasing with obstacle proximity.
3) Damping controllers on each body point $\x_i$ with a constant priority
   to regulate speed.
4) A joint limit controller operating in $\uu$ pulling toward a nominal
   configuration $\uu_0 = \psi_{jl}(\q_0)$, for some nominal joint values
   $\q_0$, with increasing priority away from $\uu_0$.

Note that for the joint limit space $\uu$, since all of $\R^d$ is squished down
to fit within the joint limits via $\q = \psi_{jl}^{-1}(\uu)$, even a controller
that increases its priority and desired accelerations linearly in $\uu$ will
have a dramatic nonlinear increase near a joint limit due to the nonlinearities
of $\psi_{jl}$.  Furthermore, by adding obstacle controllers to just the
single-dimensional $z_i$ distance spaces, their priorities act only along that one
dimension toward the obstacle, enabling the body-point space $\x_i$ from which
they stem to freely accommodate competing controllers acting orthogonally to those
directions.

There are many good choices for these controllers and associated priority
functions. Geometric control defines the rules for what is allowed and how to
automatically combine the controllers at the configuration space $\CC$, so that the resulting controller
trades off the individual task priorities effectively and stabily.  The rest of
this tutorial is dedicated to the construction of these rules and the principles
behind them, as well as a discussion of their limitations.

}

\subsection{Energy Shaping and Classical Operational Space Control}
\label{sec:EnergyShapingClassicalOpSpaceControl}

Energy shaping is a controller design technique: the designer first configures a virtual
mechanical system by shaping its kinetic and potential energies to exhibit a
certain behavior, and then drive the robot's dynamics to mimic that virtual system.
This scheme overall generates a control law with a well-defined Lyapunov
function, given as the virtual system's total energy, and therefore has provable stability.

For instance, the earliest form of operational space control \cite{khatib1987unified}
formulates a virtual system that places all mass at the end-effector.
Behavior is then shaped by applying potential energy functions (regulated by a
damper) to that virtual mass (e.g. by connecting the end-effector to a target
using a virtual damped spring). Controlling the system to behave like that
virtual system then generates a control law whose stability is governed by the
total energy of that virtual point-mass system.
In this context,
the choice of virtual mechanical system (the point end-effector mass) represents a
form of {\it kinetic energy shaping}, and the subsequent choice of potential
energy applied to that point end-effector mass is known as {\it potential
energy shaping}.
This particular pattern of task-centric kinetic and potential energy shaping,
is common throughout the operational space control literature.

A similar theme can be found in \cite{Peters_AR_2008}.
Here the virtual mechanical systems are designed by constraining an existing
mechanical system (e.g. the robot's original dynamics) to satisfy task
constraints.  This is achieved by designing controllers around a generalized
form of Gauss's principle of least constraint~\cite{udwadia1996analytical}, so
that virtual mechanical systems would behave in a sense as similarly as
possible to the true robotic mechanical system while realizing the required
task accelerations. In other words, the energies of the original mechanical
system are reshaped to that given by the task constraints.

In essence, the early examples above follow the principle that faithful
executions of the task enable a simplified stability analysis as long as the
task space behavior is itself well-understood and stable.
This style of analysis and controller design has been successful
in practice. Nonetheless, it faces a limitation that the controllers cannot
have more tasks than the number of DOF in the system. This restriction becomes particularly
problematic when one wishes to introduce more complex auxiliary behaviors, such
as collision avoidance where the number of tasks might scale with the number of
obstacles and the number of control points on the robot's body.

The rest of this section is dedicated to unify and then generalize the above ideas
through the lens of geometric mechanics.
The results developed therein will extend operational space control to handle more complex settings of many competing tasks
through using weighted priorities that can change as a function of the robot's configuration.
However, we will eventually see in \cref{sec:LimitationsOfGeometricControl} that even this extension is still not quite sufficient for representing many common behaviors.
The insights into sources of these limitations are the motivation of the development of \flow and geometric dynamical systems (GDSs).

\subsection{A Simple First Step toward Weighted Priorities}

This section leverages Gauss's principle of least constraint (different from
the techniques mentioned briefly in
Section~\ref{sec:EnergyShapingClassicalOpSpaceControl} \cite{Peters_AR_2008})
to illustrate the concept of energy shaping, which will be used more abstractly
below to derive a simple technique for combining multiple task-space policies.

\vspace{1mm}
\subsubsection{Gauss's Principle}

Gauss's principle of least constraint states that a nonlinearly
constrained collection of particles evolves in a way that is most similar to
its unconstrained evolution, as long as this notion of similarity is measured
using the inertia-weighted squared error \cite{UdwadiaGaussPrincipleControl2003}.
For example, let us consider $N$ particles: $\x_i\in\R^3$
with respective (positive) inertia $m_i\in\R_+$, for $i=1,\dots,N$.
Then the acceleration $\xdd_i$ of the $i^{\mathrm{th}}$
particle
under Gauss's principle can be written as
\begin{align} \label{eqn:RawGaussPrinciple}
  \textstyle \xdd =  \argmin_{\xdd' \in\mathcal{\AA}}\frac{1}{2}\|\xdd^d - \xdd'\|_\M^2
\end{align}
where $\mathcal{A}$ denotes the set of admissible constrained accelerations.
To simplify the notation, we stacked\footnote{We use the notation $\vv = (\vv_1;\vv_2;,\dots,;\vv_N)$
to denote stacking of vectors $\vv_i\in\R^3$ into a single vector $\vv\in\R^{3N}$.}
the particle accelerations into a vector $\xdd = (\xdd_1;\dots; \xdd_N)$ and construct a diagonal matrix $\M =
\mathrm{diag}(m_1\Ib,\ldots,m_N\Ib)$, where $\Ib \in \R^{3\times3}$ is the
identity matrix.

\vspace{1mm}
\subsubsection{Kinematic Control-Point Design}

Let us use the above idea to design a robot controller.
If we define many kinematic control points $\x_i\in\R^3, i = 1,\ldots,N$
distributed across the robot's body and calculate a desired acceleration
at those points $\xdd_i^d$, a sensible way to trade off these different accelerations is through the following quadratic program (QP):
\begin{align} \label{eqn:ConstantWeightControlPointPrioritiesQP}
    \textstyle \min_{\xdd_i} \sum_{i=1}^N \frac{m_i}{2} \|\xdd_i^d - \xdd_i\|^2 \quad \mathrm{s.t.}\ \ \xdd_i = \J_i\qdd + \Jd_i\qd,
\end{align}
where each $m_i>0$ is the importance weight in the QP, $\x_i = \psi_i(\q)$ is the forward kinematics map to the
$i^{\mathrm{th}}$ control point and $\J_i =
\partial_{\q_i}\psi_i$ is its Jacobian.
\green{This QP states that the system (subject to the kinematic constraints on
how each control point can accelerate) should follow the desired accelerations
if possible, while trading off different tasks using the priorities given by
$m_i$ in the event they cannot be achieved exactly.}
(As mentioned we assumed the system has been feedback linearized so we focus on acceleration only.)

Comparing this QP to that given by Gauss's principle in
\eqref{eqn:RawGaussPrinciple}, \green{we see the importance weight $m_i$ in
\eqref{eqn:ConstantWeightControlPointPrioritiesQP} plays the same role as the
inertia $m_i$ in \eqref{eqn:RawGaussPrinciple} (motivating the use of the same
symbol in both cases).
Therefore,} one can immediately see that its
solution gives the constrained dynamics of a mechanical system defined by $N$
point particles of inertia $m_i$ with unconstrained accelerations $\xdd_i^d$ and
acceleration constraints $\xdd_i = \J_i\qdd + \Jd_i\qd$.  In particular, if
$\xdd_i^d = -m_i^{-1}\nabla\phi_i - \beta_i\xd_i$ for some non-negative
potential function $\phi_i$ and constant $\beta_i$, we arrive at a mechanical system with total energy
$
	\textstyle
     \sum_{i=1}^N \left(\frac{m_i}{2}\|\xd_i\|^2 + \phi_i(\x_i)\right)
$.
Controlling the robot system according to desired accelerations $\qdd^*$
given by solving \eqref{eqn:ConstantWeightControlPointPrioritiesQP}
ensures that this total energy dissipates at a rate defined by the collective
non-negative dissipation terms $\sum_i
m_i\beta_i\|\xd_i\|^2$.  This total energy, therefore, acts as a Lyapunov function.

This kinematic control-point design technique utilizes now more explicitly the
methodology of energy shaping. In this case, we use Gauss's principle to design
a virtual mechanical system that strategically distributes point masses
throughout the robot's body at key control points (kinetic energy shaping). We
then apply damped virtual potential functions to those masses to generate
behavior (potential energy shaping). In combination, we see that the resulting
system can be viewed as a QP which tries to achieve all tasks simultaneously the best it can.
When an exact replication of all tasks is impossible, the QP uses
the inertia values as importance weights to define how the system should trade
off task errors.

\subsection{Abstract Task Spaces: Simplified Geometric Mechanics}
\label{sec:AbstractTaskSpaces}

The controller we just described demonstrates the core concept around energy
shaping, but is limited by requiring that tasks be designed specifically on
kinematic control-points distributed \emph{physically} across the robot's body.
Usually task spaces are often more abstract than that, and generally we want to
consider any task space that can be described as a nonlinear map from the
configuration space.

Using abstract task spaces is common in trajectory optimization.
For instance, \cite{ToussaintTrajOptICML2009} describes some abstract topological spaces for
behavior creation which enable behaviors such as wrapping an arm around a pole
and unwrapping it, and
abstract models of
workspace geometry are represented in
\cite{RIEMORatliff2015ICRA,MainpriceWarpingIROS2016}
by designing high-dimensional task spaces consisting of
stacked (proximity weighted) local coordinate
representations of surrounding obstacles
conveying how obstacles shape the space around them.
Likewise, similar abstract spaces are highly relevant for
describing common objectives in operational space control problems.
For instance, spaces of interest
include one-dimensional spaces encoding distances to barrier constraints such
as joint limits and obstacles, distances to targets, spaces of quaternions,
and the joint space itself; all of these are more abstract than specific
kinematic control-points. In order to generalize the ideas in the previous section to abstract task
spaces we need better tools.
Below we show the geometric mechanics and
geometric control theory \cite{bullo2004geometric}
provide the generalization that we need.

\vspace{1mm}
\subsubsection{Quick Review of Lagrangian Mechanics}
\label{sec:LagrangianMechanics}

Lagrangian mechanics is a reformulation of classical mechanics that derives
the equations of motion by applying the Euler-Lagrange equation on the Lagrangian
of the mechanical system \cite{ClassicalMechanicsTaylor05}.
Specifically, given a generalized inertia matrix $\M(\q)$ and a potential function
$\Phi(\q)$, the Lagrangian is the difference between  kinetic and potential energies:
\begin{align} \label{eqn:Lagrangian}
	\textstyle
    \mathcal{L}(\q,\qd) = \frac{1}{2}\qd^\t\M(\q)\qd - \Phi(\q).
\end{align}
The Euler-Lagrange equation is given by
\begin{align}  \label{eqn:EulerLagrangEquation}
	\textstyle
    \frac{d}{dt}\partial_{\qd}\mathcal{L} - \partial_{\qb}\mathcal{L} = \tau_{\mathrm{ext}}
\end{align}
where $\tau_{\mathrm{ext}}$ is the external force applied on the system.  Applying \eqref{eqn:EulerLagrangEquation} to the Lagrangian \eqref{eqn:Lagrangian}
gives the equations of motion:
\begin{align}\label{eqn:LagrangianMechanics}
    \M(\qb)\qdd + \C(\q,\qd)\qd + \nabla \Phi(\q) = \tau_{\mathrm{ext}},
\end{align}
where $\C(\q, \qd)\qd = \dot{\M}(\qb,\qd)\qd - \frac{d}{dt}\left(\frac{1}{2}\qd^\t\M(\q)\qd\right)$.
For convenience, we will define this term as
\begin{align} \label{eqn:DirectCurvatureCalc}
\textstyle\bm\xi_{\Mb}(\q,\qd) = \dot{\M}(\qb,\qd)\qd - \frac{d}{dt}\left(\frac{1}{2}\qd^\t\M(\q)\qd\right)
\end{align}
which will play an important role when we discuss about the geometry of implicit task spaces. (This definition is consistent with the curvature term in GDSs that we later generalize.)

\vspace{1mm}
\subsubsection{Ambient Geometric Mechanics}
\label{sec:ambient geometry}

Geometric mechanics \cite{bullo2004geometric} is a reformulation of classical mechanics that builds on
the observation that classical mechanical systems evolve as geodesics across a
Riemannian manifold whose geometry is defined by the system's inertia matrix. We
can see what this means by exploring a simple example, which
we will use to derive an operational space QP similar in form to
\eqref{eqn:ConstantWeightControlPointPrioritiesQP}.

\green{
Specifically let $\x = \psi(\q)$ be an arbitrary differentiable task map
$\psi:\R^d\rightarrow\R^n$ where $n \geq d$.
The task map $\psi$ defines a $d$-dimensional sub-manifold $\XX=\{ \x: \x = \psi(\q), \q \in \CC \}$ of the $n$-dimensional ambient Euclidean task space $R^n$.
Without loss of generality, we suppose $\psi$ is full rank. (Reduced
rank $\psi$ results in a similar geometry, with sub-manifold dimensionality
matching the rank, but we would need to slightly modify the linear algebra used in the following discussion.)
Then we can define a positive definite matrix $\M(\q) = \J(\qb)^\t\J(\qb)$ that changes as a function of
configuration, where $\J(\q) = \partial_\qb \phi(\qb)$ is the Jacobian of the task map.

Geometric mechanics states that
we can think of $\M(\q)$ as both the generalized inertia matrix of a mechanical
system defining a dynamic
behavior
$\M(\q)\qdd + \C(\q,\qd)\qd = \tau_{\mathrm{ext}}$ (see also Equation~\eqref{eqn:LagrangianMechanics}),
and equivalently as a Riemannian metric defining an inner product
$\langle\qd_1,\qd_2\rangle_\M = \qd_1^\t\M(\qb)\qd_2$ on the
tangent space (for our purposes, the space of velocities $\qd$ at a given $\q$) of the configuration space $\CC$ (the manifold where $\qb$ lives).
In other words, the kinetic energy of the mechanical system is given by the
norm of $\qd$ with respect to the inner product defined by the metric $\M(\q)$:
\begin{align}
	\textstyle
    \mathcal{K}(\q,\qd) = \frac{1}{2}\qd^\t\M(\q)\qd
    = \|\qd\|_\M^2 = \langle\qd, \qd\rangle_\M.
\end{align}
In this particular, case with $\M = \J^\t\J$, this kinetic energy is also equal to the Euclidean velocity
in the task space
\begin{align} \label{eqn:KineticEnergyTaskSpaceVelocity}
	\textstyle
    \mathcal{K}(\q,\qd) = \frac{1}{2}\qd^\t\M(\q)\qd
    = \frac{1}{2}\qd^\t\big(\J^\t\J\big)\qd = \frac{1}{2}\|\xd\|^2.
\end{align}

Note that these Euclidean velocities are vectors living in the tangent space of
the ambient embedded manifold spanned by the columns of $\J(\qb)$, which can change with different $\qb$. (This tanget space is a first-order Taylor approximation to the surface at a point $\x_0 = \phi(\q_0) \in \XX$ for some $\qb_0$ in the sense $\x \approx \x_0 + \J(\q - \q_0)$.)
Importantly, the Euclidean inner product between velocities $\xd_1,\xd_2$ in
the task space $\x$
induces a generalized inner product between corresponding velocities
$\qd_1,\qd_2$ in the configuration space $\q$ in the sense
$\xd_1^\t\xd_2 = \qd_1^\t\M\qd_2$, for $\xd_1 = \Jb(\q) \qd_1$ and $\xd_2 = \Jb(\q) \qd_2$.
This connection between inner products offers a concrete connection between mechanics and geometry which we can exploit to link the system's equations of motion to geodesics across $\mathcal{X}$.
}

\subsubsection{Force-Free Mechanical Systems}

For systems without potential functions and external forces, we can get
insights into the connection between dynamics
and geodesics from the view point of Lagrangian mechanics as well.
The Lagrangian \eqref{eqn:Lagrangian} in this case
simplifies to $\mathcal{L} = \frac{1}{2}\qd^\t\M(\qb)\qd - \Phi(\q) = \frac{1}{2}\qd^\t\M(\qb)\qd$.
The Euler-Lagrange equation in \eqref{eqn:EulerLagrangEquation} is the first-order
optimality condition of an {action functional} which measures the time integral
of the Lagrangian across a trajectory, which takes a nice minimization form in this case
\begin{align}
	\textstyle
    \min_\xi \int_a^b \frac{1}{2} \qd^\t\M(\qb)\qd dt
    \ \ \Leftrightarrow\ \
    \min_\xi \int_a^b \frac{1}{2} \|\xd\|^2 dt,
\end{align}
where $\xi$ is a trajectory through the configuration space $\CC$.
\green{
One can show that an optimal trajectory to this length minimizion problem has a
constant speed through the ambient space $\|\xd\|$.
In other words, following the dynamics given by the Euler-Lagrangian equation }will curve across
the sub-manifold $\XX$ along a trajectory that is as straight as possible
without speeding up or slowing down.

Another way to characterize this statement
is to say the system never accelerates tangentially to the sub-manifold $\XX$,
i.e. it has no component of acceleration parallel to the tangent space. The curve certainly must accelerate to avoid diverging from the sub-manifold $\XX$, but that acceleration is always purely orthogonal to its tangent space.
Since we know that $\J$ spans the
tangent space, we can capture that sentiment fully
in the following simple equation:
\begin{align} \label{eqn:AmbientGeodesicEquation}
    \J^\t\xdd = \zero.
\end{align}
Plugging in $\xdd = \J\qdd + \Jd\qd$ we get
\begin{align} \label{eqn:TaskSpaceEqnsOfMotion}
    &\J^\t\xdd = \J^\t\left(\J\qdd + \Jd\qd\right) = \zero \nonumber \\
    &\ \ \Leftrightarrow \left(\J^\t\J\right)\qdd + \J^\t\Jd\qd = \zero.
\end{align}
Comparing \eqref{eqn:TaskSpaceEqnsOfMotion} to \eqref{eqn:LagrangianMechanics} (with zero potential and
external forces),
since we already know $\J^\t\J = \M(\q)$, we can formally
prove the connection between geodesics and classical mechanical dynamics if we can show that $\J^\t\Jd\qd = \C(\q,\qd)\qd$.
The required calculation is fairly involved, so we omit it here but note
for those inclined that it is easiest to perform using tensor notation
and the Einstein summation convention as is common in differential geometry.
This equivalence also appears as by-product fo our \flow and GDS analysis, as we will revisit in \cref{app:GDSs} as \cref{lm:Coriolos identity}.

\vspace{1mm}
\subsubsection{Forced Mechanical Systems and Geometric Control}

So far we have derived only the unforced behavior of this system as natural geodesic flow across the sub-manifold $\XX$.
To understand how desired accelerations contribute to
the least squares properties of the system, we express
\eqref{eqn:TaskSpaceEqnsOfMotion}
in $\x$ by pushing them through the identity $\xdd = \J\qdd + \Jd\qd$ and examine
how arbitrary motion across the sub-manifold $\XX$ decomposes.
Combining the equations of motion in $\xdd$ with \eqref{eqn:TaskSpaceEqnsOfMotion}, we have
\begin{align} \label{eqn:GeodesicInTaskSpace}
    \xdd
    &= \J\qdd + \Jd\qd = -\J\big(\J^\t\J\big)^{-1}\J^\t\Jd\qd + \Jd\qd \nonumber \\
    &= \left(\I - \J\big(\J^\t\J\big)^{-1}\J^\t\right)\Jd\qd\nonumber  \\
    &= \mathbf{P}_\perp \Jd\qd,
\end{align}
where in the final expression, the matrix
$\mathbf{P}_\perp = \I - \mathbf{P}_\paral$ with
$\mathbf{P}_\paral = \J\big(\J^\t\J\big)^{-1}\J^\t$ is the nullspace
projection operator projecting onto the space orthogonal to the tangent space
(spanned by the Jacobian $\J$). Note again these projections $\mathbf{P}_\perp$  and $\mathbf{P}_\paral$ are functions of the configuration $\qb$.
Therefore these geodesics accelerate only orthogonally to the tangent space. This implies that {any} trajectory,
traveling on the sub-manifold $\mathcal{X}$ but deviating from geodesics, would necessarily maintain an acceleration
component parallel to the tangent space. Let us denote this extra acceleration as $\xdd_\paral$. Importantly,
any such trajectory must still accelerate exactly as
\eqref{eqn:GeodesicInTaskSpace} in the orthogonal direction in order
to stay moving along the sub-manifold $\XX$. Therefore, we see that the overall acceleration of a trajectory on $\XX$ can
decomposed nicely into the geodesic acceleration and the tangential acceleration:
\begin{align} \label{eq:decomposition of acc}
    \xdd
    &= \mathbf{P}_\perp \Jd\qd + \mathbf{P}_{\paral} \xdd^d,
\end{align}
where $\xdd^d$ is any vector of ``desired'' acceleration whose tangential component matches the tangential
acceleration of the given trajectory, i.e.  $\mathbf{P}_\paral\xdd^d = \xdd_\paral$.

With this insight, now we show how the decomposition~\eqref{eq:decomposition of acc} is related to and generalizes~\eqref{eqn:ConstantWeightControlPointPrioritiesQP}.
This is based on the observation that \eqref{eq:decomposition of acc} is the same as the solution to the least-squared problem below:
\begin{align} \label{eqn:AbstractTaskSpacesQP}
    \textstyle \min_{\qdd} \frac{1}{2}\|\xdd^d - \xdd\|^2 \quad  \mathrm{s.t.} \ \ \xdd = \J\qdd + \Jd\qd.
\end{align}
This equivalence can be easily seen by resolving the constraint, setting the gradient
of the resulting quadratic to zero, i.e.,
\begin{align} \label{eqn:GeometricEqnsOfMotionInCoordinates}
    \J^\t\J\qdd + \J^\t\Jd\qd = \J^\t\xdd^d
\end{align}
and re-expressing the optimal solution in $\xdd$.
This relationship demonstrates that a QP very similar
in structure to \eqref{eqn:ConstantWeightControlPointPrioritiesQP}.
Both express dynamics of the forced system as $\big(\J^\t\J\big)\qdd + \J^\t\Jd\qd = \J^\t\f$
with $\f = \xdd^d$ (task space coordinates were chosen specifically
so that weights (equivalently inertia) would be $1$, so forces and accelerations
have the same units---different task coordinates would result in constant weights,
corresponding to a notion of constant mass in the ambient space).

\vspace{1mm}
\subsubsection{The Curvature Terms}

\newcommand{\Proj}{{\mathbf{P}}}

As a side note,
the above discussion
offers insight into the term $\C(\q,\qd)\qd = \J^\t\Jd\qd$.
The term $\Jd\qd$ captures
components describing both curvature of the manifold through the
ambient task space
and components describing how the specific coordinate $\q$ (tangentially) curves across
the sub-manifold $\XX$.
Explicitly, $\xdd^c \coloneqq \Jd\qd$ has units of acceleration in the ambient space
and captures how the tangent space (given by the columns of $\J$)
changes in the direction of motion.
The acceleration $\xdd^c$ decomposes as
$\xdd^c = \xdd^c_\perp + \xdd^c_\paral$
into two orthogonal
components consisting of a component perpendicular to the tangent space
$\xdd^c_\perp = \Proj_\perp\xdd^c$
and a component parallel to the tangent space
$\xdd^c_\paral = \Proj_\paral\xdd^c$.
The term $\Proj_\perp \Jd\qd = \Proj_\perp \xdd^c$ given in
\eqref{eqn:GeodesicInTaskSpace} extracts
specifically the perpendicular component $\xdd^c_\perp$. The other component
$\xdd^c_\paral$ is, therefore, in a sense irrelevant to fundamental geometric
behavior
of the underlying system, and is only required when expressing the behavior
in the specific coordinates $\q$. Indeed, when expressing the equations of
motion in $\q$, the related term manifests as
$\C(\q,\qd)\qd = \J^\t\Jd\qd = \J^\t\xdd^c_\paral$ since $\J^\t\xdd^c_\perp = \zero$,
and depends only on the parallel component $\xdd^c_\paral$.
This observation emphasizes why we designate the term
{\it fictitious forces}. Here and below
we will consider these terms to be {\it curvature} terms
as they compensate for curvature in the system coordinates.

\subsection{Non-constant Weights and Implicit Task Spaces}
\label{sec:NonconstantWeightsAndImplicitTaskSpaces}

Section~\ref{sec:AbstractTaskSpaces} above derived the geometric perspective of equations of motion, but only for mechanical systems whose inertia matrix
(equivalently Riemannian metric) can be expressed as $\M(\q)= \J(\qb)^\t\J(\qb)$ globally for
some map $\x = \psi(\q)$ with $\J(\qb) = \partial_\qb\phi(\qb)$.
Fortunately, due to a deep and fundamental theorem proved by John Nash
in 1956, called the Nash embedding theorem \cite{NashEmbeddingTheorem1956},
{\it all} Riemannian manifolds, and hence {\it all} mechanical systems can
be expressed this way, so the arguments of Section~\ref{sec:AbstractTaskSpaces}
hold without loss of generality for all mechanical systems. It is called
an embedding theorem because the map $\x = \psi(\q)$ acts to embed the manifold $\CC$
into a higher-dimensional ambient Euclidean space where we can replace implicit
geometry represented by the metric $\M(\q)$ with an explicit sub-manifold in the ambient space.

{
    \newcommand{\Jz}{\J_\zeta}
    \newcommand{\Jdz}{\Jd_\zeta}

This ambient representation
is
convenient for understanding and visualizing the
nonlinear geometry of a mechanical system. However, it is unfortunately often difficult, or
even impossible, to find a closed form expression for a task map $\x =\psi(\q)$ from a given metric $\M(\q)$. We therefore cannot
rely on our ability to operate directly in the ambient space using the QP given in~\eqref{eqn:AbstractTaskSpacesQP}.

This subsection addresses that problem
by deriving a QP expression analogous to
\eqref{eqn:AbstractTaskSpacesQP}, but using task space weights that are
general non-constant positive definite matrices (which we will see are the same as Riemannian
metrics).
We will derive this expression by considering again the ambient setting, but assuming that the unknown embedding can be decomposed in the composition of a known task space  $\x = \psi(\q)$ and then an known map from the task space to the ambient Euclidean space $\z =
\zeta(\x)$ described in the Nash's theorem.
We suppose the priority weight is given as the induced Riemannian metric $\G(\x) = \Jz(\x)^\t\Jz(\x)$ on $\x$ defined by the second map $\zeta$. We note that the final result will be expressed entirely in terms of $\G(\x)$, so
it can be used without explicit knowledge of $\zeta$.

Suppose we have a Riemannian metric (equivalently, an  inertia matrix)
$\M$ which decomposes as $\M = \J^\t\J$, where  $\J = \Jz\Jp$ is the Jacobian of the composite map $\z = \zeta\circ\phi(\q)$ which itself consists of two parts
$\x = \phi(\q)$ and $\z = \zeta(\x)$.
Let $\f_\z$ denote the task space force in the ambient Euclidean space, and define $\f_\x = \J_\zeta^\t\f_\z$
Because the intermediate task space metric is $\G = \Jz^\t\Jz$, we can derive by~\eqref{eqn:GeometricEqnsOfMotionInCoordinates}
\begin{align*}
    &\J^\t\J\qdd + \J^\t\Jd\qd = \J^\t\f_\z \\
    &\Rightarrow \ \
    \left(\Jp^\t\big(\Jz^\t\Jz\big)\Jp\right) \qdd
    + \Jp^\t\Jz^\t\frac{d}{dt}\left(\Jz\Jp\right)\qd = \J_\phi^\t\J_\zeta^\t\f_\z \\
    &\Rightarrow\ \
    \Jp^\t\G\Jp \qdd
        + \Jp^\t\Jz^\t\left(\Jdz\Jp + \Jz\Jdp\right) \qd
        = \Jp^\t\f_\x \\
    &\Rightarrow\ \
    \Jp^\t\G\Jp \qdd
        + \Jp^\t\big(\Jz^\t\Jdz\xd\big) + \Jp^\t\big(\Jz^\t\Jz\big)\Jdp \qd
        = \Jp^\t\f_\x \\
    &\Rightarrow\ \
    \Jp^\t\G\Jp \qdd
        + \Jp^\t\G\Jdp\qd = \Jp^\t\left(\f_x - \bm\xi_\G\right).
\end{align*}
where we recall $\bm\xi_\G$ is given in \eqref{eqn:DirectCurvatureCalc}.
}%
Rearranging that final expression and denoting
$\underline{\xdd}^d \coloneqq \xdd^d - \G^{-1}\bm\xi_\G$ with
$\xdd^d = \G^{-1}\f_\x$,
we can write the above equation as
\begin{align}
    \Jp^\t\G\left(\underline{\xdd}^d - \big(\Jp\qdd + \Jdp\qd\big)\right) = \zero,
\end{align}
which is the first-order optimality condition of the QP
\begin{align} \label{eqn:GeometricControlQP}
    \textstyle \min_\qdd \frac{1}{2}\|\underline{\xdd}^d - \xdd\|_\G^2, \qquad  \mathrm{s.t.}\ \ \xdd = \Jp\qdd + \Jdp\qd.
\end{align}
This QP is expressed in terms of
the task map $\x = \psi(\q)$, the task space metric $\G(\xb)$, the
task space desired accelerations $\xdd^d$, and the curvature
term $\bm\xi_\G$ derived from the task space metric $\G(\xb)$.
The QP follows a very similar pattern to the QPs described
above, but this time the priority weight matrix $\G$ is a non-constant
function of $\x$. The one modification required to reach this matching
form is to augment the desired acceleration $\xdd^d$ with the curvature term $\bm\xi_\Gb$
calculated from $\G$ using \eqref{eqn:DirectCurvatureCalc} to get the target
$\underline{\xdd}^d = \xdd^d - \G^{-1}\bm\xi_\G$.
Importantly, while we start by assuming the map $\z = \zeta(\x)$, at the end we show that we actually only need to know $\G$.

\subsection{Limitations of Geometric Control}
\label{sec:LimitationsOfGeometricControl}

Even with the tools of geometric mechanics, the final QP given in
\eqref{eqn:GeometricControlQP} can still only express task priority weights as
positive definite matrices that vary as a function of configuration (i.e.
position). Frequently, more nuanced control over those priorities is crucial.
For instance, collision avoidance tasks should activate when the control-point
is close to an obstacle and heading toward it, but they should deactivate either
when the control-point is far from the obstacle \emph{or} when it is moving away from
the obstacle, regardless of its proximity.  Importantly, reducing the desired
acceleration to zero in these cases is not enough---when these tasks deactivate,
they should drop entirely from the equation rather than voting with high weight
for zero acceleration. Enabling priorities to vary as a function of the full
robot state (configuration {\it and} velocity) is therefore paramount.

The theory of \flow and GDSs developed below generalizes
geometric mechanics to enable expressing these more nuanced priority matrices
while maintaining stability.
Additionally, since geometric control theory itself
is quite abstract, we build on results reducing the calculations to
recursive least squares similar to that given in
Section~\ref{sec:NonconstantWeightsAndImplicitTaskSpaces}
to derive a concrete tree data structure to aid in the design
of controllers within this energy shaping framework.

\section{\flow} \label{sec:RMP algebra}

\red{\flow is an efficient manifold-oriented computational graph for automatic generation of motion policies that can tackle multiple task specifications. Let $\TT_{l_i}$ denote the $i$th subtask space.
\flow is aimed for problems with a task space $\TT = \{\TT_{l_i}\}$ that is related to the configuration space $\CC$ through a tree-structured task map $\psi$, in which $\CC$ is the root node and the subtask spaces $\{\TT_{l_i}\}$ are the leaf nodes.}
Given user-specified motion policies $\{\pi_{l_i}\}$ on the subtask spaces
$\{\TT_{l_i}\}$ as RMPs, \flow is designed to \emph{consistently} combine these subtask policies into a global policy $\pi$ on $\CC$.

To realize this idea, \flow introduces 1) a data structure, called the \emph{\tree}, to describe the tree-structured task map $\psi$ and the policies, and 2) a set of operators, called the \emph{\algebra}, to propagate information across the \tree. 
At time $t$, \flow operates in two steps to compute $\pi(\q(t),\qd(t))$: it first performs a \emph{forward pass} to propagate the state from the root node (i.e., $\CC$) to the leaf nodes (i.e., $\{\TT_{l_i}\}$); then it performs a \emph{backward pass} to propagate the RMPs from the leaf nodes to the root node \red{while tracking their geometric information to achieve consistency}. 
These two steps are realized by recursive use of \algebra, exploiting shared computation paths arising from the tree structure to maximize efficiency.

\red{
In the following, we describe the details of \flow and give some useful examples of subtask motion policies.
}

\subsection{Structured Task Maps}

\begin{figure}[t]\vspace{-4mm}
	\centering
	\centering
	\subfloat[\label{fig:taskmap1}]{
		\includegraphics[trim={0 20 0 20},clip, width=0.3\columnwidth,keepaspectratio]{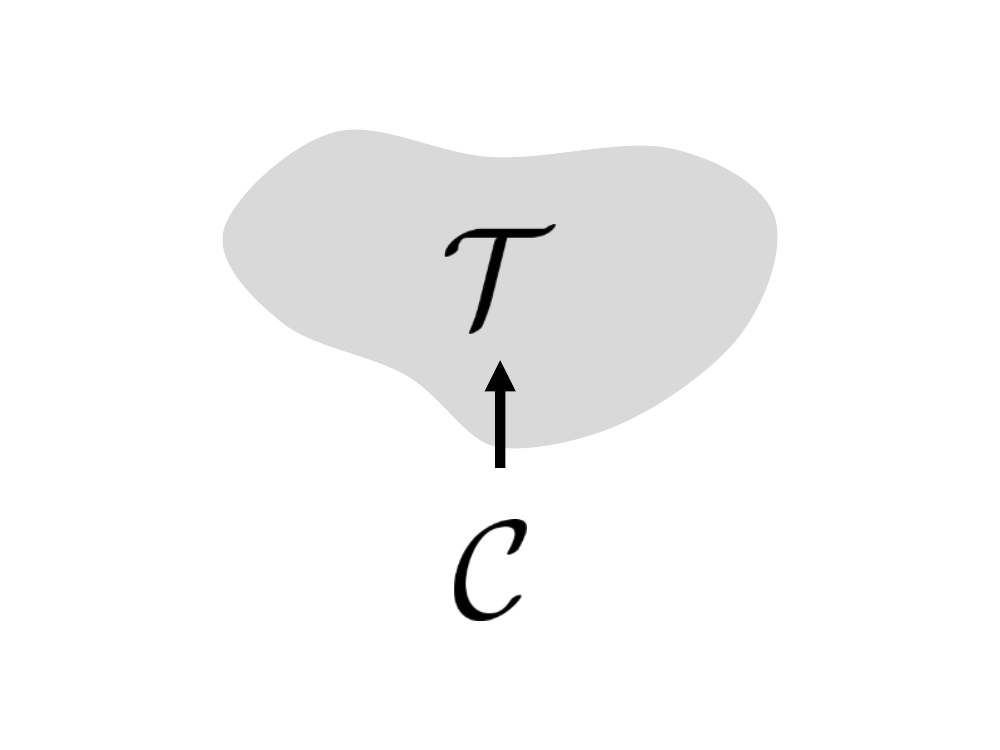}
	}
	\subfloat[\label{fig:taskmap2}]{
		\includegraphics[trim={0 20 0 20},clip, width=0.3\columnwidth,keepaspectratio]{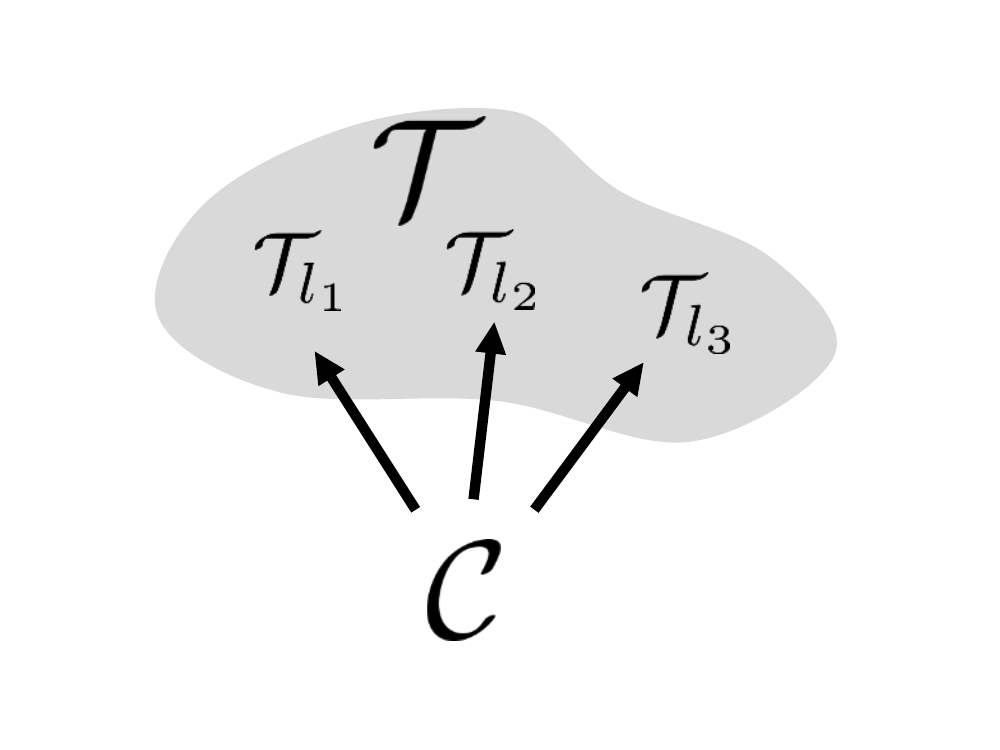}
	}
	\subfloat[\label{fig:taskmap3}]{
		\includegraphics[trim={0 20 0 20},clip, width=0.3\columnwidth,keepaspectratio]{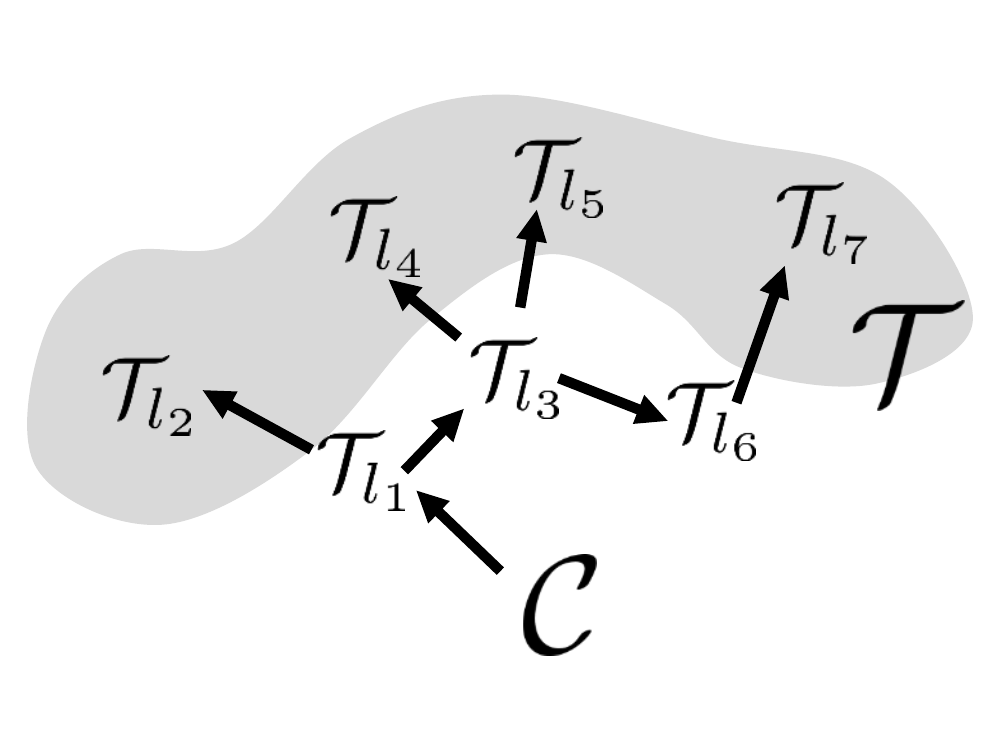}
	}
	\caption{Tree-structured task maps}
	\label{fig:taskmaps}
\end{figure}
In many applications, the task-space manifold $\TT$ is structured. In this paper, we consider the case where the task map $\psi$ can be expressed through a tree-structured composition of transformations $\{\psi_{e_i}\}$, where $\psi_{e_i}$ is the $i$th transformation. Fig.~\ref{fig:taskmaps} illustrates some common examples, where each node denotes a manifold and each edge denotes a transformation. This family trivially includes the unstructured task space $\TT$ (Fig.~\ref{fig:taskmaps}a) and the product manifold $\TT = \TT_{l_1} \times \dots \times \TT_{l_K}$ (Fig.~\ref{fig:taskmaps}b), where $K$ is the number of subtasks.
A more interesting example is the kinematic tree (Fig.~\ref{fig:taskmaps}c), where
the task map considers the relationship between the configuration space $\CC$ (the root node) and a collection of subtask spaces (the leaf nodes) that describe, e.g., the tracking and obstacle avoidance tasks along a multi-DOF robot. 

The main motivation of explicitly handling the structure in the task map $\psi$ is two-fold. First, it allows \flow to exploit computation shared across different subtask maps. Second, it allows the user to focus on designing motion policies for each subtask individually, which is easier than directly designing a global policy for the entire task space $\TT$. For example, $\TT$ may describe the problem of humanoid walking, which includes staying balanced, scheduling contacts, and avoiding collisions. Directly parameterizing a policy to satisfy all these objectives can be daunting, whereas designing a policy for each subtask is more feasible.

\subsection{Riemannian Motion Policies (RMPs)}

Knowing the structure of the task map is not sufficient for consistently combining subtask policies: we require some information about the motion policies' behaviors~\cite{ratliff2018riemannian}.
Toward this end, we adopt a more informative description of motion policies, called RMPs~\cite{ratliff2018riemannian},  for the nodes of the \tree.

An RMP describes a second-order differential equation along with its geometric information on a smooth manifold.
Specifically, let $\MM$ be an $m$-dimensional manifold with coordinate $\x \in \R^m$.
The \emph{canonical form} of an RMP on $\MM$ is a pair $(\ab, \M)^\MM$, where
$\ab : \R^m \times \R^m \to \R^m$ is a continuous motion policy and $\M: \R^m \times \R^m \to \R_+^{m\times m}$ is a differentiable map.
Borrowing terminology from mechanics (cf. \cref{sec:AbstractTaskSpaces}), we call $\ab(\x,\xd)$ the \emph{desired acceleration} and $\M(\x,\xd)$ the \emph{inertia matrix} at $(\x,\xd)$, respectively.\footnote{Here we adopt a slightly different terminology from~\cite{ratliff2018riemannian}. We note that $\M$ and $\f$ do not necessarily correspond to the inertia and force of a physical mechanical system. }
For now, we can intuitively think that $\M$ defines the directional importance of $\ab$ when it is combined with other motion policies. Later in Section~\ref{sec:analysis}, we will show that $\M$ is closely related to the concept of Riemannian metric, which describes how the space is stretched along the curve generated by $\ab$; when $\M$ depends on the state, the space becomes \emph{non-Euclidean}.

In this paper,
we additionally introduce a new RMP form, called the \emph{natural form}. Given an RMP in its canonical form $(\ab, \M)^\MM$, we define another pair as its natural form $[\f, \M]^\MM$, where $\f = \M \ab$ is the \emph{desired force} map. While the transformation between these two forms may look trivial, their distinction will be useful later when we introduce the \algebra.

\subsection{\tree}

The \tree is the core data structure used by \flow.
An \tree is a directed tree, in which each node represents an RMP and its state, and each edge corresponds to a transformation between manifolds.
The root node of the \tree describes the global policy $\pi$ on $\CC$, and the leaf nodes describe the local policies $\{\pi_{l_i}\}$ on $\{\TT_{l_i}\}$.

To illustrate, let us consider a node $u$ and its $K$ child nodes $\{v_i \}_{i=1}^K$. Suppose $u$ describes an RMP $[\f, \M ]^\MM$ and $v_i$ describes an RMP $ [\f_i, \M_i ]^{\NN_i}$, where $\NN_i = \psi_{e_i}(\MM)$ for some $\psi_{e_i}$. Then we write $u = ((\x,\xd), [\f, \M ]^\MM)$ and $v_i = ((\y_i,\yd_i), [\f_{i}, \M_i ]^{\NN_i})$, \red{where $(\x,\xd)$ is the state of $u$, and $(\y_i,\yd_i)$ is the state of $v_i$}; the edge connecting $u$ and $v_i$ points from  $u$ to $v_i$ along $\psi_{e_i}$. We will continue to use this example to illustrate how \algebra propagates the information across the \tree.

\subsection{\algebra} \label{sec:RMPAlgebra}

The \algebra of \flow consists of three operators (\pushforward, \pullback, and \resolve) to propagate information across the \tree.\footnote{Precisely they propagate the numerical values of RMPs and states at a particular time.} They form the basis of the forward and backward passes for automatic policy generation, described in the next section.
\begin{enumerate}

	\item  \pushforward is the operator to forward propagate the \emph{state} from a parent node to its child nodes. Using the previous example, given $(\x,\xd)$ from $u$, it computes $(\y_i, \yd_i) = (\psi_{e_i}(\x) , \J_i (\x) \xd )$ for each child node $v_i$, where $\J_i = \partial_\x \psi_{e_i}$ is a Jacobian matrix. The name ``pushforward" comes from the linear transformation of tangent vector $\xd$ to the image tangent vector $\yd_i$.

	\vspace{1mm}
	\item \pullback is the operator to backward propagate the natural-formed RMPs from the child nodes to the parent node. It is done by setting $[\f, \M ]^\MM$ with
	\begin{align} \label{eq:natural pullback}
		\textstyle
		\f = \sum_{i=1}^{K} \J_i^\t (\f_{i} - \M_i \dot{\J}_i \xd)
		\text{,\hspace{1mm}}
		\M = \sum_{i=1}^{K} \J_i^\t \M_i \J_i
	\end{align}
	The name ``pullback" comes from the linear transformations of the cotangent vector (1-form) $\f_{i} - \M_i \dot{\J_i} \xd$  and the inertia matrix (2-form) $\M_i$. 
	In summary, velocities can be pushfowarded along the direction of $\psi_i$, and forces and inertial matrices can be pullbacked in the opposite direction.

	To gain more intuition of \pullback, we write \pullback in the canonical form of RMPs. It can be shown that the canonical form $(\ab, \M)^{\MM}$ of the natural form $[\f,\M]^\MM$ above is the solution to a least-squares problem (cf. \cref{sec:NonconstantWeightsAndImplicitTaskSpaces}):
	\begin{align} \label{eq:least-square problem of pullback}
		\ab
		&=\textstyle
		\argmin_{\ab'} \frac{1}{2} \sum_{i=1}^{K} \norm{\J_i \ab' + \dot\J_i \xd -  \ab_{i} }_{\M_i}^2
		   \nonumber  \\
		&=\textstyle
			(\sum_{i=1}^{K} \J_i^\t \M_i \J_i )^{\dagger} (\sum_{i=1}^{K} \J_i^\t \M_i (  \ab_{i} - \dot\J_i \xd   )  )
	\end{align}
	where $ \ab_i = \M_i^{\dagger}\f_i$ and $\norm{\cdot}_{\M_i}^2 = \lr{\cdot}{\M_i \cdot} \red{=\cdot^\T \M_i \cdot}$. Because $\ydd_i =  \J_i \xdd + \dot\J_i \xd$, \pullback attempts to find an $\ab$ that can realize the desired accelerations $\{\ab_{i}\}$ while trading off approximation errors with an importance weight defined by the inertia matrix $\M_i(\y_i,\yd_i)$.
	The use of state dependent importance weights is a distinctive feature of \flow. It allows \flow to activate different RMPs according to \emph{both} configuration and velocity (see Section~\ref{sec:example RMPs} for examples).
	Finally, we note that the \pullback operator defined in this paper is slightly different from the original definition given in~\cite{ratliff2018riemannian}, which ignores the term $\dot\J_i \xd$ in~\eqref{eq:least-square problem of pullback}. While ignoring $\dot\J_i \xd$ does not necessarily destabilize the system~\cite{lo2016virtual}, its inclusion is critical to implement consistent policy behaviors. {\ifLONG We will further explore this direction later in Section~\ref{sec:analysis} and \ref{sec:experiments}.\fi}

	\vspace{1mm}
	\item \resolve is the last operator of \algebra. It maps an RMP from its natural form to its canonical form. Given $[\f, \M]^{\MM}$, it outputs $(\ab, \M)^{\MM}$ with $\ab = \M^{\dagger} \f$, where $\dagger$ denotes Moore-Penrose inverse. The use of pseudo-inverse is because in general the inertia matrix is only positive semi-definite. Therefore, we also call the natural form of $[\f, \M]^{\MM}$ the \emph{unresolved form}, as potentially it can be realized by multiple RMPs in the canonical form.
\end{enumerate}

\subsection{Algorithm: Motion Policy Generation}

Now we show how \flow uses the \tree and \algebra to generate a global policy $\pi$ on $\CC$\ifLONG{ from the user-specified subtask policies $\{\pi_{l_i}\}$ on $\{\TT_{l_i}\}$}\fi. Suppose each subtask policy is provided as an RMP \red{in the natural form}. 
First, we construct an \tree with the same structure as $\psi$, where we assign subtask RMPs as the leaf nodes and the global RMP $[\f_r, \M_r]^\CC$ as the root node.
With the \tree specified, \flow can perform automatic policy generation. At every time instance, it first performs a forward pass: it recursively calls \pushforward from the root node to the leaf nodes to update the state information in each node in the \tree.
Second, it performs a backward pass: it recursively calls \pullback from the leaf nodes to the root node to back propagate the values of the RMPs in the natural form. Finally, it calls \resolve at the root node to transform the global RMP $[\f_r, \M_r]^\CC$ into its canonical form $(\ab_r, \M_r)^\CC$ for policy execution (i.e. setting $\pi(\q,\qd) = \ab_r$).

The process of policy generation of \flow uses the tree structure for
computational efficiency. For $K$ subtasks, it has time complexity $O(K)$ in the
worst case (the case with a binary tree) as opposed to $O(K\log K)$ of a naive
implementation which does not exploit the tree structure,  \green{when running
on a serial computer.
In general, any RMP-tree representing the same leaf task spaces is equivalent,
although its computational properties may differ. For instance,
any leaf has a unique path from the root defining a series of
transforms the configuration space must go through to admit the leaf's space.
That leaf can, therefore, be separated from the tree
and linked to the root with a single edge
along which a single map defined as the composition of maps along that unique
path acts. Restructuring an RMP-tree into a star-shaped topology (i.e. the flat
structure depicted in Figure~\cref{fig:taskmap2}) with all
leaves linked directly to the root using this separation process
may be more amenable to parallel computation than the original tree.
Usually it would be more effective to separate out entire subtrees by linking
them to the root with a single edge containing the composition of transforms
along the unique path to that subtree's root. Separate computational nodes can
process these entire subtrees in parallel using the \flow algorithm, then pullback
the resulting RMPs to the root using message passing.
In general, there are many ways a tree can
be equivalently restructured to admit the same task spaces, and choosing the
right tree for a given computational architecture is a design choice.
}

Furthermore, all computations of \flow are carried out using matrix-multiplications, except for the final \resolve call, because the RMPs are expressed in the natural form in \pullback instead of the canonical form suggested originally in~\cite{ratliff2018riemannian}.
This design makes \flow numerically stable, as only one matrix inversion $\M_r^\dagger \f_r$ is performed at the root node with both $\f_r$ and $\M_r$  in the span of the same Jacobian matrix due to \pullback.

\vspace{-2mm}
\subsection{Example RMPs} \label{sec:example RMPs}

We give a quick overview of some RMPs useful in practice (see Appendix D of the technical report~\cite{cheng2018rmpflowarxiv} for further  discussion of these RMPs)
We recall from~\eqref{eq:least-square problem of pullback} that $\M$ dictates the directional importance of an RMP.

\vspace{1mm}
\subsubsection{Collision/joint limit avoidance}
Barrier-type RMPs are examples that use velocity dependent inertia matrices, which can express importance as a function of robot heading (a property that traditional mechanical principles fail to capture).
Here we demonstrate a collision avoidance policy in the 1D distance space $x = d(\q)$ to an obstacle. Let $g(x,\dot{x}) = w(x)u(\dot{x}) > 0$ for some functions $w$ and $u$. We consider a motion policy such that $ m(x,\dot{x})\ddot{x}+ \frac{1}{2}\dot{x}^2\partial_x g(x,\dot{x})
= -\partial_x \Phi(x) - b\dot{x}$ and define its inertia matrix $m(x,\dot{x}) = g(x,\dot{x}) + \frac{1}{2}\dot{x}\partial_{\dot{x}}g(x,\dot{x})$, where $\Phi$ is a potential and $b>0$ is a damper.
We choose $w(x)$ to increase as $x$ decreases
(close to the obstacle),  $u(\dot{x})$ to increase 
when $\dot{x}<0$ (moving toward the obstacle), and $u(\dot{x})$ to be constant when $\dot{x}\geq0$.
With this choice, the RMP can be turned off in \pullback when the robot heads away from the obstacle.
This motion policy is a GDS and $g$ is its metric (cf. Section~\ref{sec:GDS}); the terms $\frac{1}{2}\dot{x}\partial_{\dot{x}}g(x, \dot{x})$ and $\frac{1}{2}\dot{x}^2\partial_x g(x, \dot{x})$ are due to non-Euclidean geometry and produce natural repulsive behaviors \red{as the robot moves toward the obstacle, and little or no force when it starts to move away.}

\vspace{1mm}
\subsubsection{Target attractors}
Designing an attractor policy is relatively straightforward. For a task space with coordinate $\x$, we can consider an inertia matrix $\M(\x) \succ 0$ and a motion policy such that
$
\xdd = -\nabla\wt{\Phi} - \beta(\x)\xd - \M^{-1}\bm\xi_\M
$, where $\wt{\Phi}(\x) \approx \|\x\|$ is a smooth attractor potential, $\beta(\x)\geq0$ is a damper, and $\bm\xi_{\M}$ is a curvature term due to $\M$.
It can be shown that this differential equation is also a GDS~\cite[Appendix D]{cheng2018rmpflowarxiv}.

\vspace{1mm}
\subsubsection{Orientations}
As \flow directly works with manifold objects, orientation controllers become straightforward to design, independent of the choice of coordinate (cf. Section~\ref{sec:geometric properties}). 
For example, we can define RMPs on a robotic link's surface in any preferred coordinate (e.g. in one or two axes attached to an arbitrary point) with the above described attractor to control the orientation.

\section{Theoretical Analysis of \flow} \label{sec:analysis}

\newcommand{\sdot}[2]{\overset{\lower0.1em\hbox{$\scriptscriptstyle #2$}}{#1}}

We investigate the properties of \flow when the child-node motion policies belong to a class of differential equations, which we call \emph{structured geometric dynamical systems} (structured GDSs).
We present the following results.
\begin{enumerate}
	\item {\bf Closure}: We show that the \pullback operator retains a closure of structured GDSs. When the child-node motion policies are structured GDSs, the parent-node dynamics also belong to the same class.

	\vspace{1mm}
	\item {\bf Stability}: Using the closure property, we provide sufficient conditions for the feedback policy of \flow to be stable. In particular, we cover a class of dynamics with \emph{velocity-dependent} metrics that are new to the literature.

	\vspace{1mm}
	\item {\bf Invariance}: As its name suggests, \flow is closely related to differential geometry. We show that
	\flow is intrinsically coordinate-free. This means that a set of subtask RMPs designed for one robot can be transferred to another robot while maintaining the same task-space behaviors.
\end{enumerate}

\noindent \textit{Setup  }
\red{Below we consider the manifolds in the nodes of the \tree to be finite-dimensional and smooth. Without loss of generality, for now we assume that each manifold can be described in a single chart (i.e. using a global coordinate), so that we can write down the equations concretely using finite-dimensional variables. This restriction will be removed when we presents the coordinate-free form in Section~\ref{sec:geometric properties}. We also assume that all the maps are sufficiently smooth so the required derivatives are well defined.
\blueold{The proofs of this section can found in the Appendix B of the technical report~\cite{cheng2018rmpflowarxiv}.}}

\subsection{Geometric Dynamical Systems (GDSs)} \label{sec:GDS}

We first define a new family of dynamics, called GDSs, useful to specify RMPs on manifolds. \red{(Structured GDSs will be introduced shortly in the next section.) At a high-level, a GDS can be thought as a virtual mechanical system defined on a manifold with an inertia that depends on \emph{both} configuration and velocity.
Formally, let us consider an $m$-dimensional manifold $\MM$ with chart $(\MM, \x)$ (i.e. a coordinate system on $\MM$). Let $\Gb: \R^m \times \R^m \to \R^{m\times m}_{+}$, $\B: \R^m \times \R^m \to \R^{m\times m}_{+}$, and $\Phi: \R^m \to \R$ be sufficiently smooth functions.
We say a dynamical system on $\MM$ is a \emph{GDS} $(\MM, \Gb, \B, \Phi)$, if it satisfies the differential equation}
\begin{align} \label{eq:GDS}
	&\left(\Gb(\x,\xd) + \bm\Xi_{\G}(\x,\xd)\right) \xdd
	+ \bm\xi_{\G}(\x,\xd) \nonumber \\
	& = - \nabla_\x \Phi(\x) - \Bb(\x,\xd)\xd,
\end{align}
where we define
\begin{align*}
	\bm\Xi_{\G}(\x,\xd) &\coloneqq \textstyle \frac{1}{2} \sum_{i=1}^m  \dot{x}_i \partial_{\xd} \gb_{i}(\x,\xd)\\
	\bm\xi_{\G}(\x,\xd) &\coloneqq  \sdot{\Gb}{\x}(\x,\xd) \xd - \frac{1}{2} \nabla_\x (\xd^\t \Gb(\x,\xd) \xd)\\
	\sdot{\Gb}{\xb}(\x,\xd) &\coloneqq  [\partial_{\x}  \gb_{i} (\x,\xd) \xd]_{i=1}^m
\end{align*}
and $\gb_{i}(\x,\xd)$ is the $i$th column of $\Gb(\x,\xd)$.
We refer to $\Gb(\x,\xd)$ as the \emph{metric} matrix,
$\B(\x,\xd)$ as the \emph{damping} matrix, and $\Phi(\x)$ as the \emph{potential} function which is lower-bounded.
\red{In addition, we call the term in front of $\xdd$ in~\eqref{eq:GDS},
\begin{align} \label{eq:inertia matrix}
\Mb(\x, \xd) \coloneqq \Gb(\x,\xd) + \bm\Xi_{\G}(\x,\xd),
\end{align}
the \emph{inertia} matrix of GDS $(\MM, \Gb, \B, \Phi)$, which can be asymmetric. When $\M(\x,\xd) $ is nonsingular, we say the GDS is \emph{non-degenerate}. We will assume~\eqref{eq:GDS} is non-degenerate for now so that it uniquely defines a differential equation. The discussion on the general case is postponed to Appendix~\ref{app:degnerate GDSs}.}

In GDSs, $\Gb(\x,\xd)$ induces a metric of $\xd$, 
measuring its length as $\frac{1}{2} \xd^\t \Gb(\x,\xd) \xd$.
When $\Gb(\x,\xd)$ depends on $\x$ and $\xd$, it also induces non-trivial \emph{curvature} terms $\bm\Xi(\x,\xd)$ and $\bm\xi(\x,\xd)$.
\red{
In a particular case when $\G(\x,\xd) = \G(\x)$ (i.e. it depends on configuration only), the GDSs reduce to the widely studied \emph{simple mechanical systems} (SMSs) in geometric mechanics~\cite{bullo2004geometric} (see also \eqref{eqn:LagrangianMechanics})
\begin{align} \label{eq:SMS}
\Mb(\x) \xdd
+  \C(\x,\xd)\xd + \nabla_\x \Phi(\x) = -\Bb(\x,\xd)\xd
\end{align}
where the Coriolis force $\Cb(\x,\xd) \xd$ can be shown equal to
$\bm\xi_{\Gb}(\x,\xd)$. In this special case, we have $\Mb(\x) = \Gb(\x)$, i.e.,
the inertia matrix is the same as the metric matrix (this is exactly the finding
in geometric mechanics discussed in \blueold{\cref{sec:ambient geometry}}).
We will revisit the connection between GDSs and SMSs again in \cref{app:GDSs} (and show why $\Cb(\x,\xd) \xd=  \bm\xi_{\Gb}(\x,\xd)$)
 after the analysis of geometric properties of GDSs. For now, we can think of GDSs as generalization of SMSs to have inertia $\G(\x,\xd)$ and metric $\Mb(\x,\xd)$ that also change with velocity!
This velocity-dependent extension is important and non-trivial.}
As discussed in earlier Section~\ref{sec:example RMPs}, it generalizes the dynamics of classical rigid-body systems, allowing the space to morph according to the velocity direction.

Finally, as its name hints, GDSs possess geometric properties. Particularly, when $\Gb(\x,\xd)$ is invertible, the left-hand side of~\eqref{eq:GDS} is related to a quantity $\ab_{\G} = \xdd + \Gb(\x,\xd)^{-1} ( \bm\Xi_{\G}(\x,\xd) \xdd
+ \bm\xi_{\G}(\x,\xd) ) $, known as the \emph{geometric acceleration} (cf. Section~\ref{sec:geometric properties}).
\red{(Therefore these terms must not be separated; e.g. $\Gb(\x,\xd)  \xdd $ alone may not possess particular meaning.)}
In other words, we can think of~\eqref{eq:GDS} as setting $\ab_{\G}$ along the negative natural gradient $-\G(\x,\xd)^{-1}\nabla_\x \Phi(\x)$ while imposing damping $-\G(\x,\xd)^{-1}\Bb(\x,\xd)\xd$.

\vspace{-2mm}
\subsection{Closure} \label{sec:consistency}

\red{Earlier, we argued vaguely that by tracking the geometry in \pullback in~\eqref{eq:natural pullback} through propagating RMPs instead of just motion policies}, the task properties can be preserved. Here, we formalize this consistency concept of \flow as a closure of differential equations, named structured GDSs. Structured GDSs augment GDSs with information on how the metric matrix $\Gb$ factorizes. \red{We call such information a \emph{structure}.}
Specifically, suppose $\Gb$ has a structure $\SS$ that factorizes $\G(\x,\xd) = \J(\x)^\t \Hb(\y,\yd) \J(\x)$, where 
$\y: \x \mapsto \y(\x) \in \R^n$  and $\Hb: \R^n \times \R^n \to \R^{n\times n}_+$, and $\J(\x) = \partial_\x \y$ is the Jacobian.
We say a dynamical system on $\MM$ is a \emph{structured GDS}  $(\MM, \G, \B, \Phi)_{\SS}$ if it satisfies the differential equation
\begin{align} \label{eq:structured GDS}
	&\left(\Gb(\x,\xd) + \bm\Xi_{\G}(\x,\xd)\right) \xdd
	+ \bm\eta_{\G;\SS}(\x,\xd) \nonumber \\
	& = - \nabla_\x \Phi(\x) - \Bb(\x,\xd)\xd
\end{align}
where
$
\bm\eta_{\G;\SS}(\x,\xd)
\coloneqq  \J(\x)^\t ( \bm\xi_{\Hb}(\y,\yd) +
(\Hb(\y,\dot\y) + \bm\Xi_{\Hb}(\y,\yd) )
\dot\J(\x,\xd) \xd  )
$.
\red{
If we compare GDSs in~\eqref{eq:GDS} and structured GDSs in~\eqref{eq:structured GDS}, the difference is that $\bm\xi_{\G}(\x,\xd)$ is now replaced by a different curvature term $\bm\eta_{\G;\SS}(\x,\xd)$ that is defined by \emph{both} the metric and factorization.
In fact, GDSs are structured GDSs with a \emph{trivial} structure (i.e. $\y =\x$).} Also, one can easily show that structured GDSs reduce to GDSs (i.e. the structure offers no extra information) if $\G(\x,\xd)=\G(\x)$, or if $n,m=1$.
Given two structures, we say $\SS_a$ \emph{preserves} $\SS_b$ if $\SS_a$ has the factorization (of $\Hb$) made by $\SS_b$.
In Section~\ref{sec:geometric properties}, we will show that structured GDSs are related to a geometric object, pullback connection, which turns out to be the coordinate-free version of \pullback.

\red{
Below we show the closure property: when the children of a parent node are structured GDSs, the parent node defined by \pullback is also a structured GDS with respect to the pullbacked structured metric matrix, damping matrix, and potentials.}
Without loss of generality,
we consider again a parent node on $\MM$ with $K$ child nodes on $\{\NN_i\}_{i=1}^K$. \ifLONG{ We omit the functions' input arguments for short, but we }\else{ We }\fi note that $\G_i$ and $\B_i$ can be functions of both $\y_i$ and $\yd_i$.
\begin{restatable}{theorem}{theoremConsistency} \label{th:consistency}
	Let the $i$th child node follow $(\NN_i, \G_i, \B_i, \Phi_i)_{\SS_i}$ and have coordinate $\y_i$.
	Let $\fb_i = -\bm\eta_{\G_i;\SS_i} - \nabla_{\y_i} \Phi_i - \B_{i}\yd_i $ and $\M_i =\G_i + \bm\Xi_{\G_i}$.
	If $[\fb,\Mb]^\MM$ of the parent node is given by \emph{\pullback} with $\{[\fb_i, \M_i]^{\NN_i} \}_{i=1}^K$ and $\Mb$ is non-singular, the parent node
	follows the \emph{pullback structured GDS}
	$(\MM, \G, \B, \Phi)_\SS$,
	where $\Gb = \sum_{i=1}^{K}\J_i^\t\G_i\J_i$, $\B = \sum_{i=1}^{K}\J_i^\t \B_i \J_i$, $\Phi =  \sum_{i=1}^{K}\Phi_i \circ \y_i $, $\SS$ preserves $\SS_i$, and $\J_i = \partial_\x \y_i$.
	\red{In other words, the parent node is the RMP $(\ab, \M)^{\MM}$ where $\M = \sum_{i=1}^{K}\J_i^\t (\G_i+\bm\Xi_{\G_i} )\J_i$ and
	\begin{align*}\textstyle
	\ab 
	= \left(\Gb + \bm\Xi_{\G}\right)^\dagger \left(
	- \bm\eta_{\G;\SS}  - \nabla_\x \Phi - \Bb\xd  \right)
	\end{align*}}%
Particularly, if every $\G_i$ is velocity-free and the child nodes are GDSs, the parent node follows $(\MM, \G, \B, \Phi)$.
\end{restatable}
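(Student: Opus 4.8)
The plan is to show that the single second-order equation produced at the parent node, namely $\M\xdd = \f$ with $[\f,\M]^\MM$ given by \pullback{} (and $\ab = \M^{-1}\f$ by \resolve{}, using non-singularity of $\M$), can be rearranged term-by-term into the structured GDS equation \eqref{eq:structured GDS} for $(\MM,\G,\B,\Phi)_\SS$. The natural first move is to feed the \pushforward{} relations $\yd_i = \J_i\xd$ and $\ydd_i = \J_i\xdd + \dot\J_i\xd$ into $\M = \sum_i \J_i^\t\M_i\J_i$ and $\f = \sum_i \J_i^\t(\f_i - \M_i\dot\J_i\xd)$. Moving the $\M_i\dot\J_i\xd$ terms back to the left collapses the resolved equation into the compact form
\[
\textstyle\sum_{i=1}^K \J_i^\t \M_i \ydd_i = \sum_{i=1}^K \J_i^\t \f_i .
\]
Substituting $\M_i = \G_i + \bm\Xi_{\G_i}$ and $\f_i = -\bm\eta_{\G_i;\SS_i} - \nabla_{\y_i}\Phi_i - \B_i\yd_i$ then reduces the whole claim to verifying four transformation identities, one per term: inertia, curvature, potential, and damping.

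Two of these are immediate. The potential pulls back by the chain rule, $\sum_i \J_i^\t \nabla_{\y_i}\Phi_i = \nabla_\x(\sum_i \Phi_i\circ\y_i) = \nabla_\x\Phi$, and the damping pulls back using $\yd_i = \J_i\xd$, giving $\sum_i\J_i^\t\B_i\yd_i = (\sum_i\J_i^\t\B_i\J_i)\xd = \B\xd$. The first geometric identity is the inertia lemma $\bm\Xi_\G = \sum_i\J_i^\t\bm\Xi_{\G_i}\J_i$, which combined with $\G = \sum_i\J_i^\t\G_i\J_i$ yields $\M = \G + \bm\Xi_\G$ and matches the inertia claim. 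I would isolate it as the single-factorization fact $\bm\Xi_{\J^\t\Hb\J} = \J^\t\bm\Xi_\Hb\J$ whenever $\J(\x)$ is configuration-only and $\Hb$ depends on velocity only through $\yd = \J\xd$; the sum over children follows by stacking ($\J = (\J_i)_{i=1}^K$, $\Hb = \bdiag(\G_1,\dots,\G_K)$). The proof is a direct computation: since $\J_i = \partial_\x\y_i$ carries no velocity dependence, $\partial_{\xd}$ acting on the columns of $\G$ reaches $\G_i$ only through $\yd_i$, so $\partial_{\xd}(\cdot) = \partial_{\yd_i}(\cdot)\,\J_i$ folds the inner $\bm\Xi_{\G_i}$ back through the Jacobians with the correct $\dot x$-weighting.

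The crux, and the step I expect to be the main obstacle, is the curvature identity $\bm\eta_{\G;\SS} = \sum_i\J_i^\t\big(\bm\eta_{\G_i;\SS_i} + \M_i\dot\J_i\xd\big)$, which is exactly what remains after the inertia term is split off as $(\G+\bm\Xi_\G)\xdd$ and the $\M_i\dot\J_i\xd$ pieces are reabsorbed. The key is to exhibit the parent structure $\SS$ explicitly: if each child's structure $\SS_i$ factorizes $\G_i = \J_{(i)}^\t\Hb_i\J_{(i)}$, then composing maps gives $\G = \sum_i (\J_{(i)}\J_i)^\t\Hb_i(\J_{(i)}\J_i)$, i.e. the single factorization $\G = \J^\t\Hb\J$ with $\J$ the stacked composed Jacobians $(\J_{(i)}\J_i)_{i}$ and $\Hb = \bdiag(\Hb_1,\dots,\Hb_K)$ the innermost metric; this is precisely the $\SS$ that ``preserves'' every $\SS_i$. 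Because the one-step definition of $\bm\eta$ uses $\bm\xi_\Hb$ and $\Hb$ is now the leaf-level (unstructured) metric, I can then verify a \emph{composition law}: $\bm\eta_{\J_i^\t\G_i\J_i;\,\SS} = \J_i^\t\big(\bm\eta_{\G_i;\SS_i} + \M_i\dot\J_i\xd\big)$ for configuration-only $\J_i$. The delicate glue is the product rule $\frac{d}{dt}(\J_{(i)}\J_i)\,\xd = \dot\J_{(i)}\yd_i + \J_{(i)}\dot\J_i\xd$ (using $\yd_i = \J_i\xd$), which merges the child's curvature with the extra $\M_i\dot\J_i\xd$ term into the parent's single curvature term; I would be most careful here to confirm that the inner contribution is the \emph{structured} $\bm\eta_{\G_i;\SS_i}$ rather than the bare $\bm\xi_{\G_i}$, invoking the child-level inertia lemma $\bm\Xi_{\G_i} = \J_{(i)}^\t\bm\Xi_{\Hb_i}\J_{(i)}$ along the way.

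Assembling the four identities shows the resolved \pullback{} equation is identical to \eqref{eq:structured GDS} for $(\MM,\G,\B,\Phi)_\SS$, proving closure, and the stated RMP form $(\ab,\M)^\MM$ with $\ab = (\G+\bm\Xi_\G)^\dagger(-\bm\eta_{\G;\SS} - \nabla_\x\Phi - \B\xd)$ then follows from $\M = \G+\bm\Xi_\G$ being non-singular. Finally, the special case is a corollary: if every $\G_i$ is velocity-free then $\bm\Xi_{\G_i} = 0$, hence $\M_i = \G_i$ and $\bm\Xi_\G = 0$; moreover structured GDSs reduce to GDSs when the metric depends on configuration only, so $\bm\eta_{\G;\SS}$ collapses to the ordinary curvature $\bm\xi_\G$ and the parent satisfies the plain GDS $(\MM,\G,\B,\Phi)$.
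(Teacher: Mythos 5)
Your proposal is correct and follows essentially the same route as the paper's proof (Appendix B of the technical report): resolve the \pullback{} equation into $\sum_i \J_i^\t \M_i \ydd_i = \sum_i \J_i^\t \f_i$, dispatch the potential and damping terms by the chain rule and $\yd_i = \J_i\xd$, and reduce the geometric content to the two transformation identities $\bm\Xi_{\G} = \sum_i \J_i^\t \bm\Xi_{\G_i}\J_i$ (valid because the $\J_i$ are configuration-only, so $\partial_{\xd} = \partial_{\yd_i}\J_i$) and $\bm\eta_{\G;\SS} = \sum_i \J_i^\t\bigl(\bm\eta_{\G_i;\SS_i} + \M_i\dot\J_i\xd\bigr)$, established exactly as you do by exhibiting the composed structure $\SS$ with stacked Jacobians $(\J_{(i)}\J_i)_i$ and block-diagonal innermost metric and applying the product rule $\frac{d}{dt}(\J_{(i)}\J_i)\xd = \dot\J_{(i)}\yd_i + \J_{(i)}\dot\J_i\xd$. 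You also correctly flag the one genuinely delicate point---that the inner contribution must reassemble into the structured curvature $\bm\eta_{\G_i;\SS_i}$ rather than the bare $\bm\xi_{\G_i}$, which is precisely why closure holds for structured GDSs---and your treatment of the velocity-free special case matches the paper's.
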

\noindent Theorem~\ref{th:consistency} shows structured GDSs are closed under \pullback.
It means that the differential equation of a structured GDS with a tree-structured task map can be computed by recursively applying \pullback from the leaves to the root, \red{because in each recursive step, the form of structured GDS is preserved by \pullback.
Particularly, when $\G$ is velocity-free, one can show that \pullback also preserves GDSs. We summarize these properties below.}
\begin{corollary}\label{cr:consistency}
	If all leaf nodes follow GDSs and $\Mb_r$ at the root node is nonsingular, then the root node follows $(\CC, \G, \B, \Phi)_{\SS}$ as recursively defined by Theorem~\ref{th:consistency}.
\end{corollary}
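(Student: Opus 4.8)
The plan is to obtain the statement as a direct induction over the finite \tree, moving from the leaves toward the root, with Theorem~\ref{th:consistency} serving as the only nontrivial ingredient (the single inductive step). I would induct on the height of a node---the length of its longest downward path to a leaf. In the base case a leaf node follows a GDS by hypothesis; since a GDS is exactly a structured GDS equipped with the trivial structure $\y=\x$ (as noted right after \eqref{eq:structured GDS}), every leaf already follows a structured GDS, and its natural form $[\fb_i,\Mb_i]$ is read off directly from \eqref{eq:GDS} via $\Mb_i = \Gb_i + \bm\Xi_{\Gb_i}$ and $\fb_i = -\bm\xi_{\Gb_i} - \nabla\Phi_i - \Bb_i\yd_i$. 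For the inductive step I would take an internal node $u$ on $\MM$ whose $K$ children on $\{\NN_i\}$ all follow structured GDSs $(\NN_i,\Gb_i,\Bb_i,\Phi_i)_{\SS_i}$ by the inductive hypothesis; assembling their natural forms and feeding them into \pullback, Theorem~\ref{th:consistency} states verbatim that $u$ follows the pullback structured GDS $(\MM,\Gb,\Bb,\Phi)_{\SS}$ with $\Gb=\sum_i\J_i^\t\Gb_i\J_i$, $\Bb=\sum_i\J_i^\t\Bb_i\J_i$, $\Phi=\sum_i\Phi_i\circ\y_i$, and a structure $\SS$ preserving every $\SS_i$. Climbing to the root then yields $(\CC,\Gb,\Bb,\Phi)_{\SS}$ as claimed.

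One point to handle cleanly is that the structure must accumulate correctly over many levels. Because the structure $\SS$ produced at $u$ preserves each child structure $\SS_i$, and each $\SS_i$ in turn preserves the structures of its own children, I would verify transitivity of the \emph{preserves} relation straight from its definition---that one structure contains the factorization recorded by the other---so that by induction the root's structure encodes the factorization contributed by every descendant node. Finiteness of the tree guarantees the induction is well founded and terminates at the leaves.

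The main obstacle is reconciling the non-singularity hypotheses: Theorem~\ref{th:consistency} assumes the \emph{parent} inertia $\Mb$ is non-singular in order to state the resolved canonical form $\ab=(\Gb+\bm\Xi_{\Gb})^\dagger(\cdots)$, whereas the corollary assumes only that $\Mb_r$ at the root is non-singular. The resolution I would emphasize is that the recursion runs entirely on the \emph{natural} (unresolved) form: the quantities $\Gb$, $\Bb$, $\Phi$, and the curvature term $\bm\eta_{\Gb;\SS}$ propagated by \pullback in \eqref{eq:natural pullback} are built from matrix products and derivatives alone, with no inversion. Hence the natural-form identity $(\Gb+\bm\Xi_{\Gb})\xdd + \bm\eta_{\Gb;\SS} = -\nabla_\x\Phi - \Bb\xd$ holds at every internal node regardless of whether that node's inertia is singular, and only the final \resolve at the root---justified by the assumed non-singularity of $\Mb_r$---converts the root's natural form into the canonical GDS $\qdd=\ab_r$. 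Accordingly I would phrase the inductive invariant as preservation of the structured-GDS \emph{form} (the propagated natural-form data together with its factorization), invoking the resolved conclusion of Theorem~\ref{th:consistency} only once, at the root. The velocity-free specialization (GDS leaves with $\Gb_i=\Gb_i(\y_i)$ yielding an ordinary GDS at the root) then follows by the same induction, since in that case the structure carries no extra information and the pullback form collapses back to \eqref{eq:GDS}.
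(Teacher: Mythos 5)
Your proposal is correct and follows essentially the same route as the paper: the corollary is obtained by recursively applying Theorem~\ref{th:consistency} from the leaves to the root, with a GDS leaf viewed as a structured GDS under the trivial structure. Your careful point about intermediate nodes with possibly singular inertia---running the recursion purely on the natural (unresolved) form and invoking nonsingularity only for the final \resolve at the root---is exactly how the paper handles it as well (cf.\ the discussion of degenerate GDSs in Appendix~\ref{app:degnerate GDSs}, which notes that only $\Mb_r$ at the root need be nonsingular).
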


\subsection{Stability} \label{sec:stability}

By the closure property above, we analyze the stability of \flow  when the leaf nodes are (structured) GDSs. For compactness, we will abuse the notation to write $\Mb = \Mb_r$. Suppose $\Mb$ is nonsingular and let $(\CC, \G, \B, \Phi)_{\SS}$ be the resultant structured GDS at the root node.
We consider a Lyapunov function candidate
\begin{align} \label{eq:Lypunov candidate}
\textstyle
V(\q, \qd) = \frac{1}{2} \qd^\t \G(\q,\qd) \qd + \Phi(\q)
\end{align}
and derive its rate using properties of structured GDSs.
\begin{restatable}{proposition}{propositionLyapunovTimeDerivative}
	\label{pr:Lyapunov time derivative}
	For $(\CC, \G, \B, \Phi)_{\SS}$, it holds that $\dot\V(\q,\qd) = - \qd^\t \B(\q,\qd) \qd$.
\end{restatable}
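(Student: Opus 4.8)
The plan is to differentiate $V$ along trajectories of the root structured GDS $(\CC,\G,\B,\Phi)_\SS$ and show that every contribution except the damping term cancels. Splitting $V$ into its kinetic and potential parts, the potential part is immediate: by the chain rule $\frac{d}{dt}\Phi(\q) = \qd^\t\nabla_\q\Phi(\q)$. The entire difficulty therefore lies in differentiating the kinetic energy $\tfrac12\qd^\t\G(\q,\qd)\qd$, whose metric depends on \emph{both} $\q$ and $\qd$, and in matching the outcome against the structured curvature term $\bm\eta_{\G;\SS}$ appearing in~\eqref{eq:structured GDS}.

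The heart of the argument is the kinetic-energy identity
\begin{align*}
\frac{d}{dt}\left(\tfrac12\qd^\t\G\qd\right) = \qd^\t\left[(\G+\bm\Xi_\G)\qdd + \bm\eta_{\G;\SS}\right].
\end{align*}
I would first establish its unstructured analogue, with $\bm\xi_\G$ in place of $\bm\eta_{\G;\SS}$, valid for a plain GDS (a structured GDS with trivial structure $\y=\x$). Using symmetry of $\G$ one has $\frac{d}{dt}(\tfrac12\qd^\t\G\qd) = \qd^\t\G\qdd + \tfrac12\qd^\t\dot\G\qd$, and the total derivative $\dot\G$ splits into a configuration part $\sum_j\partial_{q_j}\G\,\dot q_j$ and a velocity part $\sum_j\partial_{\dot q_j}\G\,\ddot q_j$. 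A direct index computation using the definitions of $\bm\Xi_\G$ and $\bm\xi_\G$ then shows that $\tfrac12\qd^\t$ applied to the velocity part equals $\qd^\t\bm\Xi_\G\qdd$, while $\tfrac12\qd^\t$ applied to the configuration part equals $\qd^\t\bm\xi_\G$, each side collapsing to a common cubic form in $\qd$.

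For the structured case I would avoid repeating the coordinate computation and instead exploit the factorization $\G=\J^\t\Hb\J$ with $\yd=\J\qd$. The kinetic energy is invariant under this factorization, $\tfrac12\qd^\t\G\qd = \tfrac12\yd^\t\Hb\yd$, so I can apply the already-established plain-GDS identity to $\Hb$ in the child coordinate $\y$, differentiate, and push the result back through $\ydd=\J\qdd+\dot\J\qd$. Collecting terms, the coefficient of $\qdd$ becomes $\J^\t(\Hb+\bm\Xi_\Hb)\J$, which equals $\G+\bm\Xi_\G$ by the relation $\bm\Xi_\G=\J^\t\bm\Xi_\Hb\J$ (a consequence of Theorem~\ref{th:consistency}), while the remaining terms assemble exactly into $\J^\t(\bm\xi_\Hb+(\Hb+\bm\Xi_\Hb)\dot\J\qd)=\bm\eta_{\G;\SS}$. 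This delivers the displayed identity.

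Finally, substituting the structured GDS equation~\eqref{eq:structured GDS}, namely $(\G+\bm\Xi_\G)\qdd+\bm\eta_{\G;\SS}=-\nabla_\q\Phi-\B\qd$, into the identity gives $\frac{d}{dt}(\tfrac12\qd^\t\G\qd) = -\qd^\t\nabla_\q\Phi - \qd^\t\B\qd$; adding the potential term $\qd^\t\nabla_\q\Phi$ cancels the gradient contribution and leaves $\dot V = -\qd^\t\B\qd$. I expect the main obstacle to be the bookkeeping in the kinetic-energy identity---specifically, tracking the $\qdd$-terms generated by differentiating the velocity-dependent metric and verifying that they recombine into $\bm\Xi_\G\qdd$ rather than producing spurious curvature. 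The factorization route keeps this manageable by reducing the structured computation to the plain-GDS identity together with the closure relation $\bm\Xi_\G=\J^\t\bm\Xi_\Hb\J$.
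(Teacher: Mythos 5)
Your proposal is correct and follows essentially the same route as the paper's proof (given in Appendix B of the technical report \cite{cheng2018rmpflowarxiv}): differentiate $V$ along trajectories, establish the kinetic-energy rate identity $\frac{d}{dt}\bigl(\tfrac12\qd^\t\G\qd\bigr) = \qd^\t\bigl[(\G+\bm\Xi_\G)\qdd + \bm\eta_{\G;\SS}\bigr]$ by first verifying the unstructured case in coordinates and then lifting it through the factorization $\G=\J^\t\Hb\J$ using $\tfrac12\qd^\t\G\qd=\tfrac12\yd^\t\Hb\yd$ and $\bm\Xi_\G=\J^\t\bm\Xi_\Hb\J$, and finally substitute the structured GDS equation so the potential gradients cancel. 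Your appeal to Theorem~\ref{th:consistency} for $\bm\Xi_\G=\J^\t\bm\Xi_\Hb\J$ is not circular (the closure theorem is proved independently of this proposition), and the relation also admits a direct two-line index verification.
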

\noindent Proposition~\ref{pr:Lyapunov time derivative} directly implies the stability of structured GDSs by invoking LaSalle's invariance principle~\cite{khalil1996noninear}. Here we summarize the result without proof.
\begin{corollary} \label{cr:stability}
	For $(\CC, \G, \B, \Phi)_{\SS}$, if $\G(\q,\qd), \B(\q,\qd) \succ 0 $,  the system converges to a forward invariant set $\CC_\infty \coloneqq \{(\q,\qd) : \nabla_\q \Phi(\q) = 0, \qd = 0 \}$.
\end{corollary}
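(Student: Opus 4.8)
The plan is to invoke LaSalle's invariance principle with the Lyapunov candidate $V$ of \eqref{eq:Lypunov candidate}, feeding in Proposition~\ref{pr:Lyapunov time derivative} as the dissipation estimate, and then to identify the largest invariant set on which the dissipation vanishes with $\CC_\infty$. First I would check that $V$ is an admissible LaSalle function: since $\G(\q,\qd) \succ 0$ the kinetic term $\frac12\qd^\t\G\qd$ is nonnegative, and $\Phi$ is lower-bounded by assumption, so $V$ is bounded below; Proposition~\ref{pr:Lyapunov time derivative} gives $\dot V = -\qd^\t\B\qd \le 0$ because $\B \succ 0$. Thus $V$ is nonincreasing, each forward trajectory stays in the sublevel set $\{V \le V(\q(0),\qd(0))\}$, and --- assuming this sublevel set is compact (which holds whenever $V$ is radially unbounded, and which I would flag as the precompactness hypothesis of LaSalle) --- the trajectory is precompact, so LaSalle applies.

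Next I would pin down where the dissipation vanishes. Since $\B \succ 0$, the form $\qd^\t\B\qd$ is zero iff $\qd = 0$, so $\dot V = 0 \iff \qd = 0$. LaSalle then guarantees convergence to the largest invariant set $\II$ contained in $E \coloneqq \{(\q,\qd): \qd = 0\}$, and it remains to show $\II = \CC_\infty$.

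The crux is this characterization. For a trajectory confined to $E$ we have $\qd(t) \equiv 0$, hence $\qdd(t) \equiv 0$, on its interval of invariance, so I would substitute $\qd = 0$ into the structured-GDS equation \eqref{eq:structured GDS}. Every velocity-dependent term vanishes there: the curvature $\bm\Xi_\G$ carries an explicit factor $\dot q_i$; the structured curvature $\bm\eta_{\G;\SS}$ is assembled from $\bm\xi_\Hb$, $\bm\Xi_\Hb$ and $\dot\J\,\qd$, each of which is proportional to $\yd = \J\qd$ or to $\qd$; and $\B\qd$ is trivially zero. The inertia $\M = \G + \bm\Xi_\G$ therefore collapses to $\G(\q,0) \succ 0$, and \eqref{eq:structured GDS} reduces to $\G(\q,0)\,\qdd = -\nabla_\q\Phi(\q)$. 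Imposing invariance ($\qdd = 0$) and using nonsingularity of $\G$ forces $\nabla_\q\Phi(\q) = 0$, giving $\II \subseteq \CC_\infty$. Conversely, at any point of $\CC_\infty$ the reduced equation yields $\qdd = 0$ at $\qd = 0$, so every such point is an equilibrium; hence $\CC_\infty$ is invariant, $\CC_\infty \subseteq \II$, and therefore $\II = \CC_\infty$. In particular $\CC_\infty$ is forward invariant, and LaSalle delivers convergence to it.

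I expect the main obstacle to be exactly this identification $\II = \CC_\infty$ --- concretely, the bookkeeping that every velocity-dependent contribution of the \emph{structured} curvature $\bm\eta_{\G;\SS}$ (not merely the $\bm\xi_\G$ of an unstructured GDS) vanishes at $\qd = 0$, so that invariance genuinely pins $\nabla_\q\Phi = 0$ rather than some weaker condition. The secondary technical point is the precompactness needed to legitimately apply LaSalle: absent compact sublevel sets one must either add a properness condition on $V$ or restrict the conclusion to bounded trajectories.
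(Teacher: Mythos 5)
Your proposal is correct and takes exactly the paper's route: the paper states Corollary~\ref{cr:stability} without proof, remarking only that Proposition~\ref{pr:Lyapunov time derivative} together with LaSalle's invariance principle yields the result, and your argument is the standard completion of that sketch --- in particular your bookkeeping that $\bm\Xi_{\G}$ and $\bm\eta_{\G;\SS}$ vanish at $\qd = 0$ (so the largest invariant set in $\{\dot{V} = 0\}$ is precisely $\CC_\infty$) is the right identification. Your caveat about precompactness of sublevel sets (or radial unboundedness of $V$) is a legitimate technical hypothesis that the paper leaves implicit when invoking LaSalle.
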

To show the stability of \flow, we need to further check when the assumptions in Corollary~\ref{cr:stability} hold.
The condition  $\B(\q,\qd) \succ 0 $ is easy to satisfy: by Theorem~\ref{th:consistency},
\red{$\B(\q,\qd)$ has the form $\sum_{i=1}^{K} \J_i(\q)^\t \B_i(\x_i,\xd_i) \J_i(\q)$.
\ifLONG{, where $\x_i$ is the coordinate of the $i$th child node of the root node. }\fi Therefore, it automatically satisfies }
$\B(\q,\qd) \succeq 0$; to strictly ensure definiteness, we can copy $\CC$ into an additional child node with a (small) positive-definite damping matrix. The condition on $\Gb(\q,\qd) \succ 0 $ can be satisfied 
based on a similar argument about $\B(\q,\qd)$.
In addition, we need to verify the assumption that $\M$ is nonsingular. Here we provide a sufficient condition. When satisfied, it implies the global stability of \flow in the sense of Corollary~\ref{cr:stability}.

\begin{restatable}{theorem}{theoremVelocityMetric}
	\label{th:condition on velocity metric}
	Suppose every leaf node is a GDS with a metric matrix in the form
	$\Rb(\x) +  \Lb(\x)^\t \D(\x, \xd) \Lb(\x)$ for differentiable functions $\Rb$, $\Lb$, and $\D$ satisfying
	\ifLONG
	\begin{align*}
		\Rb(\x) \succeq 0,\qquad \D(\x,\xd) = \diag
		((d_{i}(\x,\dot{y}_i))_{i=1}^n) \succeq 0, \qquad \dot{y}_i \partial_{\dot{y}_i} d_{i}(\x,\dot{y}_i) \geq 0
	\end{align*}
	\else
	$\Rb(\x) \succeq 0$, $\D(\x,\xd) = \diag
	((d_{i}(\x,\dot{y}_i))_{i=1}^n) \succeq 0$, and $ \dot{y}_i \partial_{\dot{y}_i} d_{i}(\x,\dot{y}_i) \geq 0$,
	\fi
	where $\x$ is the coordinate of the leaf-node manifold and $\yd = \Lb \xd \in \R^n$.
	It holds $\bm\Xi_\G(\q,\qd) \succeq 0$. If further $\G(\q,\qd), \B(\q,\qd) \succ 0$, then $\M\in\R^{d\times d}_{++}$, and the global RMP generated by \flow converges to the forward invariant set $\CC_\infty$ in Corollary~\ref{cr:stability}.
\end{restatable}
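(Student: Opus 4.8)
The plan is to reduce the statement to a computation local to each leaf and then propagate positivity up the \tree by congruence. By Corollary~\ref{cr:consistency} (i.e.\ applying Theorem~\ref{th:consistency} recursively from the leaves), the root follows a structured GDS $(\CC,\G,\B,\Phi)_\SS$ whose metric and inertia satisfy $\G=\sum_i\J_i^\t\G_i\J_i$ and $\M=\sum_i\J_i^\t\M_i\J_i$, where the sum runs over the children of a node and $\M_i=\G_i+\bm\Xi_{\G_i}$. Since the inertia of any structured GDS is $\M=\G+\bm\Xi_\G$ (cf.~\eqref{eq:inertia matrix}), subtracting $\G=\sum_i\J_i^\t\G_i\J_i$ yields the pullback identity
\begin{align*}
\textstyle
\bm\Xi_\G=\sum_{i}\J_i^\t\bm\Xi_{\G_i}\J_i .
\end{align*}
Because $\Ab\mapsto\J_i^\t\Ab\J_i$ preserves the sign (and symmetry) of a quadratic form, via $\vb^\t\J_i^\t\Ab\J_i\vb=(\J_i\vb)^\t\Ab(\J_i\vb)$, it suffices to show $\bm\Xi_{\G_i}\succeq0$ and $\bm\Xi_{\G_i}$ symmetric at every leaf; an induction on tree depth then gives $\bm\Xi_\G(\q,\qd)\succeq0$ (and symmetric) at the root, establishing the first claim.

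The heart of the argument, and the step I expect to be most delicate, is the explicit evaluation of $\bm\Xi_\G$ for a single leaf metric $\G=\Rb(\x)+\Lb(\x)^\t\D(\x,\xd)\Lb(\x)$. The velocity-free term $\Rb(\x)$ drops out entirely, since $\partial_{\xd}$ annihilates its columns, so $\bm\Xi_\G$ sees only $\Lb^\t\D\Lb$. Writing the entries $G_{pj}=\sum_a L_{ap}L_{aj}\,d_a$ (using that $\D$ is diagonal) and differentiating through $\dot y_a=\sum_c L_{ac}\dot x_c$ gives $\partial_{\dot x_k}G_{pj}=\sum_a L_{ap}L_{aj}L_{ak}\,\partial_{\dot y_a}d_a$. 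Substituting into $\bm\Xi_\G=\frac12\sum_j\dot x_j\,\partial_{\xd}\gb_j$ and collapsing the sum over $j$ with $\sum_j L_{aj}\dot x_j=\dot y_a$ produces the clean form
\begin{align*}
\textstyle
\bm\Xi_\G=\frac12\,\Lb^\t\diag\!\big(\dot y_i\,\partial_{\dot y_i}d_i\big)_{i=1}^n\Lb .
\end{align*}
This is manifestly symmetric, and the hypothesis $\dot y_i\,\partial_{\dot y_i}d_i\ge0$ makes the inner diagonal matrix positive semi-definite, so $\bm\Xi_\G\succeq0$ at each leaf. The index bookkeeping and the cancellation turning the triple product into $\dot y_a$ is where the real work lies; the diagonal structure of $\D$ (and the fact that $\Lb$ is velocity-independent) is precisely what makes this collapse go through, and is why the result fails for a general velocity-dependent metric.

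Finally I would assemble the two parts. With $\bm\Xi_\G(\q,\qd)\succeq0$ and symmetric, the root inertia $\M=\G+\bm\Xi_\G$ is symmetric, and for any $\vb\neq0$,
\begin{align*}
\textstyle
\vb^\t\M\vb=\vb^\t\G\vb+\vb^\t\bm\Xi_\G\vb\ \ge\ \vb^\t\G\vb\ >\ 0,
\end{align*}
where the strict inequality uses $\G(\q,\qd)\succ0$. Hence $\M\in\R^{d\times d}_{++}$; in particular $\M$ is nonsingular, so the root GDS is non-degenerate and well defined. The hypotheses of Corollary~\ref{cr:stability} ($\G,\B\succ0$ together with $\M$ nonsingular) are then met, and the global RMP generated by \flow converges to the forward invariant set $\CC_\infty$.
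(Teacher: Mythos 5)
Your proof is correct and takes essentially the same route as the paper's: an explicit leaf-level computation showing $\bm\Xi_{\G}=\frac{1}{2}\Lb^\t\diag\big(\dot y_i\,\partial_{\dot y_i}d_i\big)_{i=1}^n\Lb\succeq 0$ (with $\Rb(\x)$ dropping out), propagation of positive semi-definiteness up the \tree via the sign-preserving maps $\Ab\mapsto\J_i^\t\Ab\J_i$, and then $\M=\G+\bm\Xi_\G\succeq\G\succ 0$ so that Corollary~\ref{cr:stability} applies. Your only (harmless) deviation is obtaining the pullback identity $\bm\Xi_\G=\sum_i\J_i^\t\bm\Xi_{\G_i}\J_i$ indirectly by subtracting the metric from the inertia expression in Theorem~\ref{th:consistency}, whereas the paper's proof establishes it directly by the chain rule, using that the edge Jacobians $\J_i$ are velocity-independent.
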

\noindent A particular condition in Theorem~\ref{th:condition on velocity metric} is when all the leaf nodes with velocity dependent metric are 1D. Suppose $x\in\R$ is its coordinate and $g(x,\dot{x})$ is its metric matrix. The sufficient condition essentially boils down to $g(x,\dot{x})\geq0$ and $\dot{x} \partial_{\dot{x}} g(x,\dot{x})\geq 0 $. This means that, given any  $x \in \R$, $g(x,0) = 0$, $g(x,\dot{x})$ is non-decreasing when $\dot{x}>0$, and non-increasing when $\dot{x}<0$.
This condition is satisfied by the collision avoidance policy in Section~\ref{sec:example RMPs}.

\subsection{Invariance }  \label{sec:geometric properties}

\newcommand{\lsup}[2]{^{\scriptstyle #2}{#1}}
\newcommand{\pr}{\mathrm{pr}}
\newcommand{\ppartial}[1]{\frac{\partial}{\partial #1}}
\def\conn{\lsup{\nabla}{G}}
\newcommand{\Conn}[1]{\lsup{\nabla}{#1}}
\def\d{\mathrm{d}}

We now discuss the coordinate-free geometric properties of $(\CC, \G, \B, \Phi)_\SS$ generated by  \flow. Due to space constraint, we only summarize the results.
Here we assume that $\G$ is positive-definite. 

\red{
We first introduce some additional notations for the coordinate-free analysis and give definitions of common differential geometric objects (please see, e.g.,~\cite{lee2009manifolds} for an excellent tutorial). For a manifold $\CC$, we use $T\CC$ to denote its tangent bundle (i.e. a manifold that describes the tangent spaces on the base manifold $\CC$) and write $p_{T\CC}: T\CC \to \CC$ to denote the bundle projection, which recovers the corresponding point on $\CC$ (i.e. configuration) from a point on $T\CC$ (a pair of position and the attached tangent vector).
Specifically, suppose $(U, (\q, \vb))$ is a (local) chart on $T\CC$ on a neighborhood $U$. Let $\{\ppartial{q_i}, \ppartial{v_i}\}_{i=1}^d$ and $\{\d{q^i}, \d{v^i}\}_{i=1}^d$ denote the induced frame field and coframe field on $T\CC$ (i.e. the basis vector fields that characterize the tangent spaces and their dual spaces).
For $ s \in U$, we write $s$ in  coordinate  as $(\q(q), \vb(s))$, if $\sum_{i=1}^{d} v_i(s) \frac{\partial}{\partial q_i} |_q \in T_{q} \CC$, where  $q = p_{T\CC}(s)\in \CC$.
With abuse of notation, we also write $s = (\q, \vb)$ for short unless clarity is lost.
Similarly, a chart $(\tilde{U}, (\q, \vb, \ub, \ab))$ can naturally be constructed on the double tangent bundle $TT\CC$, where $\tilde{U} = p_{TT\CC}^{-1}(U)$ and $p_{TT\CC}: TT\CC \to T\CC$ is the bundle projection: we write $h = (\q, \vb, \ub, \ab) \in TT\CC$ if  $\sum_{i=1}^{d} u_i(h) \frac{\partial}{\partial q_i} |_s +  a_i(h) \frac{\partial}{\partial v_i} |_s \in T_{s} T \CC$, where $s = p_{TT\CC}(h) $.
Under these notations, for a curve $q(t)$ on $\CC$, we can write $\ddot{q}(t) \in TT\CC$ in coordinate as $(\q(t), \qd(t), \qd(t),  \qdd(t))$.
Finally, we define a geometric object called \emph{affine connection}, which defines how tangent spaces at different points on a manifold are related. Given Christoffel symbols $\Gamma_{i,j}^k$, an affine connection $\nabla$ on $TT\CC$ is defined via
$
\textstyle\nabla_{ \ppartial{s_i} } \ppartial{s_j}
= \sum_{k=1}^{2d} \Gamma_{i,j}^k  \ppartial{s_k}
$, where $\ppartial{s_i} \coloneqq \ppartial{q_i}$ and  $ \ppartial{s_{i+d}} \coloneqq \ppartial{v_i}$ for $i=1,\dots,d$.

Using this new notation, we show that GDSs can be written in a coordinate-free manner in terms of affine connection.}
Let $T\CC$ denote the tangent bundle of $\CC$, which is a natural manifold to describe the state space. 
Precisely, we prove that a GDS on $\CC$ can be expressed in terms of a unique, asymmetric affine connection $\conn$ that is compatible with a Riemannian metric $G$ (defined by $\G$) on $T\CC$. It is important to note that $G$ is defined on $T\CC$ \emph{not} the original manifold $\CC$. As the metric matrix in a GDS can be velocity dependent, we need a larger manifold.

\begin{restatable}{theorem}{theoremGeometricAcceleration} \label{th:geometric acceleration}
	Let $G$ be a Riemannian metric on $T\CC$ such that, for $s = (q,v) \in T\CC$,  $G(s) = \sum_{i,j} G^v_{ij}(s) dq^i \otimes  d q^j +  G^a_{ij} dv^i \otimes  dv^j$, where $G^v_{ij}(s)$ and $G^a_{ij}$ are symmetric and positive-definite, and $G^v_{ij}(\cdot)$ is differentiable.
	Then there is a unique affine connection $\conn$ that is compatible with $G$ and satisfies,
	$\Gamma_{i,j}^k = \Gamma_{ji}^k$,
	$\Gamma_{i,j+d}^k = 0$,
	and $\Gamma_{i+d,j+d}^k = \Gamma_{j+d,i+d}^k$, for $i,j = 1,\dots,d$ and $k = 1,\dots, 2d$.
	In coordinates, if $G_{ij}^v(\dot{q})$ is identified as $\G(\q,\qd)$, then
	$\pr_{3} (\conn_{\ddot{q}} \ddot{q})$ can be written as $\ab_\G \coloneqq \qdd +  \Gb(\q,\qd)^{-1} (\bm\xi_{\G}(\q,\qd) + \bm\Xi_{\G}(\q,\qd) \qdd )$, where $\pr_{3}: (\q,\vb,\ub,\ab) \mapsto \ub$ is a projection.
\end{restatable}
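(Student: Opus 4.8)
The plan is to prove the two assertions in turn: first that the stated conditions single out a unique metric-compatible connection $\conn$, and second that its covariant acceleration, projected by $\pr_3$, reproduces the geometric acceleration $\ab_\G$. Throughout I work in the induced chart $(\q,\vb)$ on $T\CC$ with frame $\{\ppartial{q_i},\ppartial{v_i}\}$, abbreviating the $2d$ indices so that a ``position'' index lies in $\{1,\dots,d\}$ and a ``velocity'' index is written $i+d$. Since the lift of a curve $q(t)$ has coordinates $(\q,\qd,\qd,\qdd)$, the tangent vector $\ddot q$ at $(\q,\qd)\in T\CC$ is $\dot s = \sum_i \dot q^i \ppartial{q_i} + \ddot q^i \ppartial{v_i}$, so the covariant acceleration has components $(\conn_{\ddot q}\ddot q)^C = \tfrac{d}{dt}\dot s^C + \sum_{A,B}\Gamma_{A,B}^C \dot s^A \dot s^B$ with $\dot s^i=\dot q^i$ and $\dot s^{i+d}=\ddot q^i$; the map $\pr_3$ simply reads off the components $C=k\le d$.

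For existence and uniqueness I would write metric compatibility $\partial_A G_{BC} = \Gamma_{A,B}^D G_{DC} + \Gamma_{A,C}^D G_{DB}$ and combine cyclic permutations as in the Koszul derivation, but \emph{without} full torsion-freeness. The block-diagonal form $G = G^v(\q,\vb)\,dq\otimes dq + G^a\,dv\otimes dv$ (with $G^a$ constant) decouples the system by index type. The position-position symmetry $\Gamma_{i,j}^k=\Gamma_{j,i}^k$ makes the Koszul combination close exactly as in the classical case, yielding $\Gamma_{i,j}^k$ equal to the Christoffel symbols of $G^v$ formed with the position derivatives $\partial_q G^v$; the velocity-velocity symmetry together with the constancy of $G^a$ forces the corresponding symbols to vanish; and $\Gamma_{i,j+d}^k=0$ fixes the torsion, which survives only in the mixed position-velocity block. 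Solving the resulting relations gives the one remaining family $\Gamma_{i+d,j}^k = \tfrac12\sum_l G^{kl}\,\partial_{v_i}G^v_{jl}$ (with $G^{kl}$ the position block of the inverse metric), while $\Gamma_{i+d,j+d}^k$ and $\Gamma_{i,j+d}^k$ vanish for $k\le d$. Uniqueness follows since each symbol is pinned down once the three symmetry conditions are imposed.

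With the symbols in hand the second part is direct substitution. Decompose $\sum_{A,B}\Gamma_{A,B}^k \dot s^A\dot s^B$ into its four index-type blocks. The position-position block gives $\sum_{i,j}\Gamma_{i,j}^k \dot q^i\dot q^j$, which I would recognize, after symmetrizing the metric derivatives, as $[\Gb^{-1}\bm\xi_\G]^k$ using $\bm\xi_\G = \sdot{\Gb}{\x}\xd - \tfrac12\nabla_\x(\xd^\t\Gb\xd)$. The velocity-position block gives $\sum_{i,j}\Gamma_{i+d,j}^k\ddot q^i\dot q^j = \tfrac12\sum_{i,j,l}G^{kl}\partial_{v_i}G^v_{jl}\,\ddot q^i\dot q^j$, which matches $[\Gb^{-1}\bm\Xi_\G\qdd]^k$ since $\bm\Xi_\G = \tfrac12\sum_i \dot x_i\,\partial_{\xd}\gb_i$. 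The position-velocity block vanishes by $\Gamma_{i,j+d}^k=0$, and the velocity-velocity block vanishes by $\Gamma_{i+d,j+d}^k=0$, which is exactly what removes any spurious $\qdd\qdd$ term. Adding the leading $\tfrac{d}{dt}\dot s^k = \ddot q^k$ and identifying $G^v_{ij}$ with $\G(\q,\qd)$ yields $\pr_3(\conn_{\ddot q}\ddot q) = \qdd + \Gb^{-1}\bm\xi_\G + \Gb^{-1}\bm\Xi_\G\qdd = \ab_\G$.

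I expect the main obstacle to be the existence/uniqueness argument rather than the final substitution. Because the connection is deliberately not torsion-free, the classical Koszul formula does not apply verbatim; the torsion that remains lives solely in the mixed position-velocity block, and it is the \emph{asymmetric} condition $\Gamma_{i,j+d}^k=0$ (with no companion condition on $\Gamma_{i+d,j}^k$) that must be shown to determine that block without over- or under-constraining the system. Getting the bookkeeping of these mixed-block symbols right—and confirming that the surviving $\partial_v G^v$ derivatives assemble into precisely the curvature matrix $\bm\Xi_\G$ rather than some other velocity-dependent term—is the delicate step; the remainder is routine index manipulation.
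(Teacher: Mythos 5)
Your overall route --- working in the induced chart, deriving the Christoffel symbols block-by-block from metric compatibility plus the three stated symmetry conditions, then substituting into the covariant acceleration of the lifted curve $\dot s = \sum_i \dot q^i\ppartial{q_i} + \ddot q^i\ppartial{v_i}$ --- is the same coordinate computation the paper's proof rests on, and your second half checks out. With $\Gamma_{i,j}^k$ the $q$-Christoffel symbols of $G^v$, $\Gamma_{i+d,j}^k = \frac{1}{2}\sum_l G^{kl}\partial_{v_i}G^v_{jl}$, and the remaining $k\le d$ blocks zero, the position-position contraction reproduces $\Gb^{-1}\bm\xi_{\G}$ (this is precisely the identity $\bm\xi_{\Gb} = \Cb\xd$ of Lemma~\ref{lm:Coriolos identity}), the mixed contraction reproduces $\Gb^{-1}\bm\Xi_{\G}\qdd$, and adding $\frac{d}{dt}\dot s^k = \ddot q^k$ gives $\ab_\G$ exactly.

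The genuine gap sits in the existence/uniqueness half, at the very point you flag as ``delicate'' but never execute. Compatibility in a velocity direction applied to the position-position block of $G$ reads
\begin{align*}
\partial_{v_i}G^v_{jk} \;=\; \sum_{l\le d}\Gamma_{i+d,j}^{l}\,G^v_{lk} \;+\; \sum_{l\le d}\Gamma_{i+d,k}^{l}\,G^v_{lj},
\end{align*}
which pins only the part of the lowered symbol $\sum_l\Gamma_{i+d,j}^{l}G^v_{lk}$ that is \emph{symmetric} under $j\leftrightarrow k$. Your formula is the symmetric choice, but none of the three hypotheses of the theorem constrains the block $\Gamma_{i+d,j}^{k}$ directly, so your Koszul-with-torsion plan as sketched does not pin its antisymmetric part: perturbing $\Gamma_{i+d,j}^{l}$ by any smooth tensor whose lowered form is antisymmetric in $(j,k)$ preserves both compatibility and all three conditions, so uniqueness does not ``follow since each symbol is pinned down'' --- an additional argument (effectively a normalization of the torsion in the mixed block) must be supplied. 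A second, related lacuna: you attribute the vanishing of the velocity-velocity symbols to the constancy of $G^a$, but the symmetric-in-lower/antisymmetric-after-lowering cancellation this yields only kills the velocity-upper components $\Gamma_{i+d,j+d}^{k+d}$. The position-upper components $\Gamma_{i+d,j+d}^{k}$ with $k\le d$ --- exactly the ones that would inject spurious terms quadratic in $\qdd$ into $\pr_{3}(\conn_{\ddot{q}}\ddot{q})$ --- are constrained only through the off-diagonal compatibility equation $0 = \sum_l \Gamma_{i+d,j}^{l+d}G^a_{lk} + \sum_l \Gamma_{i+d,k+d}^{l}G^v_{lj}$, which couples them to $\Gamma_{i+d,j}^{k+d}$ without forcing either to vanish on its own. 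Until these two free families are eliminated, the connection is not determined and the formula for $\ab_\G$ is not yet a consequence of the hypotheses; the rest of your argument, once those blocks are fixed at zero, is correct routine index work.
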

\noindent We call $\pr_{3} (\lsup{\nabla}{G}_{\dot{q}} \dot{q})$ the \emph{geometric acceleration} of $q(t)$ with respect to $\conn$. It is a coordinate-free object, because $\pr_{3}$ is defined independent of the choice of chart on $\CC$. By Theorem~\ref{th:geometric acceleration}, it is clear that a GDS can be written abstractly as
\begin{align} \label{eq:GDS abstract}
 \pr_3(\lsup{\nabla}{G}_{\ddot{q}} \ddot{q}) = (\pr_3 \circ G^{\sharp} \circ F) (s)
\end{align}
where $F: s \mapsto -d\Phi(s) - B(s) $ defines the covectors due to the potential function and damping, and $G^{\sharp} : T^*T\CC \to TT\CC$  denotes the inverse of $G$. In coordinates, 
it reads as
$
\qdd +  \Gb(\q,\qd)^{-1} (\bm\xi_{\G}(\q,\qd) + \bm\Xi_{\G}(\q,\qd) \qdd )  =  -\Gb(\q,\qd)^{-1} (\nabla_\q \Phi(\q) + \Bb(\q,\qd)\qd )
$, which is exactly~\eqref{eq:GDS}.

Extending this result, we present a coordinate-free representation of \flow when the leaf-nodes are GDSs.

\begin{restatable}{theorem}{theoremAbstractConsistency}\label{th:consistency abstract}
	Suppose $\CC$ is related to $K$ leaf-node task spaces by maps $\{\psi_i: \CC \to \TT_i\}_{i=1}^K$ and  the $i$th task space $\TT_i$ has an affine connection $\Conn{G_i}$ on $T \TT_i$, as defined in Theorem~\ref{th:geometric acceleration}, and a covector function $F_i$ defined by some potential and damping as described above. 
	Let $\lsup{\bar{\nabla}}{G} = \sum_{i=1}^{K} T\psi_i^* {\Conn{G_i}}$ be the pullback connection, $G = \sum_{i=1}^K T\psi_i^* G_i$ be the pullback metric, and $F = \sum_{i=1}^{K} T\psi_i^* F_i$ be the pullback covector, where $T\psi_i^*: T^*T\TT_i \to T^*T\CC$.
	Then $\lsup{\bar{\nabla}}{G}$ is compatible with $G$, and
	$\pr_{3} (\lsup{\bar{\nabla}}{G} _{\ddot{q}} \ddot{q})=  (\pr_3 \circ G^{\sharp} \circ F) (s) $ can be written as $ \qdd +  \Gb(\q,\qd)^{-1} (\bm\eta_{\G;\SS}(\q,\qd) + \bm\Xi_{\G}(\q,\qd) \qdd ) =  -\Gb(\q,\qd)^{-1} (\nabla_\q \Phi(\q) + \Bb(\q,\qd)\qd )$.
	In particular, if $G$ is velocity-independent, then $\lsup{\bar{\nabla}}{G} = \conn$.
\end{restatable}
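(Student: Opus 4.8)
The plan is to treat this as the coordinate-free counterpart of the closure result, Theorem~\ref{th:consistency}: I would reduce the abstract claim to a per-factor application of Theorem~\ref{th:geometric acceleration} together with the observation that all the pullback operations combine additively. There are three things to establish: that $G = \sum_i T\psi_i^* G_i$ and $F = \sum_i T\psi_i^* F_i$ are well-defined objects of the type required by Theorem~\ref{th:geometric acceleration}; that the pullback connection $\lsup{\bar{\nabla}}{G}$ is compatible with $G$; and that $\pr_3(\lsup{\bar{\nabla}}{G}_{\ddot q}\ddot q) = (\pr_3\circ G^\sharp\circ F)(s)$ unwinds in coordinates to the structured GDS equation, with the velocity-free case collapsing to $\conn$. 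The first point is quick: since each $G_i$ is a metric and each $F_i$ a covector field on $T\TT_i$, the maps $T\psi_i^*$ are ordinary tensor pullbacks induced by the lift $T\psi_i\colon T\CC\to T\TT_i$, so the sums are unambiguous; writing $\J_i = \partial_\q\y_i$ and $\dot\y_i = \J_i\qd$, the pulled-back metric contributes $\J_i^\t\G_i\J_i$ to the $dq\otimes dq$ block, so $G$ inherits the block form of Theorem~\ref{th:geometric acceleration} with $G^v$ identified as $\G = \sum_i\J_i^\t\G_i\J_i$, which is positive-definite under the standing assumption of this subsection.

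The compatibility claim needs genuine work because $\lsup{\bar{\nabla}}{G}$ is \emph{not} the symmetric connection $\conn$ of Theorem~\ref{th:geometric acceleration} in general---the theorem asserts the two agree only when $G$ is velocity-independent---so uniqueness cannot be invoked to conclude compatibility for free. Instead I would verify the metric-compatibility identity $Z\langle X,Y\rangle_G = \langle\lsup{\bar{\nabla}}{G}_Z X, Y\rangle_G + \langle X, \lsup{\bar{\nabla}}{G}_Z Y\rangle_G$ directly, expanding both $G$ and $\lsup{\bar{\nabla}}{G}$ over the index $i$ and using that the $i$th piece of the connection is the pullback of $\Conn{G_i}$, which is compatible with $G_i$ by Theorem~\ref{th:geometric acceleration}. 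Linearity of the tensor pullback, together with the fact that $T\psi_i^*$ intertwines covariant differentiation on $T\TT_i$ with its image on $T\CC$, lets the per-factor identities assemble into the global one.

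The core of the argument is the geometric-acceleration identity, and it parallels the coordinate proof of Theorem~\ref{th:consistency}. Theorem~\ref{th:geometric acceleration} applied on the $i$th leaf gives $\pr_3(\Conn{G_i}_{\ddot y_i}\ddot y_i) = \ab_{\G_i}$, written through $\G_i$, $\bm\Xi_{\G_i}$ and $\bm\xi_{\G_i}$; substituting the second-order chain rule $\ddot\y_i = \J_i\qdd + \dot\J_i\qd$ and summing the pulled-back accelerations over $i$ is exactly the computation that carries $\sum_i\J_i^\t(\bm\xi_{\G_i} + (\G_i + \bm\Xi_{\G_i})\dot\J_i\qd)$ into the structured curvature term $\bm\eta_{\G;\SS}$, while reproducing the pullback metric $\G = \sum_i\J_i^\t\G_i\J_i$ and inertia $\sum_i\J_i^\t(\G_i+\bm\Xi_{\G_i})\J_i$ of Theorem~\ref{th:consistency}. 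On the forcing side, $F = -d\Phi - B$ and $G^\sharp$ raises the index, so $(\pr_3\circ G^\sharp\circ F)(s) = -\G^{-1}(\nabla_\q\Phi + \B\qd)$, which yields the stated coordinate equation. For the final sentence, when $G$ is velocity-independent one has $\bm\Xi_\G = 0$ and, as already noted just after the definition of structured GDSs, $\bm\eta_{\G;\SS}$ collapses to $\bm\xi_\G$; the structural part of $\lsup{\bar{\nabla}}{G}$ then vanishes and it coincides with $\conn$ by the uniqueness in Theorem~\ref{th:geometric acceleration}.

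The main obstacle, I expect, is reconciling the abstract construction of $\lsup{\bar{\nabla}}{G}$---defined through the cotangent maps $T\psi_i^*\colon T^*T\TT_i\to T^*T\CC$---with the concrete transformation of the geometric acceleration. Two points demand care: showing that $\pr_3$ commutes appropriately with the double-tangent map $TT\psi_i$, so that the projection of the pulled-back covariant derivative equals the pullback of the projection; and checking that the second-order terms $\dot\J_i\qd$ assemble into $\bm\eta_{\G;\SS}$ rather than $\bm\xi_\G$. A related subtlety is that a naive sum of connections fails the Leibniz rule, so the sum defining $\lsup{\bar{\nabla}}{G}$ must be read through the metric-weighted cotangent-pullback construction---which is precisely what makes the compatibility bookkeeping above nontrivial.
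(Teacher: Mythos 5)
Your proposal takes essentially the same route as the paper's own proof (given in Appendix~B of the technical report~\cite{cheng2018rmpflowarxiv}): compatibility is checked directly from per-leaf compatibility through the covector-sum reading of the pullback connection (not via uniqueness), the coordinate identity follows by applying Theorem~\ref{th:geometric acceleration} leafwise and substituting $\ddot{\y}_i = \J_i\qdd + \dot{\J}_i\qd$ so that $\sum_i \J_i^\t\bigl(\bm\xi_{\G_i} + (\G_i+\bm\Xi_{\G_i})\dot{\J}_i\qd\bigr)$ assembles into $\bm\eta_{\G;\SS}$ together with the identity $\bm\Xi_{\G} = \sum_i \J_i^\t \bm\Xi_{\G_i}\J_i$ (the same lemma underlying Theorem~\ref{th:consistency}), and the velocity-independent case is settled by the uniqueness clause of Theorem~\ref{th:geometric acceleration}. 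The one gloss---asserting that the pullback metric cleanly inherits the block form, when a naive tensor pullback of the $G^a_i$ blocks along $TT\psi_i$ would generate $dq\otimes dv$ cross terms---is precisely the bookkeeping you flag as the main obstacle, and it is handled in the paper's proof by the same block-structured interpretation of the pullback that your outline implicitly adopts, so nothing essential is missing.
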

\noindent Theorem~\ref{th:consistency abstract} says that the structured GDS $(\CC, \G, \B, \Phi)_\SS$ can be written abstractly, without coordinates, using the pullback of task-space covectors, metrics, and asymmetric affine connections (that are defined in Theorem~\ref{th:geometric acceleration}).
In other words, the recursive calls of \pullback in the backward pass of \flow is indeed performing ``pullback'' of geometric objects.
\red{We can think that the leaf nodes define the asymmetric affine connections, and \flow performs \pullback to pullback those connections onto $\CC$ to define $\lsup{\bar{\nabla}}{G}$.}
Theorem~\ref{th:consistency abstract} also shows, when $G$ is velocity-independent, the pullback of connection and the pullback of metric commutes.
In this case, $\lsup{\bar{\nabla}}{G} = \conn$, which is equivalent to the classic Levi-Civita connection of $G$. The loss of commutativity in general is due to the asymmetric definition of the connection in Theorem~\ref{th:geometric acceleration}, which however is necessary to derive a control law of acceleration, without further referring to  higher-order time derivatives.

\section{Operational Space Control and Geometric Mechanics in View of \flow}

With the algorithm details and theoretical properties of \flow introduced, we now discuss more precisely how \flow is connected to and generalizes existing work in geometric mechanics and operational space control.

\subsection{From Operational Space Control to \flow with GDSs} \label{app:GDSs}

Our study of GDSs (introduced in Section~\ref{sec:GDS})
is motivated by SMSs in geometric mechanics which describe the dynamics used in existing operational space control schemes (cf. \cref{sec:tutorial}).
Many formulations of mechanics exist, including Lagrangian mechanics~\cite{ClassicalMechanicsTaylor05} and the aforementioned Gauss's principle of least constraint~\cite{udwadia1996analytical}, and they are all equivalent, implicitly sharing the same mathematical structure.
But among them, we find that geometric mechanics, which models
physical systems as geodesic flow on Riemannian manifolds, is the most explicit one: it summarizes the system properties arising from the underlying manifold structure compactly, as Riemannian metrics, and connects to the broad mathematical tool set from Riemannian geometry.

These geometry-based insights provide us a way to generalize beyond the previous SMSs studied in~\cite{bullo2004geometric} and then design GDSs, a family non-classical dynamical systems that, through the use of configuration-and-velocity dependent metrics, more naturally describe behaviors of robots desired for tasks in non-Euclidean spaces.

The proposed generalization preserves several nice features from SMSs to GDSs. As in SMSs, the properties of GDSs are captured by the metric matrix.  For example, a GDS like a SMS possesses the natural conservation property of kinematic energy, i.e. it travels along a geodesic defined by $\G(\x,\xd)$ when there is no external perturbations due to $\Phi$ and $\Bb$. Note that $\G(\x,\xd)$ by definition may only be positive-semidefinite even when the system is non-degenerate; here we allow the geodesic to be defined for a degenerate metric, meaning a curve whose instant length measured by the (degenerate) metric is constant.
This geometric feature is an important tool to establish the stability of GDSs in our analysis; 
We highlight this nice property below, which is a corollary of Proposition~\ref{pr:Lyapunov time derivative}. Note that this property also hold for degenerate GDSs provided that differential equations satisfying~\eqref{eq:GDS general} exist.
\begin{corollary}
	All 
	 GDSs in the form $(\MM, \Gb, 0, 0)$ travel on geodesics defined by $\Gb$. That is, $\dot{K}(\x,\xd) = 0$, where $K(\x,\xd) = \frac{1}{2} \xd^\t \G(\x,\xd) \xd$.
\end{corollary}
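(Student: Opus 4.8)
The plan is to derive the statement as an immediate specialization of Proposition~\ref{pr:Lyapunov time derivative} rather than recomputing anything from scratch. First I would observe that a GDS $(\MM, \Gb, 0, 0)$ is, by the discussion following~\eqref{eq:structured GDS}, a structured GDS $(\MM, \Gb, \Bb, \Phi)_\SS$ equipped with the trivial structure $\y = \x$ and with $\Bb \equiv 0$ and $\Phi \equiv 0$. Hence the hypotheses of Proposition~\ref{pr:Lyapunov time derivative} are met with vanishing damping and potential, and the proposition applies on $\MM$ (the proposition is stated at the root $\CC$, but its proof uses only the structured-GDS equation of motion, so it transfers to any node manifold).

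Next I would specialize the Lyapunov machinery. With $\Phi \equiv 0$, the Lyapunov candidate in~\eqref{eq:Lypunov candidate}, namely $V(\x,\xd) = \frac{1}{2}\xd^\t \Gb(\x,\xd)\xd + \Phi(\x)$, reduces exactly to the kinetic energy $K(\x,\xd) = \frac{1}{2}\xd^\t \Gb(\x,\xd)\xd$. Proposition~\ref{pr:Lyapunov time derivative} then yields $\dot{K}(\x,\xd) = \dot{V}(\x,\xd) = -\xd^\t \Bb(\x,\xd)\xd = 0$ since $\Bb \equiv 0$. Finally I would read $\dot{K} = 0$ geometrically: because $K$ is the instantaneous squared length of $\xd$ measured by $\Gb$, its constancy along the trajectory is exactly the defining property of a (possibly degenerate) geodesic, as anticipated in the surrounding text. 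This closes the non-degenerate case.

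For completeness I would address the degenerate remark. When $\Gb$ (hence $\Mb = \Gb + \bm\Xi_{\Gb}$) is singular, Proposition~\ref{pr:Lyapunov time derivative} cannot be invoked directly, so I would instead re-run its energy computation starting from the generalized equation~\eqref{eq:GDS general}, which is assumed to admit a solution. Concretely, differentiating $K$ along the flow and using that $\Gb$ is symmetric gives $\dot{K} = \xd^\t \Gb \xdd + \frac{1}{2}\xd^\t \dot{\Gb}\xd$; substituting $\Gb\xdd$ from the equation of motion (now with $\Bb = \nabla_\x\Phi = 0$) and regrouping, the curvature contributions $\bm\Xi_{\Gb}\xdd$ and $\bm\xi_{\Gb}$ collapse against $\frac{1}{2}\xd^\t\dot{\Gb}\xd$ by their defining identities, leaving $\dot K = 0$.

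I expect no substantive obstacle: the result is genuinely a corollary, and the only points requiring care are the bookkeeping that a plain GDS qualifies as a trivially-structured GDS so that Proposition~\ref{pr:Lyapunov time derivative} applies, and the mild degenerate-metric caveat, where one must confirm that every cancellation in the energy-rate computation uses only the equation of motion and the algebraic definitions of $\bm\Xi_{\Gb}$ and $\bm\xi_{\Gb}$, and never the invertibility of $\Gb$ or $\Mb$.
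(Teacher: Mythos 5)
Your proposal is correct and follows essentially the same route as the paper, which presents this statement precisely as a corollary of Proposition~\ref{pr:Lyapunov time derivative}: setting $\Bb \equiv 0$ and $\Phi \equiv 0$ reduces the Lyapunov candidate in~\eqref{eq:Lypunov candidate} to the kinetic energy $K$, giving $\dot{K} = -\xd^\t \Bb \xd = 0$. Your degenerate-case addendum matches the paper's own remark (that the property holds for degenerate GDSs whenever solutions of~\eqref{eq:GDS general} exist) and is sound, since the underlying energy-rate computation uses only the equation of motion and the algebraic definitions of $\bm\Xi_{\Gb}$ and $\bm\xi_{\Gb}$, never the invertibility of $\Gb$ or $\Mb$.
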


As we discussed earlier, these generalized metrics induce curvature terms $\bm\Xi_\Gb$ and  $\bm{\xi}_\Gb$ that can be useful to design sensible motions for tasks in non-Euclidean spaces (cf. \cref{sec:example RMPs}).
As we showed GDSs are coordinate-free, these terms and behaviors arise naturally when traveling on geodesics that is defined by configuration-and-velocity dependent metrics.
To gain more intuition about these curvature terms, we recall that the curvature term $\bm{\xi}_\Gb$ in GDSs is related to the Coriolis force in the SMSs. This is not surprising, as from the analysis in Section~\ref{sec:geometric properties} we know that $\bm{\xi}_\G$ comes from the Christoffel symbols of the asymmetric connection in \cref{th:geometric acceleration}, just as the Coriolis force comes from the Christoffel symbols of Levi-Civita connection.  Recall it is defined as
\begin{align*}
	\bm\xi_{\G}(\x,\xd) &\coloneqq \textstyle \sdot{\Gb}{\x}(\x,\xd) \xd - \frac{1}{2} \nabla_\x (\xd^\t \Gb(\x,\xd) \xd)
\end{align*}
Now we show their relationship explicitly below.
\begin{restatable}{lemma}{coriolosIdentity}\label{lm:Coriolos identity}
	\label{lm:compact writing of Coriolis force}
	Let $\Gamma_{ijk} = \frac{1}{2}( \partial_{x_k}  G_{ij}  + \partial_{x_j}  G_{ik} - \partial_{x_j}  G_{jk})$ be the Christoffel symbol of the first kind with respect to $\Gb(\x,\xd)$, where the subscript $_{ij}$ denotes the $(i,j)$ element. Let
	$	C_{ij}
	= 
	\sum_{k=1}^{d} \dot{x}_k \Gamma_{ijk}
	$ and define $\Cb(\x,\xd) = (C_{ij})_{i,j=1}^m$. Then $\bm\xi_{\Gb}(\x,\xd)  = \Cb(\x,\xd) \xd$.
\end{restatable}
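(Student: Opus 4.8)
The plan is to establish the identity by a direct computation in coordinates: I would write both $\bm\xi_{\Gb}$ and $\Cb\xd$ as quadratic forms in $\xd$ whose coefficients are the partial derivatives $\partial_{x_k} G_{ij}$, and then match the two expressions component by component. Throughout, the derivatives $\partial_{x_k}$ are taken with $\xd$ held fixed, so the argument applies verbatim to a velocity-dependent metric $\Gb(\x,\xd)$; the only symmetry I would invoke is that of the product $\dot x_a \dot x_b$ under exchange of the dummy summation indices $a$ and $b$, never any symmetry of $\Gb$ itself.

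First I would expand the left-hand side. Using the definition of $\sdot{\Gb}{\x}$ as the matrix whose $i$th column is $\partial_\x \gb_i\,\xd$, together with $(\gb_i)_j = G_{ji}$, the entry is $(\sdot{\Gb}{\x})_{ji} = \sum_k (\partial_{x_k}G_{ji})\dot x_k$, so the $j$th component of $\sdot{\Gb}{\x}\xd$ is $\sum_{a,b}(\partial_{x_a}G_{jb})\dot x_a\dot x_b$. Differentiating the scalar $\xd^\t\Gb\xd=\sum_{a,b}\dot x_a G_{ab}\dot x_b$ with respect to $\x$ (holding $\xd$ fixed) gives the $j$th component $\tfrac12\sum_{a,b}(\partial_{x_j}G_{ab})\dot x_a\dot x_b$. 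Hence
\begin{align*}
 (\bm\xi_{\Gb})_j = \sum_{a,b}\dot x_a\dot x_b\,\partial_{x_a}G_{jb} - \tfrac12\sum_{a,b}\dot x_a\dot x_b\,\partial_{x_j}G_{ab}.
\end{align*}

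Next I would expand the right-hand side. Contracting $C_{jb}=\sum_a \dot x_a \Gamma_{jba}$ once more against $\dot x_b$ gives $(\Cb\xd)_j = \sum_{a,b}\dot x_a\dot x_b\,\Gamma_{jba}$, and substituting the Christoffel symbol of the first kind $\Gamma_{jba}=\tfrac12(\partial_{x_a}G_{jb}+\partial_{x_b}G_{ja}-\partial_{x_j}G_{ba})$ produces three quadratic-form terms. The one observation needed is that, because $\dot x_a\dot x_b$ is symmetric in $a\leftrightarrow b$, relabeling the dummy indices shows the second term $\tfrac12\sum\dot x_a\dot x_b\,\partial_{x_b}G_{ja}$ equals the first term $\tfrac12\sum\dot x_a\dot x_b\,\partial_{x_a}G_{jb}$; the two combine into the single unhalved term appearing in $(\bm\xi_{\Gb})_j$. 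The same relabeling identifies the remaining term $-\tfrac12\sum\dot x_a\dot x_b\,\partial_{x_j}G_{ba}$ with $-\tfrac12\sum\dot x_a\dot x_b\,\partial_{x_j}G_{ab}$, matching the gradient term above, so $(\Cb\xd)_j=(\bm\xi_{\Gb})_j$ for every $j$.

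The computation is routine once the conventions are fixed, so I do not expect a genuine obstacle; the one place demanding care is the index bookkeeping in $\sdot{\Gb}{\x}$, whose column-wise definition makes it easy to transpose an index or contract the wrong slot. Getting the entry $(\sdot{\Gb}{\x})_{ji}=\sum_k(\partial_{x_k}G_{ji})\dot x_k$ right is exactly what makes the first Christoffel term line up with $\sdot{\Gb}{\x}\xd$, after which everything follows from the symmetry of $\dot x_a\dot x_b$.
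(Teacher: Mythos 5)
Your proof is correct and takes essentially the same route as the paper's: a direct coordinate computation that writes both $\bm\xi_{\Gb}$ and $\Cb\,\xd$ as quadratic forms $\sum_{a,b}\dot x_a\dot x_b(\cdots)$ and matches them using only the symmetry of $\dot x_a\dot x_b$ under dummy-index relabeling (the paper splits the $\sdot{\Gb}{\x}\xd$ term into two halves, you merge the two Christoffel terms---the same step read in opposite directions). One remark: like the paper's proof, you implicitly work with the standard Christoffel symbol of the first kind, $\Gamma_{ijk}=\tfrac12\left(\partial_{x_k}G_{ij}+\partial_{x_j}G_{ik}-\partial_{x_i}G_{jk}\right)$, thereby silently correcting the typo $\partial_{x_j}G_{jk}$ in the last term of the lemma statement.
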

\begin{proof}[Proof of Lemma~\ref{lm:Coriolos identity}]
	Suppose $\bm\xi_\Gb = (\xi_i)_{i=1}^m$. We can compare the two definitions and verify they are indeed equivalent:
	\begin{align*}
		\xi_i
		&= \textstyle
		\sum_{j,k=1}^{d} \dot{x}_j \dot{x}_k \partial_{x_j}  G_{ik} - \frac{1}{2}\sum_{j,k=1}^{d} \dot{x}_j\dot{x}_k \partial_{x_i}  G_{jk}\\
		&= \textstyle
		\frac{1}{2} \sum_{j,k=1}^{d} \dot{x}_j \dot{x}_k \partial_{x_k}  G_{ij}
		+\frac{1}{2}\sum_{j,k=1}^{d} \dot{x}_j \dot{x}_k \partial_{x_j}  G_{ik} \\
		&\quad \textstyle
		- \frac{1}{2}\sum_{j,k=1}^{d} \dot{x}_j\dot{x}_k \partial_{x_i}  G_{jk}
		= \left( \Cb(\x,\xd) \xd \right)_i \qedhere
	\end{align*}
\end{proof}
\noindent Thus, we can think intuitively that GDSs modify the inertia and the Coriolis forces in SMSs so that the dynamical system can preserve a generalized notion of kinematic energy that is no-longer necessarily quadratic in velocity.

Finally, we note that the benefits of using configuration-and-velocity dependent metrics can also be understood from their connection to the weight matrices in least-squared problems. Recall from \cref{sec:tutorial} that 
for SMSs, the inertia matrix (which is the same as the metric matrix according to geometric mechanics) forms the importance weight in the least-squared problem. In other words, we can view common operational space control schemes as implicitly combining policies with constant or configuration dependent importance weight matrix in the least-squared sense, which implies certain restriction on the richness of behaviors that it can generate.
By contrast, \flow allows generally importance weight matrices to depend also on velocity in the least-square problems prescribed by \eqref{eq:natural pullback}, which combines RMPs from the child nodes as a RMP at the parent node in every level of the \tree (cf. \cref{sec:RMP algebra}). When the polices come from (structured) GDSs, these weight matrices now again are inertia matrices and the geometric properties of GDSs lead to similar stability and convergence properties as their SMS predecessors.
Thus, in a sense, we can view \flow as generalizing operational space control to consider also configuration-and-velocity dependent weights in policy generation, allowing more flexible trade-offs between different policies.

\subsection{Relationship between \flow and Recursive Newton-Euler Algorithms}\label{app:relationship with dynamics}

For readers familiar with robot dynamics, we remark that the forward-backward policy generation procedure of \flow is closely related to the algorithms~\cite{walker1982efficient} for computing forward dynamics (i.e. computing accelerations given forces) based on recursive Newton-Euler algorithm.
Here we discuss their relationship.

In a summary, these classic algorithms compute the forward dynamics using following steps:
	\begin{enumerate}
		\item It propagates positions and velocities from the base to the end-effector.
		\item It computes the Coriollis force by backward propagating the inverse dynamics of each link under the condition that the acceleration is zero.
		\item It computes the (full/upper-triangular/lower-triangular) joint inertia matrix.
		\item It solves a linear system of equations to obtain the joint acceleration.
	\end{enumerate}
	In~\cite{walker1982efficient}, they assume a recursive Newton-Euler algorithm (RNE) for inverse dynamics is given, and realize Step 1 and Step 2 above by calling the RNE subroutine. The computation of Step 3 depends on which part of the inertia matrix is computed. In particular, their Method 3 (also called the Composite-Rigid-Body Algorithm in~\cite[Chapter 6]{Featherstone08}) computes the upper triangle part of the inertia matrix by a backward propagation from the end-effector to the base.

	\flow can also be used to compute forward dynamics, when we set the leaf-node policy as the constant inertia system on the body frame of each link and we set the transformation in the \tree as the change of coordinates across of robot links.
	This works because we showed that when leaf-node policies are GDSs (which cover SMSs of rigid-body dynamics as a special case), the effective dynamics at the root node is the pullback GDS, which in this case is the effective robot dynamics defined by the inertia matrix of each link.

	We can use this special case to compare \flow with the above procedure.
	We see that the forward pass of \flow is equivalent to Step 1, and the backward pass of \flow is equivalent of Step 2 and Step 3, and the final \resolve operation is equivalent to Step 4.

	Despite similarity, the main difference is that \flow computes the force and the inertia matrix in a \emph{single} backward pass to exploit shared computations. This change is important, especially, the number of subtasks are large, e.g., in avoiding multiples obstacles.
	In addition, the design of \flow generalizes these classical computational procedures (e.g. designed only for rigid bodies, rotational/prismatic joints) to handle abstract and even non-Euclidean task spaces that have velocity-dependent metrics/inertias.
	This extension provides a unified framework of different algorithms and results in an expressive class of motion policies.

Finally, we note that the above idea can be slightly modified so that we can also use \flow to compute the inverse dynamics. This can be done similarly to the above construction using physical inertia to initialize leaf-node RMPs; but at the end, after the backward pass, we solve for instead the torque as $\bm\tau = \Mb_r \qdd_d - \Mb_r \fb_r$ where $\qdd_d$ is the desired joint-space acceleration.

\subsection{Related Approaches to Motion Policy Generation} \label{sec:related work}

While here we focus on the special case of \flow with GDSs, this family already covers a wide range of reactive policies commonly used in practice.
For example, when the task metric is Euclidean (i.e. constant), \flow recovers operational space control (and its variants)~\cite{khatib1987unified,sentis2006whole,Peters_AR_2008,UdwadiaGaussPrincipleControl2003,lo2016virtual}. When the task metric is only configuration dependent, \flow can be viewed as performing energy shaping to combine multiple SMSs in geometric control~\cite{bullo2004geometric}.
Further, \flow allows using velocity dependent metrics, generating behaviors all those previous rigid mechanics-based approaches fail to model.
We also note that \flow can be easily modified to incorporate exogenous time-varying inputs (e.g. forces to realize impedance control~\cite{albu2002cartesian} or learned perturbations as in DMPs~\cite{IjspeertDMPs2013}).
In computation, the structure of \flow in natural-formed RMPs resembles the classical Recursive Newton-Euler algorithm~\cite{walker1982efficient,Featherstone08} (as we just discussed above).
Alternatively, the canonical form of \flow in~\eqref{eq:least-square problem of pullback} resembles Gauss's Principle~\cite{Peters_AR_2008,UdwadiaGaussPrincipleControl2003}, but with a curvature correction $\bm\Xi_\G$ on the inertia matrix (suggested by Theorem~\ref{th:consistency}) to account for velocity dependent metrics.
Thus, we can view \flow as a natural generalization of these approaches to a broader class of non-Euclidean behaviors.

\section{Experiments} \label{sec:experiments}

We first perform controlled experiments to study the curvature effects of nonlinear
metrics, which is important for stability and collision avoidance.  Then we
conduct several full-body experiments
(video:
{\small \url{https://youtu.be/Fl4WvsXQDzo}})
to demonstrate the capabilities of \flow
on high-DOF manipulation problems in clutter, and implement an integrated
vision-and-motion system on two physical robots.
\blueold{Extra details of the RMPs used in this section can found in the Appendix
D of the technical report~\cite{cheng2018rmpflowarxiv}.}

\begin{figure*}[t]
	\centering
	\subfloat[\label{fig:1d_z}]{
		\includegraphics[trim={5 5 0 25},clip, width=0.5\columnwidth,keepaspectratio]{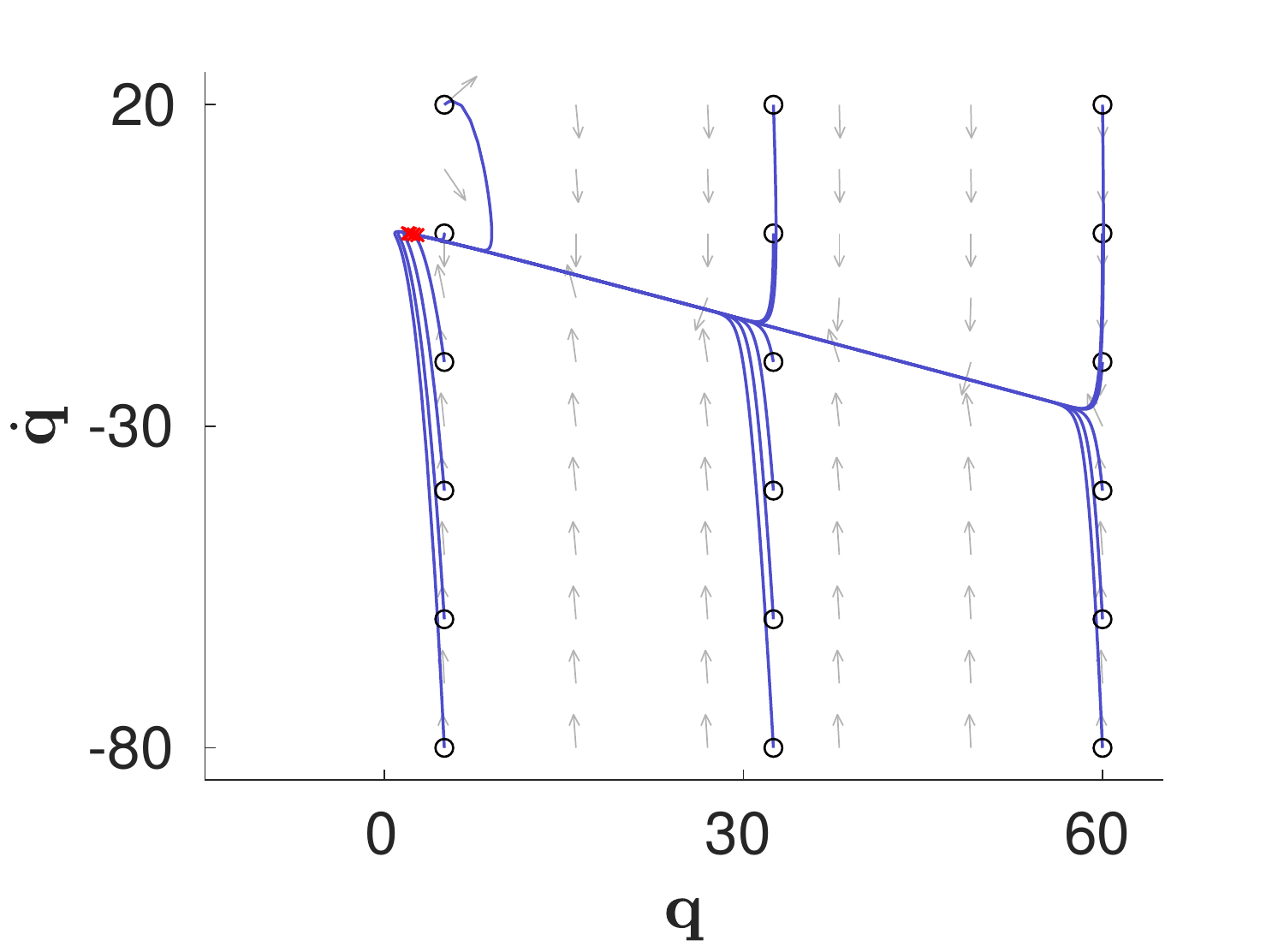}
	}\hspace{-4mm}
	\subfloat[\label{fig:1d_x}]{
		\includegraphics[trim={5 5 0 25},clip, width=0.5\columnwidth,keepaspectratio]{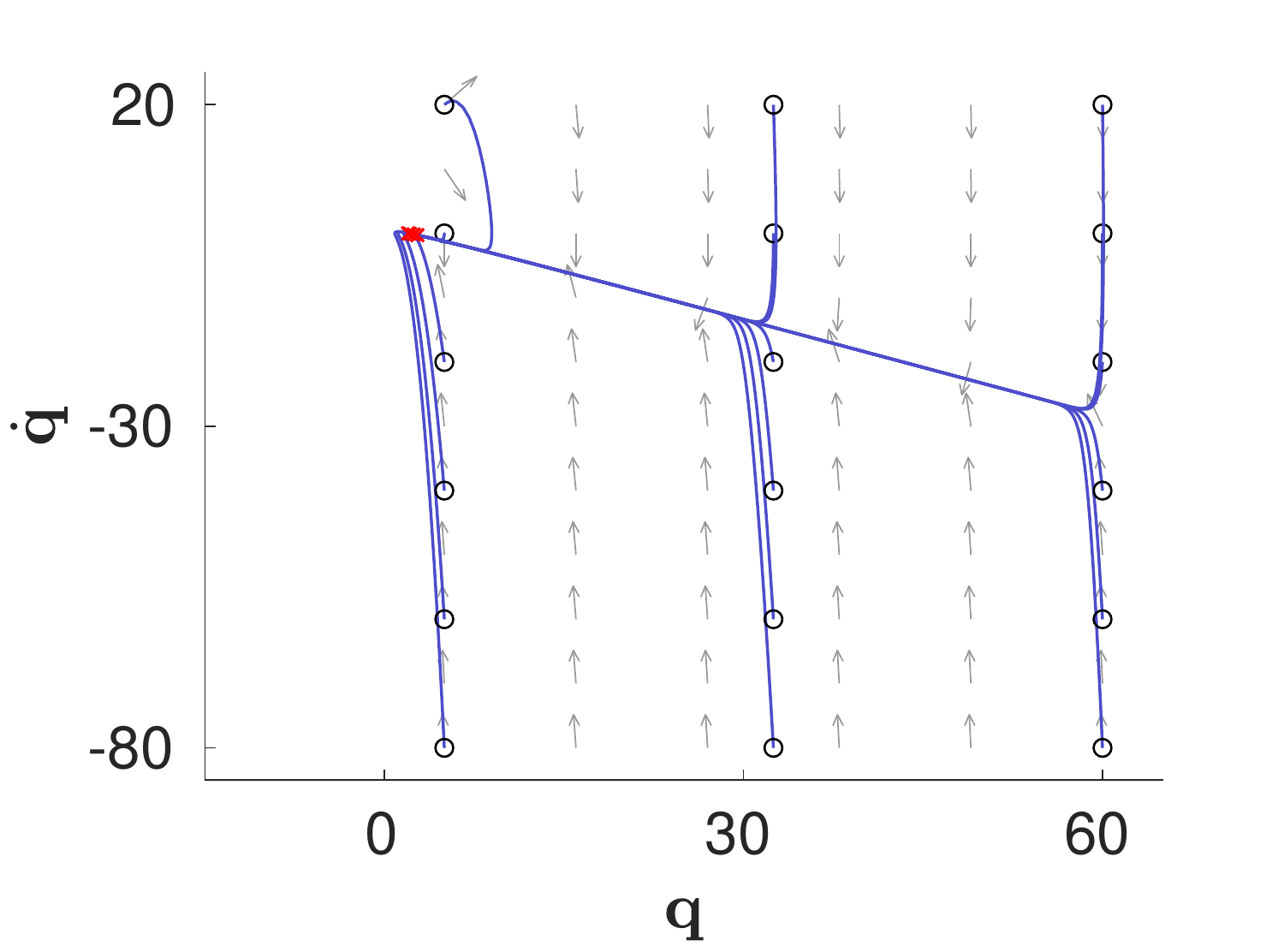}
	}\hspace{-4mm}
	\subfloat[\label{fig:1d_alpha1}]{
		\includegraphics[trim={5 5 0 25},clip, width=0.5\columnwidth,keepaspectratio]{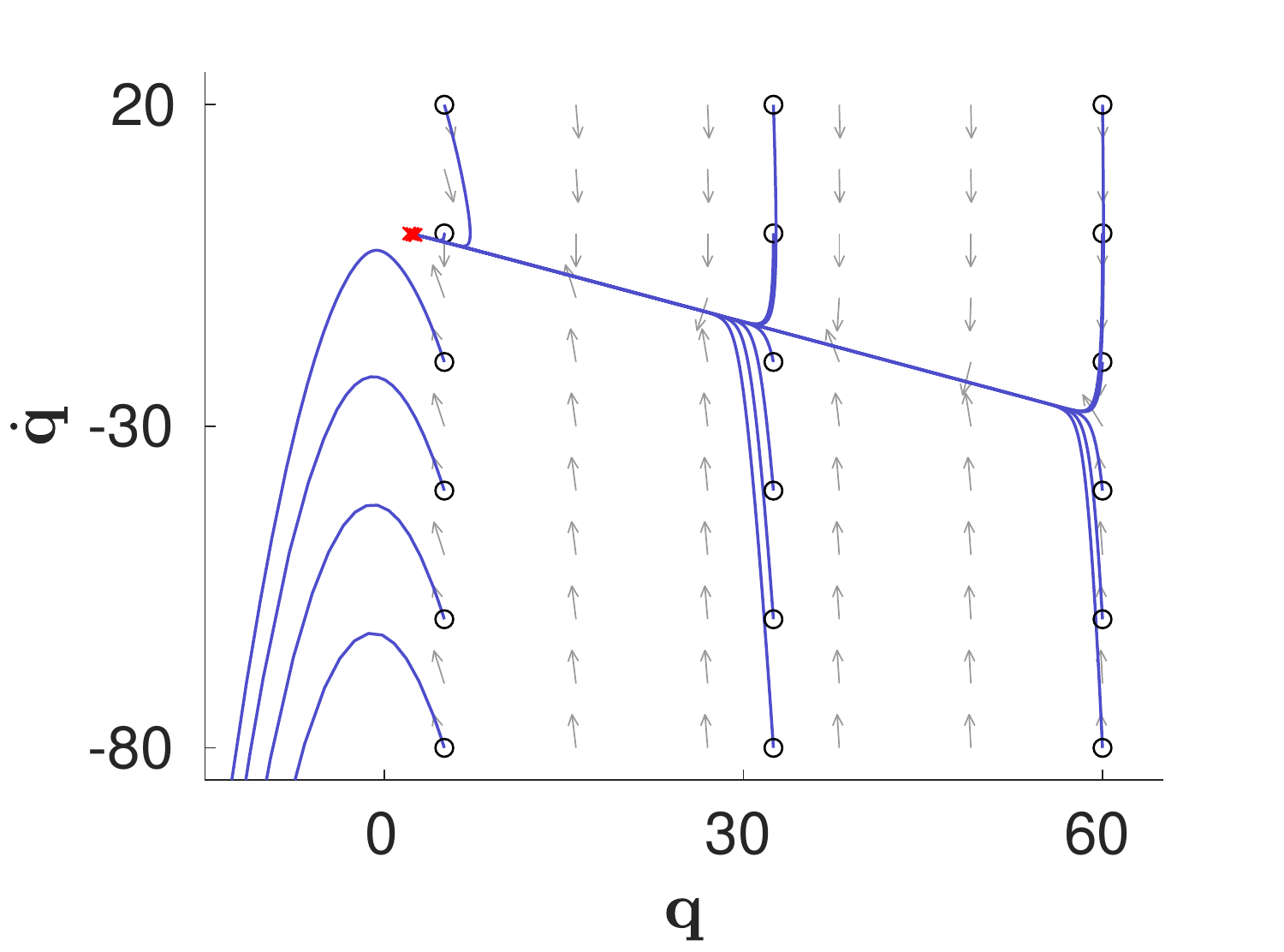}
	}\hspace{-4mm}
	\subfloat[\label{fig:1d_alpha1_damp}]{
		\includegraphics[trim={5 5 0 25},clip, width=0.5\columnwidth,keepaspectratio]{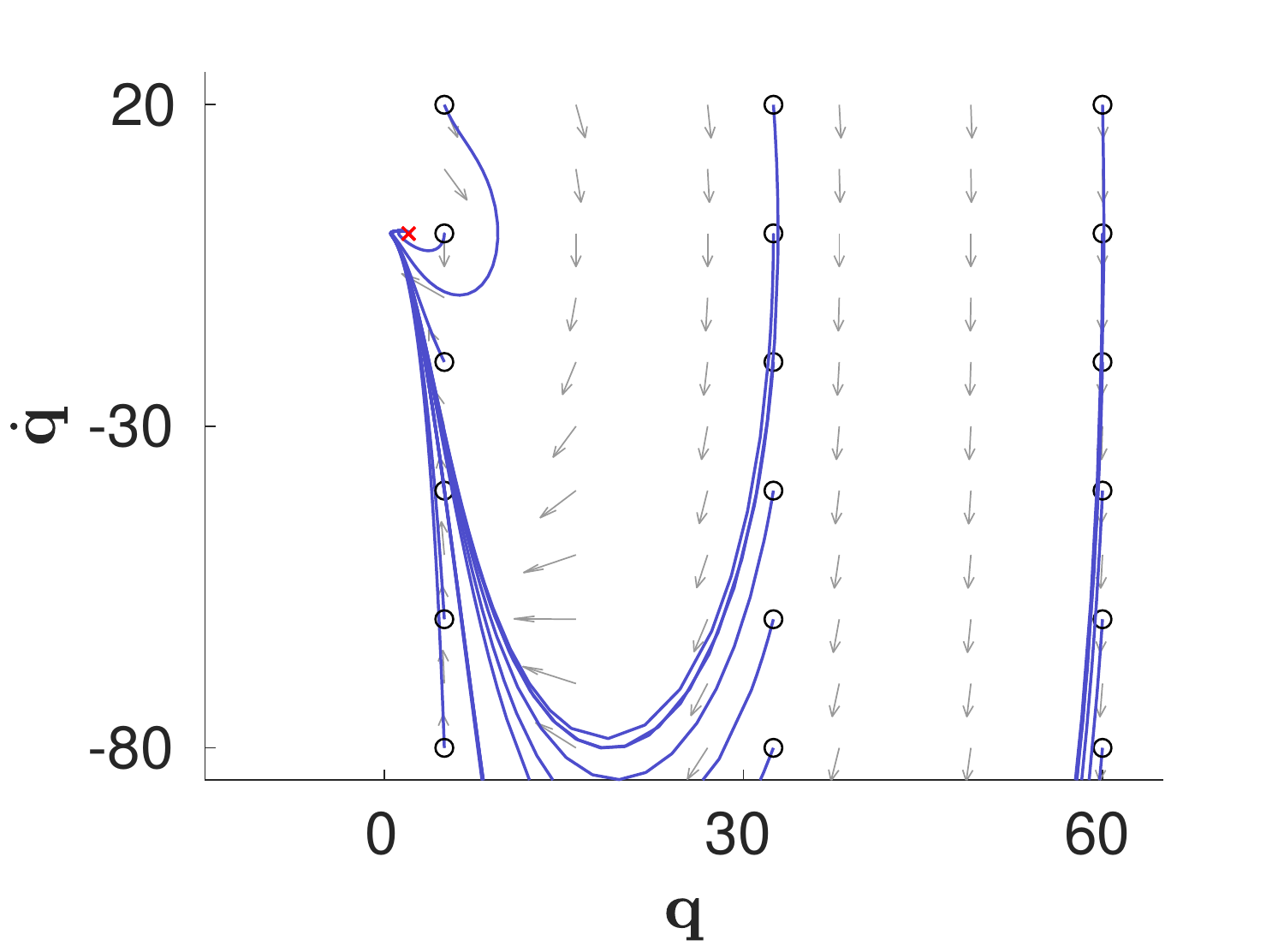}
	}

	\caption{\small Phase portraits (gray) and integral curves (blue; from black circles to red crosses) of 1D example. (a) Desired behavior. 
		(b) With curvature terms. (c) Without curvature terms. (d) Without curvature terms but with nonlinear damping.}
	\label{fig:1d}
\end{figure*}

\begin{figure*}[h]
	\centering
	\hspace{-4mm}
	\subfloat[\label{fig:2d_obs_nocorr}]{
		\includegraphics[trim={40 5 130 15},clip,height=0.37\columnwidth,keepaspectratio]{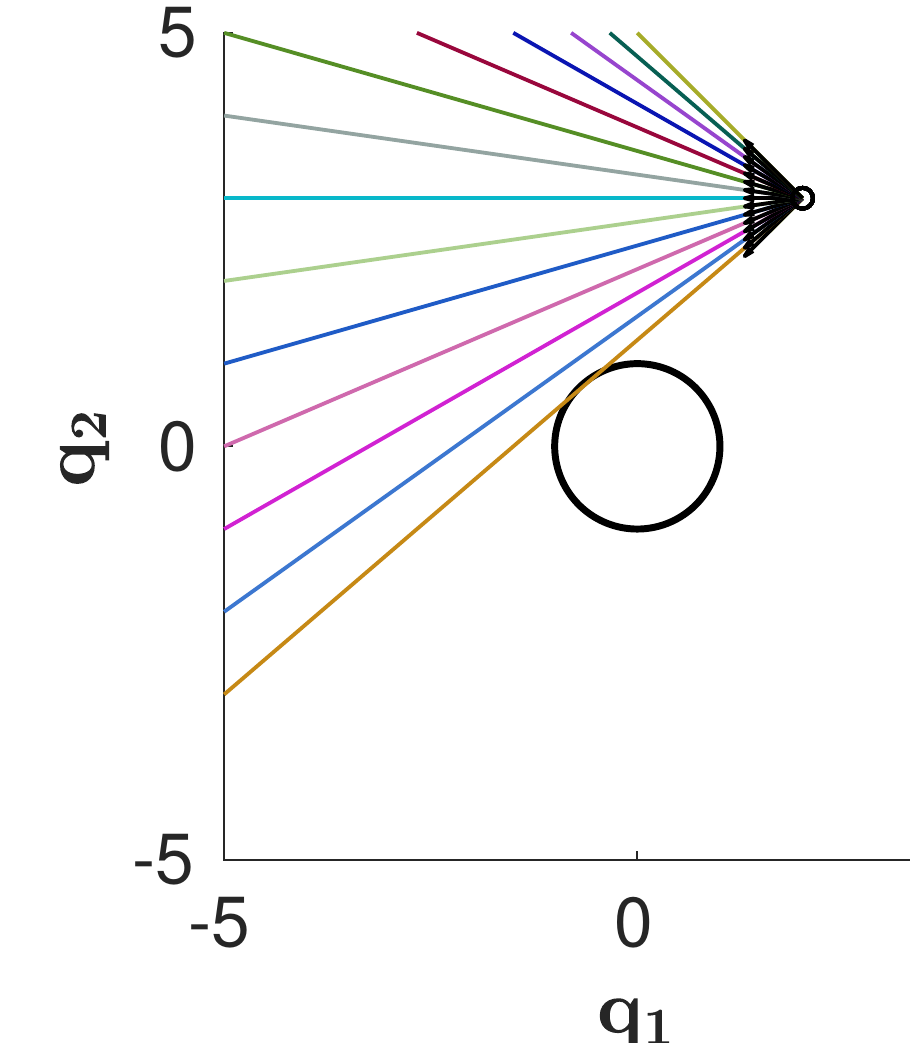}
	}\hspace{-4mm}
	\subfloat[\label{fig:2d_obs}]{
		\includegraphics[trim={40 5 130 15},clip,height=0.37\columnwidth,keepaspectratio]{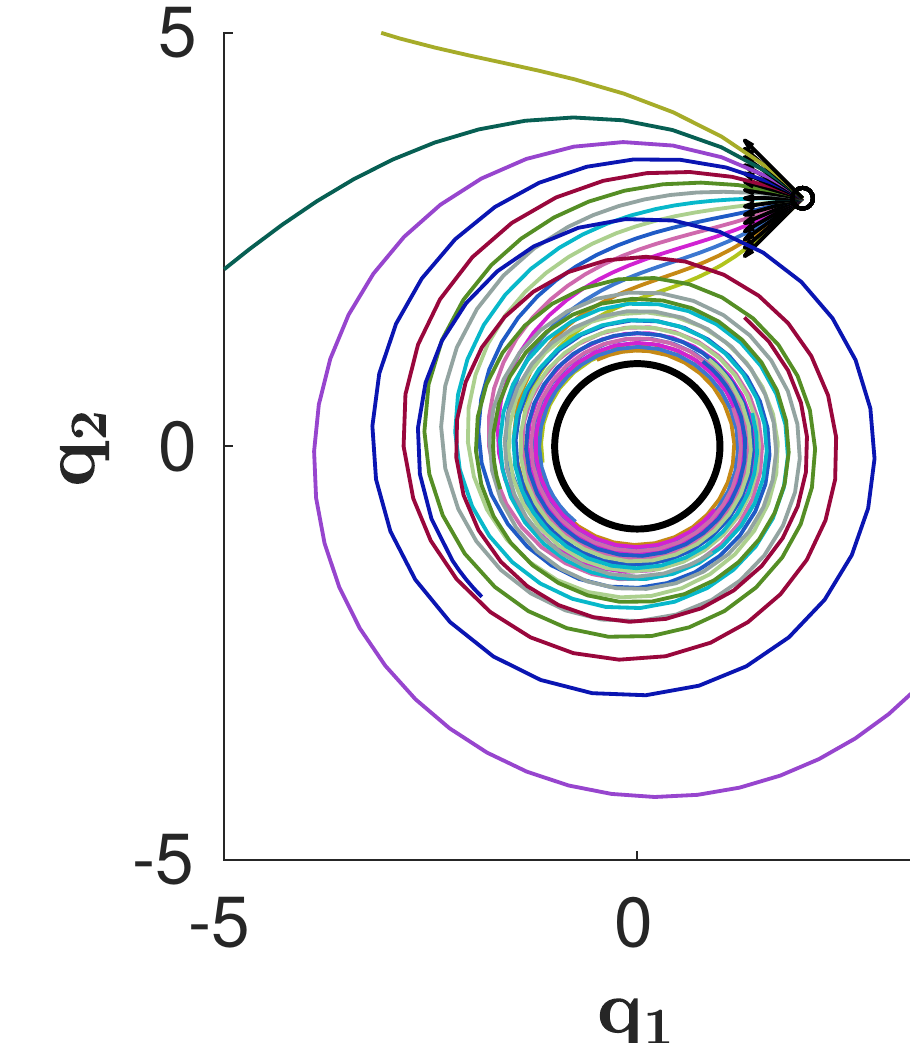}
	}\hspace{-3mm}
	\subfloat[\label{fig:2d_obs_pot_nocorr}]{
		\includegraphics[trim={40 5 130 15},clip,height=0.37\columnwidth,keepaspectratio]{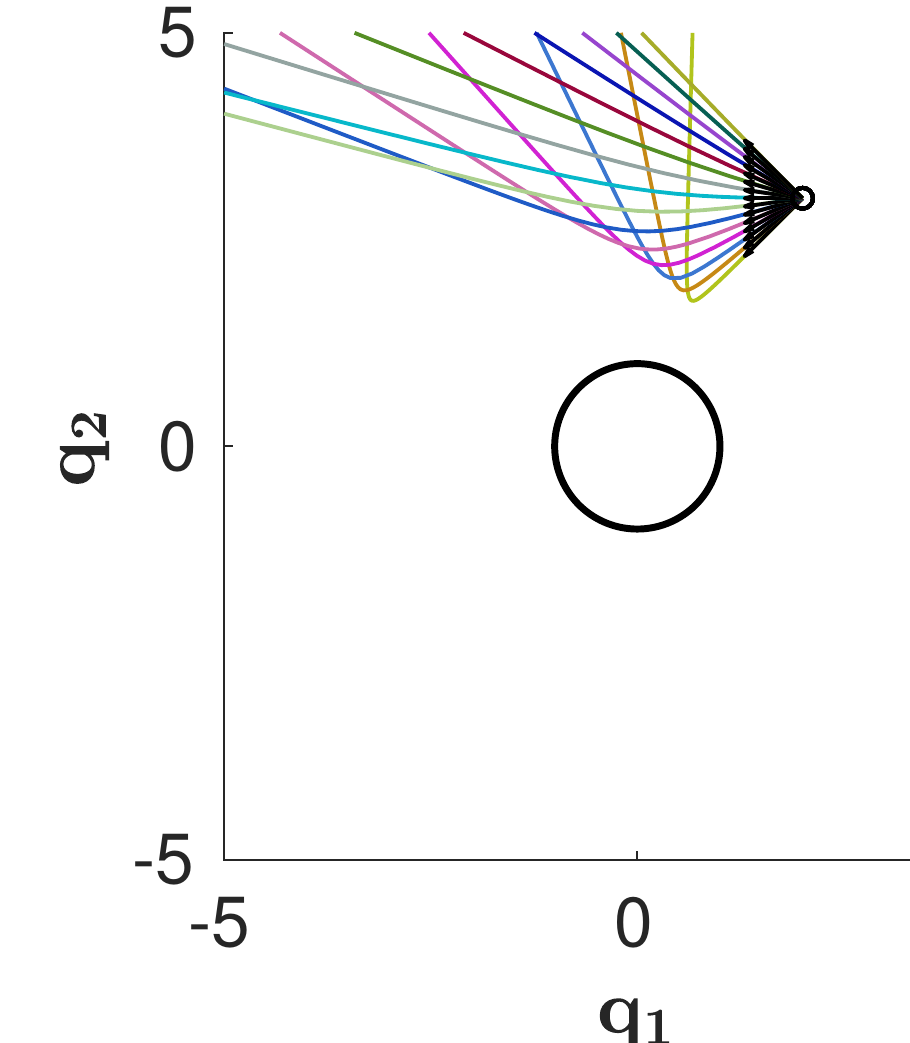}
	}\hspace{-4mm}
	\subfloat[\label{fig:2d_obs_pot}]{
		\includegraphics[trim={40 5 130 15},clip,height=0.37\columnwidth,keepaspectratio]{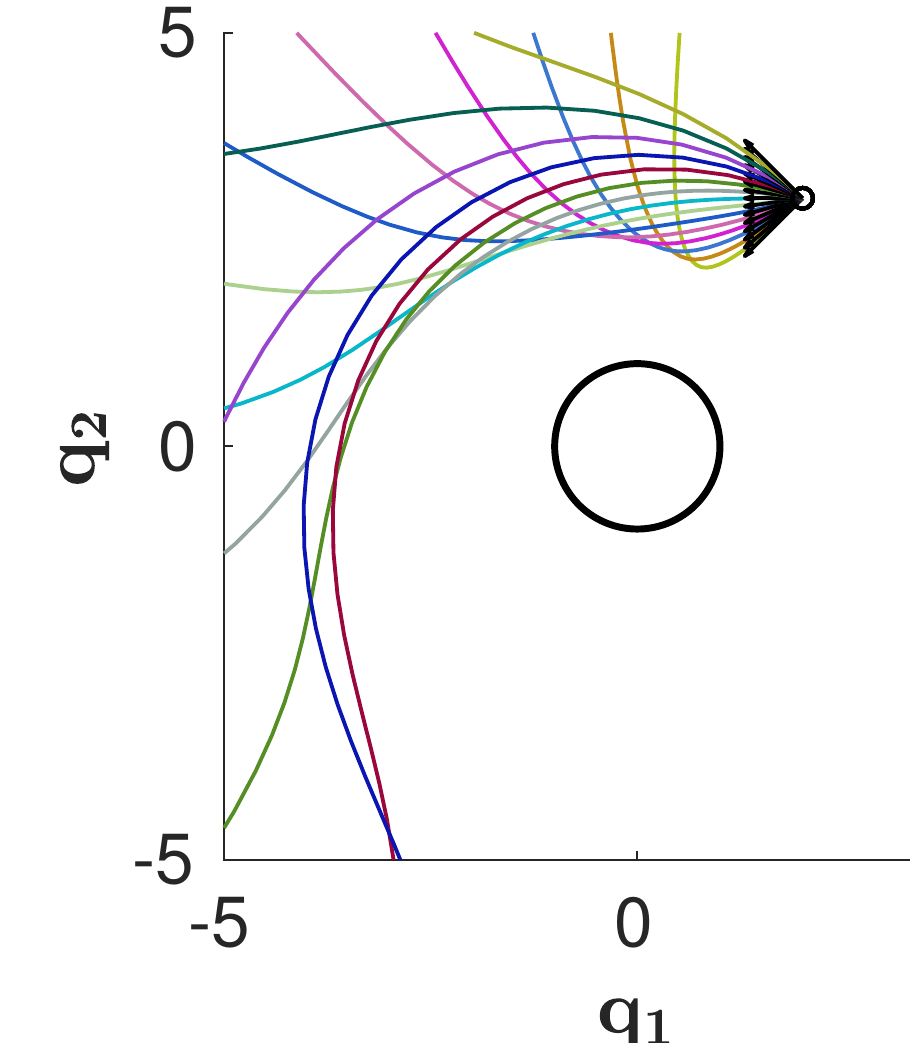}
	}\hspace{-4mm}
	\subfloat[\label{fig:2d_full}]{
		\includegraphics[trim={40 5 130 15},clip,height=0.37\columnwidth,keepaspectratio]{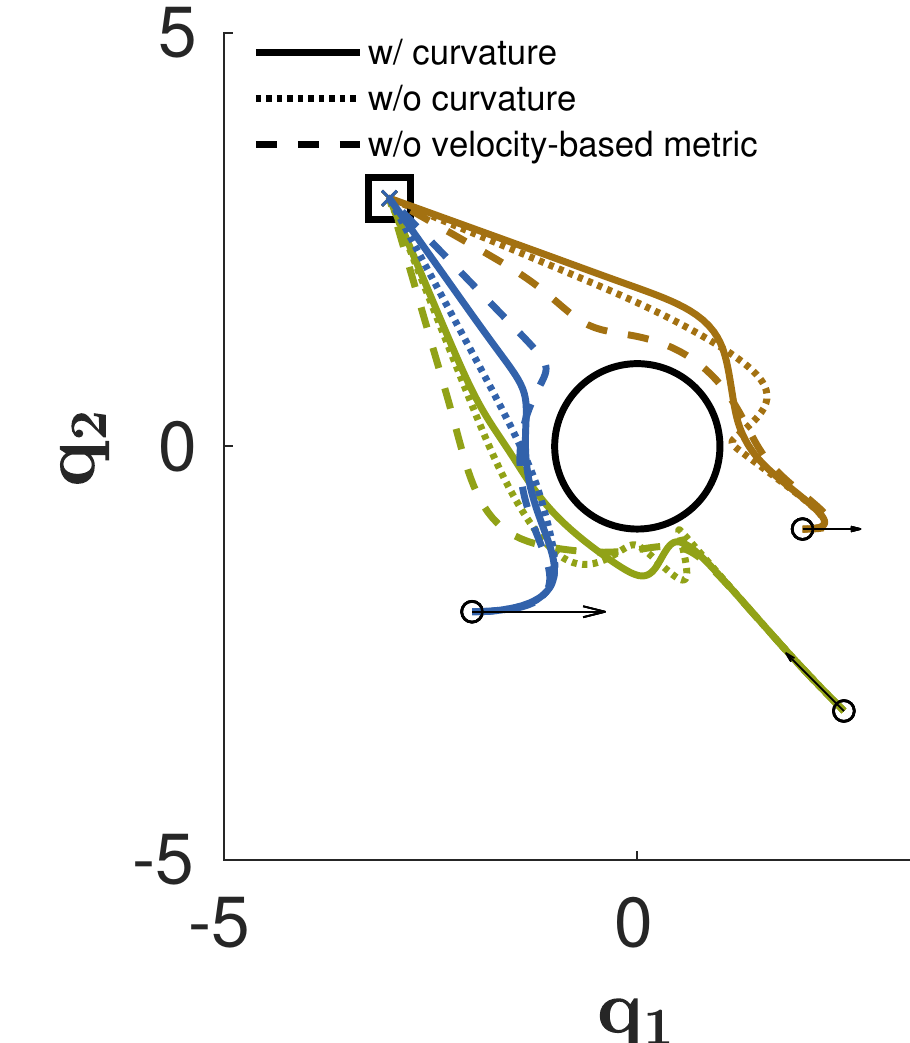}
	}\hspace{-4mm}
	\subfloat[\label{fig:2d_energy}]{
		\includegraphics[trim={0 2 130 15},clip,height=0.37\columnwidth,keepaspectratio]{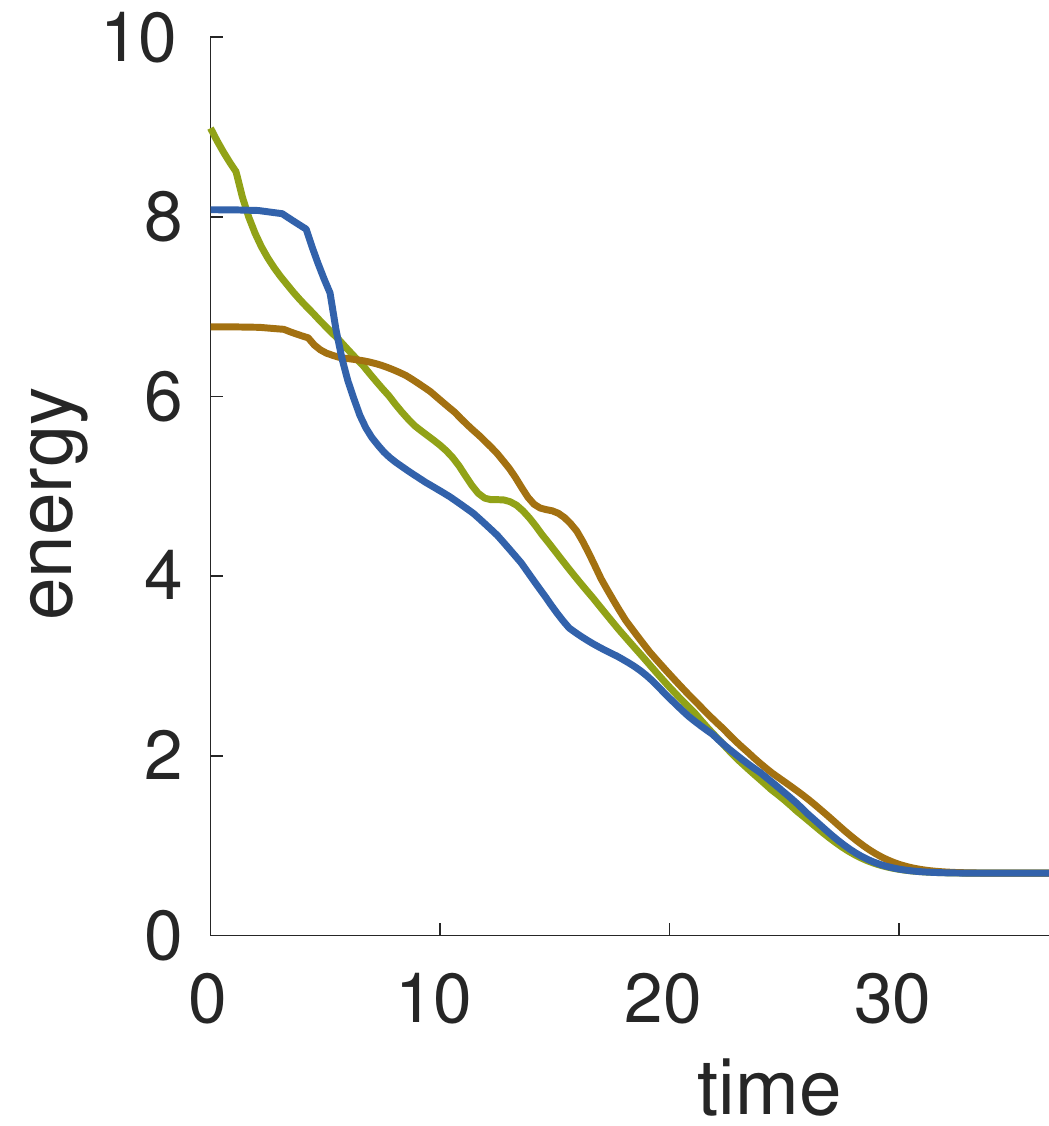}
	}
	\caption{\small
		2D example; initial positions (small circle) and velocities (arrows). (a-d) Obstacle (circle) avoidance: (a) w/o curvature terms and w/o potential. (b) w/ curvature terms and w/o potential. (c) w/o curvature terms and w/ potential. (d) w/ curvature terms and w/ potential. (e) Combined obstacle avoidance and goal (square) reaching. (f) The change of Lyapunov function in \eqref{eq:Lypunov candidate} over time along the trajectories in (e). }
	\label{fig:2DOrbits}
\end{figure*}

\subsection{Controlled Experiments}
\label{sec:1DExample}
\subsubsection{1D Example}
Let $\q \in \R$. We consider a barrier-type task map $\x = 1/\q$ and define a GDS in~\eqref{eq:GDS} with $\G = 1$, $\Phi(\x) = \frac{1}{2}(\x - \x_0)^2$, and $\B = (1 + 1/\x)$, where $\x_0 > 0$. Using the GDS, we can define an RMP $[- \nabla_\x \Phi - \Bb\xd - \bm\xi_{\G}, \M]^\R$, where
$\M$ and $\bm\xi_{\G}$ are defined according to Section~\ref{sec:GDS}.
We use this example to study the effects of $\dot{\J}\qd$ in \pullback~\eqref{eq:natural pullback}, where we define $\J = \partial_\q \x$. Fig.~\ref{fig:1d} compares the desired behavior (Fig.~\ref{fig:1d_z}) and the behaviors of correct/incorrect \pullback. If \pullback is performed correctly with $\Jd \qd$, the behavior matches the designed one (Fig.~\ref{fig:1d_x}). By contrast, if $\Jd \qd$ is ignored, the observed behavior becomes inconsistent and unstable (Fig.~\ref{fig:1d_alpha1}).
While the instability of neglecting $\dot{\J}\qd$ can be recovered with a damping $\B = (1 + \frac{\xd^2}{\x})$ nonlinear in $\xd$ (suggested in~\cite{lo2016virtual}), the behavior remains inconsistent (Fig.~\ref{fig:1d_alpha1_damp}).

\subsubsection{2D Example}
We consider a 2D goal-reaching task with collision avoidance and
study the effects of velocity dependent metrics.
First, we define an RMP (a GDS as in Section~\ref{sec:example RMPs}) in $\x = d(\q)$ (the 1D task space of the distance to the obstacle). We pick a metric $\G(\x,\xd) = w(\x) u(\xd)$, where $w(\x) = 1/\x^4$ increases if the particle is \emph{close} to the obstacle 
and $u(\xd) = \epsilon + \min(0, \xd) \xd$ (where $\epsilon \geq 0$), increases if it moves \emph{towards} the obstacle.
As this metric is non-constant, the GDS has curvature terms $\bm\Xi_{\G} = \frac{1}{2}\xd w(\x) \partial_\xd u(\xd)$ and $\bm\xi_{\G} = \frac{1}{2}\xd^2 u(\xd) \partial_\x w(\x)$.  
These curvature terms along with $\Jd \qd$ produce an acceleration that lead to
natural obstacle avoidance behavior, coaxing the system toward isocontours of the obstacle (Fig.~\ref{fig:2d_obs}).
On the other hand, when the curvature terms are ignored, 
the particle travels in straight lines with constant velocity (Fig.~\ref{fig:2d_obs_nocorr}).
To define the full collision avoidance RMP, we introduce a barrier-type potential $\Phi(\x) = \frac{1}{2}\alpha w(\x)^2$ to create extra repulsive forces, where $\alpha\geq 0$. A comparison of the curvature effects in this setting is shown in Fig.~\ref{fig:2d_obs_pot_nocorr} and~\ref{fig:2d_obs_pot} (with $\alpha = 1$).
Next, we use \flow to combine the collision avoidance RMP above (with $\alpha=0.001$) and an attractor RMP.
Let $\q_g$ be the goal. The attractor RMP is a GDS in the task space $\y = \q -
\q_g$ with a metric $w(\y) \I$, a damping $\eta w(\y) \I$, and a potential that
is zero at $\y=0$, where $\eta > 0$ (\blueold{see \cite[Appendix D]{cheng2018rmpflowarxiv}}).
Fig.~\ref{fig:2d_full} shows the trajectories of the combined RMP. The combined non-constant metrics generate a behavior that transitions smoothly towards the goal while heading away from the obstacle. When the curvature terms are ignored (for both RMPs), the trajectories oscillate 
near the obstacle. In practice, this can result in jittery behavior on manipulators. 
When the metric is not velocity-based ($\G(\x) = w(\x)$) the behavior is less efficient in breaking free from the obstacle to go toward the goal.
\red{Finally, we show the change of Lyapunov function~\eqref{eq:Lypunov candidate} over time along these trajectories in \cref{fig:2d_energy} as verification of our theory.
}

\subsection{System Experiments}

\begin{figure}[h]
	\centering
	\includegraphics[width=0.9\linewidth]{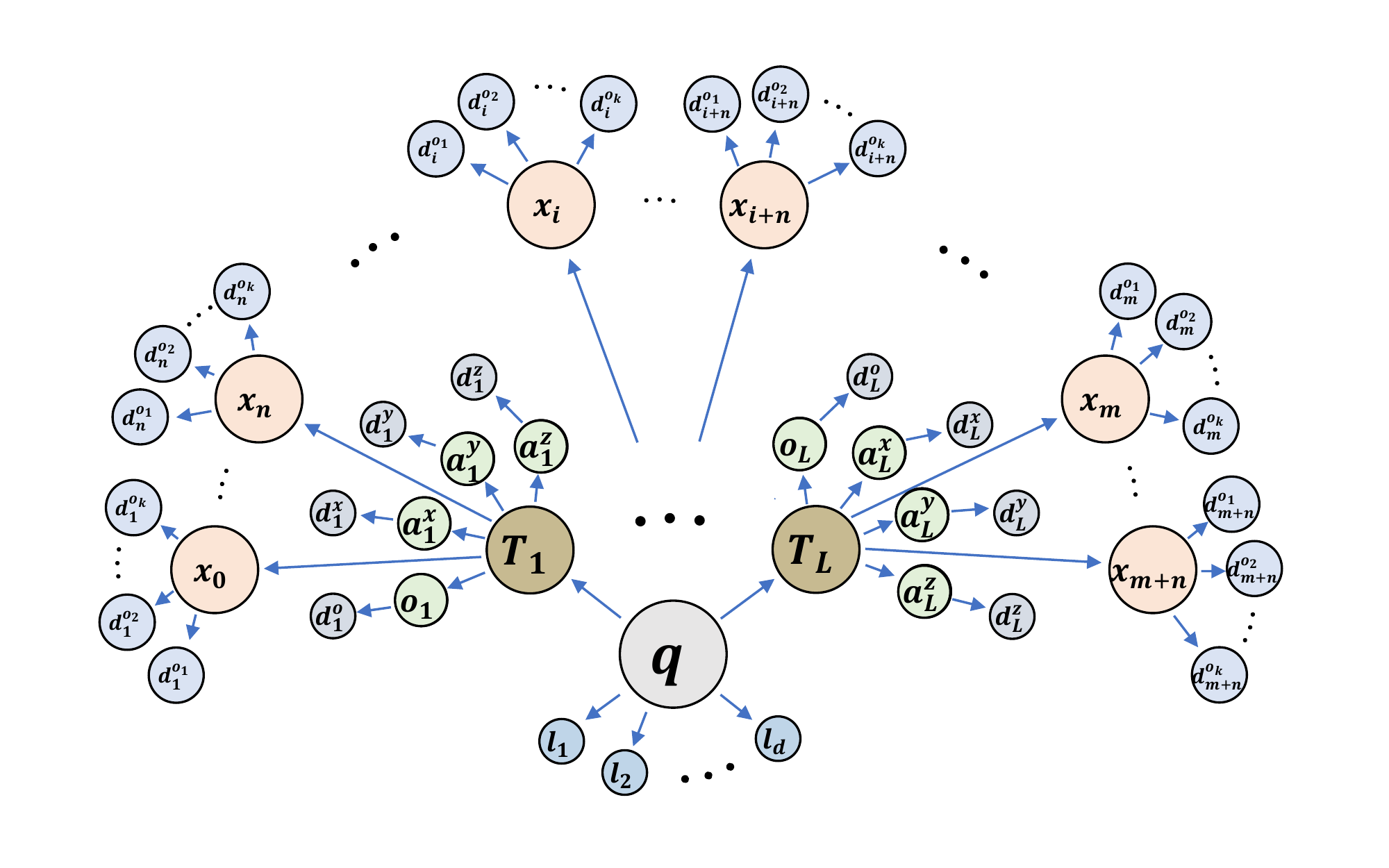}
	\caption{\small This figure depicts
		the tree of task maps used in the experiments.
		See Section~\ref{sec:TaskMapTree} for details.
	}
	\label{fig:rmpflow_taskmap_tree}
\end{figure}

\subsubsection{Task map and its Tree Structure} \label{sec:TaskMapTree}

\cref{fig:rmpflow_taskmap_tree}
depicts the tree of task maps used in the full-robot experiments.
The chosen structure emphasizes potential for parallelization over fully
exploiting the recursive nature of the kinematic chain, treating each link
frame as just one forward kinematic map step from the configuration
space.\footnote{We could possibly have saved some computation by defining the
	forward kinematic maps recursively as $(\mT_{i+1},\q_{i+1},\ldots,\q_d) =
	\psi_i(\mT_i,\q_i,\ldots,\q_d)$.} The configuration space $\q$ is linked to $L$
link frames $\mT_1,\ldots,\mT_L$ through the robot's forward kinematics (the details of tasks will be described later on for each individual experiment). Each
frame has 4 frame element spaces: the origin $o_i$ and each of the axes
$\ma_i^x,\ma_i^y,\ma_i^z$, with corresponding distance spaces to targets
$d_i^o,d_i^x,d_i^y,d_i^z$ (if they are active). Additionally, there are a
number of obstacle control points $\x_j$ distributed across each of the links,
each with $k$ associated distance spaces $d_j^{o_1},\ldots,d_j^{o_k}$, one
for each obstacle $o_1,\ldots,o_k$. Finally, for each dimension of the
configuration space there's an associated joint limit space $l_1,\ldots,l_d$.

\begin{figure*}[t]
	\centering
	\begin{tabular}{cccc}
		\includegraphics[height=0.3\columnwidth]{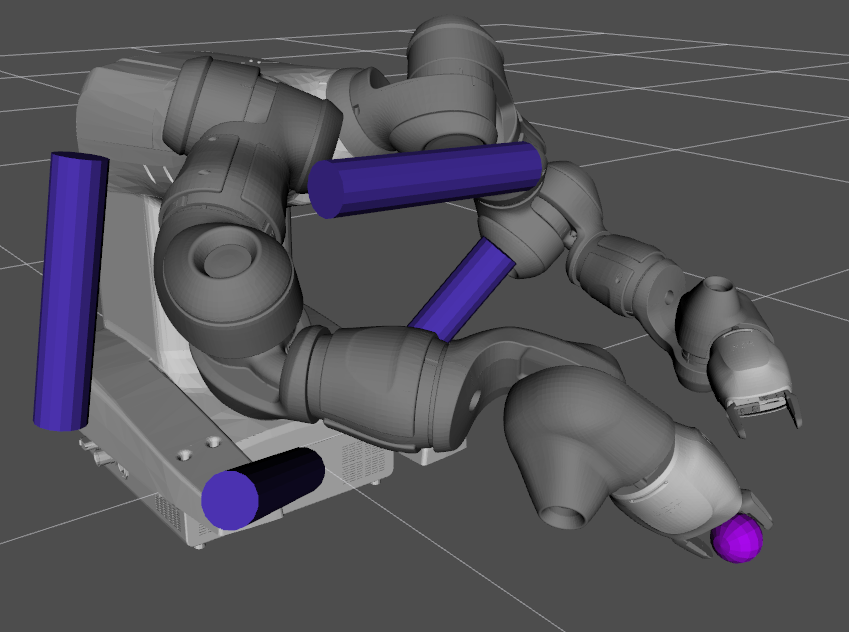} &
		\includegraphics[height=0.3\columnwidth]{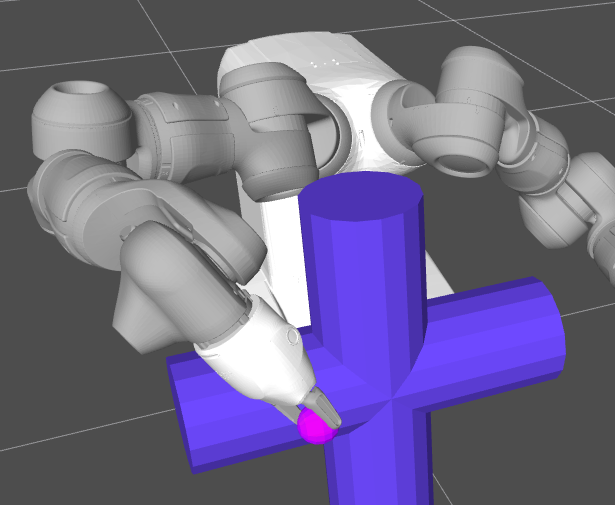} &
		\includegraphics[height=0.3\columnwidth]{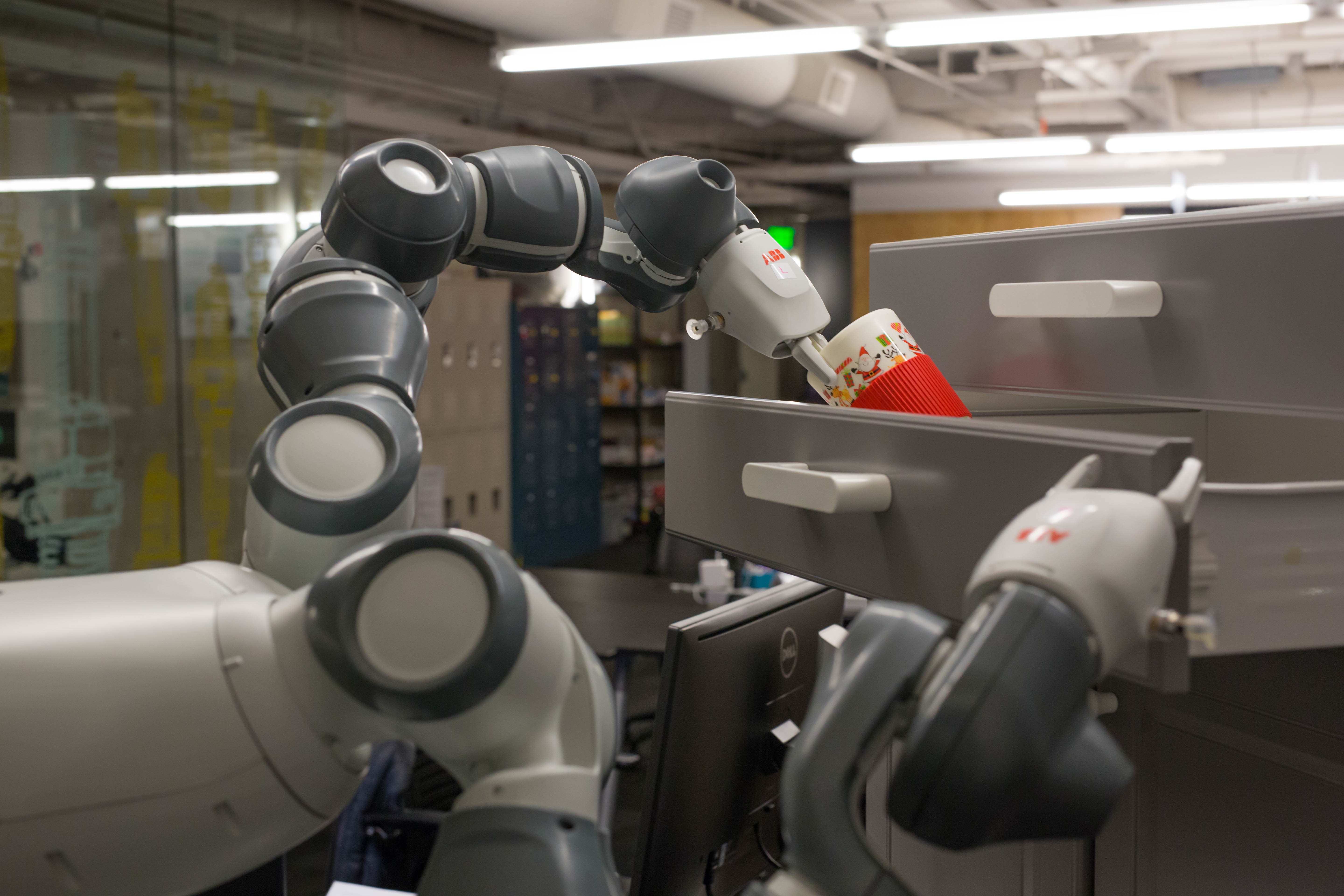} &
		\includegraphics[height=0.3\columnwidth]{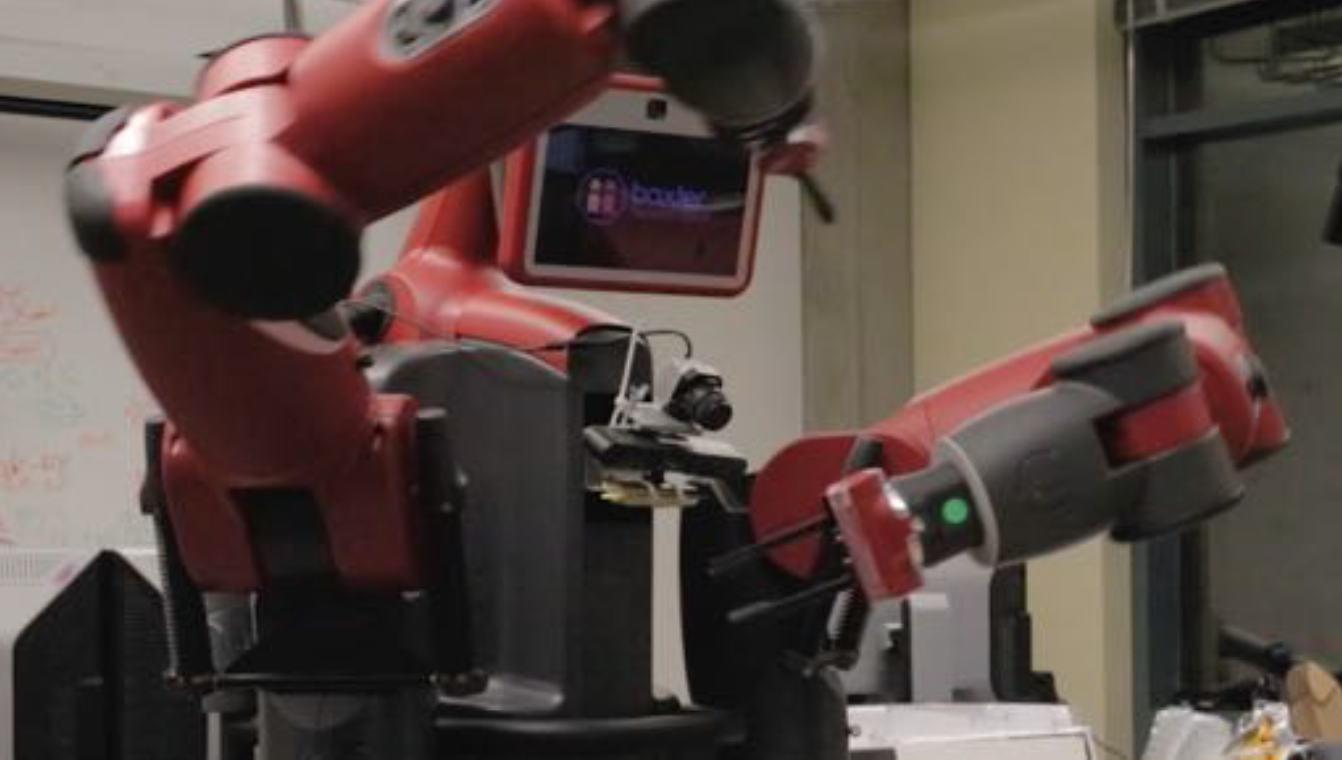} \\
		\multicolumn{2}{c}{simulated worlds} &
		\multicolumn{2}{c}{real-world experiments}
	\end{tabular}
	\caption{\small Two of the six simulated worlds in the reaching experiments (left), and
		the two physical dual-arm
		platforms in the full system experiment (right). }
	\label{fig:robots}
\end{figure*}
\begin{figure*}[h]
	\centering
	\includegraphics[trim={95 0 100 10},clip,width=0.7\textwidth,keepaspectratio]{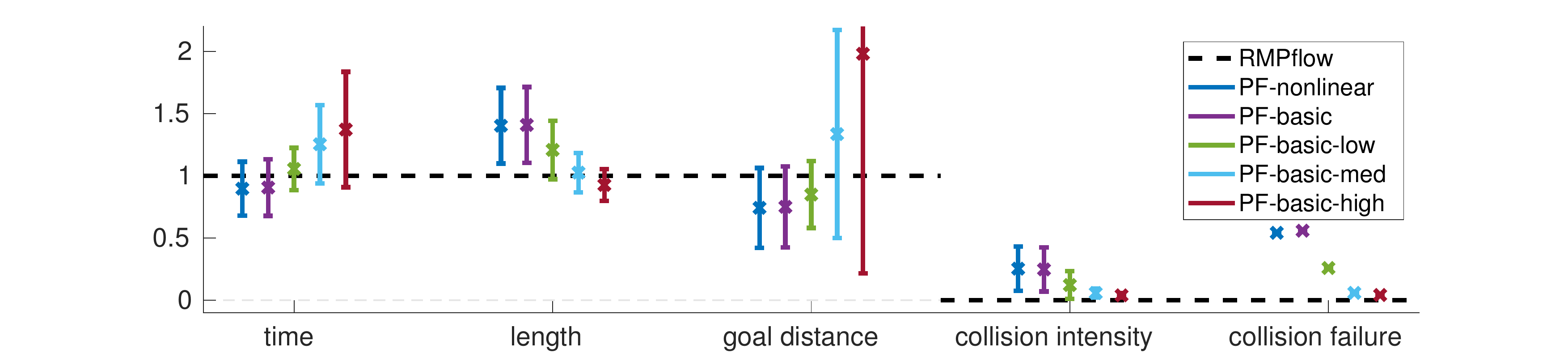}
	\caption{\small Results for reaching experiments. Though some methods achieve a shorter goal distance than \flow in successful trials, they end up in collision in most the trials. }
	\label{fig:reach}
\end{figure*}

\subsubsection{Reaching-through-clutter Experiments}

We set up a collection of clutter-filled environments with cylindrical
obstacles of varying sizes in simulation as depicted in Fig.~\ref{fig:robots}, and tested the performance of \flow and two potential
field methods on a modeled ABB YuMi robot.

\textbf{Compared methods:}
\begin{enumerate}[label=(\roman*)]
\item {\flow:} We implement \flow using the RMPs  in Section~\ref{sec:example RMPs} and
  detailed in \blueold{\cite[Appendix D]{cheng2018rmpflowarxiv}}. In particular, we place collision-avoidance
  controllers on distance spaces $s_{ij} = d_j(\x_i)$, where $j=1,\ldots,m$ indexes
  the world obstacle $o_j$ and $i=1,\ldots,n$ indexes the $n$ control point along the robot's
  body. Each collision-avoidance controller uses a weight function $w_o(\x)$ that
  ranges from $0$ when the robot is far from the obstacle to $w_o^\mathrm{max}\gg 0$ when the robot is in contact with the obstacle's surface.
  Similarly, the attractor potential uses a weight function $w_a(\x)$ that
  ranges from $w_a^{\mathrm{min}}$ far from the target to $w_a^{\mathrm{max}}$
  close to the target.
\item {PF-basic:} This variant is a basic implementation of obstacle
  avoidance potential fields with dynamics shaping. We use the
  RMP framework to implement this variant by placing collision-avoidance
  controllers on the same body control points used in \flow but with isotropic
  metrics of the form
  $\G_o^\mathrm{basic}(\x) = w_o^\mathrm{max}\I$ for each control point, with
  $w_o^\mathrm{max}$ matching
  the value \flow uses. Similarly, the attractor uses the same attractor potential
  as \flow, but with a constant isotropic metric with the form
  $\G_a^\mathrm{basic}(\x) = w_a^\mathrm{max}\I$.
\item {PF-nonlinear:} This variant matches PF-basic in construction, except
  it uses a \textit{nonlinear} isotropic metrics of the form
  $\G_o^\mathrm{nlin}(\x_i) = w_o(\x)\I$ and $\G_a^\mathrm{nlin}(\x_i) = w_a(\x)\I$
  for obstacle-avoidance and attraction, respectively, using weight functions
  matching \flow.
\end{enumerate}

\textbf{A note on curvature terms:} PF-basic uses constant metrics, so has no curvature
terms; PF-nonlinear has nontrivial curvature terms arising from the spatially
varying metrics, but we ignore them here to match common practice from
the operational space control literature.

\textbf{Parameter scaling of PF-basic:} Isotropic metrics do not express
spacial directionality toward obstacles, and that leads to
an inability of the system to effectively trade off the
competing controller requirements. That conflict results in more collisions and
increased instability. We,
therefore,
compare PF-basic under these baseline metric weights (matching \flow)
with variants that
incrementally strengthen collision avoidance controllers and C-space postural
controllers ($f_\mathcal{C}(\q, \qd) = \gamma_p(\q_0 - \q) - \gamma_d\qd$)
to improve these performance measures in the experiment.
We use the following weight scalings (first entry denotes the obstacle
metric scalar, and the second entry denotes the C-space metric scalar):
``low'' $(3, 10)$,
``med'' $(5, 50)$, and
``high'' $(10, 100)$.

\textbf{Environments:}
We run each of these variants on $6$ obstacle environments with $20$ randomly
sampled target locations each distributed on the opposite side
of the obstacle field from the robot. Three of the environments use four smaller
obstacles (depicted in panel 3 of Fig.~\ref{fig:robots}), and the remaining
three environments used two large obstacles (depicted in panel 4 of
Fig.~\ref{fig:robots}). Each environment used
the same $20$ targets to avoid implicit sampling bias in target choice.

\textbf{Performance measures:} We report results in Fig.~\ref{fig:reach} in terms of mean and one standard deviation error bars calculated across the $120$ trials for each of the following performance measures:\footnote{There is no guarantee of feasibility in planning
problems in general, so in all cases, we measure performance relative to the performance
of \flow, which is empirically stable and near optimal across these
problems.}
\begin{enumerate}[label=(\roman*)]
\item {\it Time to goal (``time''):} Length of time, in seconds, it takes for the
    robot to reach a convergence state. This convergence state
    is either the target, or its best-effort local minimum. If the system never converges, as in the case of many potential field trials for infeasible problems, the trial times
    out after 5 seconds. This metric measures time-efficiency of the movement.
\item {\it C-space path length (``length''):} This is the total path length $\int\|\qd\|dt$ of the movement through the configuration space across the trial. This metric measures how economical the movement is. In many of the potential-field variants with lower weights, we see significant fighting among the controllers resulting in highly inefficient extraneous motions.
\item {\it Minimal achievable distance to goal (``goal distance''):} Measures how close,
    in meters, the system is able to get to the goal with its end-effector.
\item {\it Percent time in collision for colliding trials (``collision intensity''):}
    Given that a trial has a collision, this metric measures the fraction of time
    the system is in collision throughout the trial. This metric indicates
    the intensity of the collision. Low values indicate short grazing collisions
    while higher values indicate long term obstacle penetration.
\item {\it Fraction of trails with collisions (``collision failure''):} Reports
    the fraction of trials with any collision event. We consider these to be
    collision-avoidance controller failures.
\end{enumerate}

\textbf{Discussion:} In Fig.~\ref{fig:reach}, we see that \flow outperforms each of these variants
significantly, with some informative trends:
\begin{enumerate}[label=(\roman*)]
\item \flow never collides, so its collision intensity and collision failure
    values are $0$.
\item The other techniques, progressing from no scaling of
    collision-avoidance and C-space controller weights to substantial scaling,
    show a profile of substantial collision in the beginning to fewer (but
    still non-zero) collision events in the end.
    But we note that improvement in collision-avoidance
    is achieved at the expense of time-efficiency and the robot's
    ability to reach the goal (it is too conservative).
\item Lower weight scaling of both PF-basic and PF-nonlinear
    actually achieve some faster times and better goal
    distances, but that is because the system pushes directly
    through obstacles, effectively
    ``cheating'' during the trial. \flow remains highly economical with its
    best effort reaching behaviors while ensuring the trials remain collision-free.
\item Lower weight scalings of PF-basic are highly uneconomical in their motion
    reflective of their relative instability. As the C-space weights on the posture
    controllers increase, the stability and economy of motion increase, but, again,
    at the expense of time-efficiency and optimality of the final reach.
\item There is little empirical difference between PF-basic and PF-nonlinear
    indicating that the defining feature separating \flow from the potential
    field techniques is its use of a highly nonlinear metric that explicitly
    stretches the space in the direction of  the obstacle as well as in the direction of the velocity
    toward the target. Those stretchings
    penalize deviations in the stretched directions during combination
    with other controllers while allowing variation along orthogonal directions.
    By being more explicit about how controllers should instantaneously trade
    off with one another, \flow is better able to mitigate the otherwise conflicting
    control signals.
\end{enumerate}

\textbf{Summary:} Isotropic metrics do not effectively convey how each collision and
attractor controller should trade off with one another, resulting in a conflict
of signals that obscure the intent of each controller making simultaneous
collision avoidance, attraction, and posture maintenance more difficult.
Increasing the weights of the controllers can improve their effectiveness, but at
the expense of decreased overall system performance. The resulting motions
are slower and less effective in reaching the goal
in spite of more stable behavior and fewer collisions. A key
feature of \flow is its ability to leverage highly nonlinear metrics that
better convey information about how controllers should trade off with one
another, while retaining provable stability guarantees. In combination, these
features result in efficient and economical obstacle avoidance behavior while
reaching toward targets amid clutter.

\subsubsection{System Integration for Real-Time Reactive Motion Generation}

We demonstrate the integrated vision and motion system on two physical
dual arm manipulation platforms: a Baxter robot from Rethink Robotics, and
a YuMi robot from ABB. Footage of our fully integrated
system (see start of Section~\ref{sec:experiments} for the link) depicting tasks such
as pick and place amid clutter, reactive manipulation of a cabinet drawers
and doors with human interaction, \emph{active} leadthrough with collision
controllers running, and pick and place into a cabinet drawer.\footnote{We have also run the RMP
portion of the system on an ABB IRB120 and a dual arm Kuka manipulation
platform with lightweight collaborative arms. Only the two platforms mentioned
here, the
YuMi and the Baxter, which use the full motion and vision integration, are
shown in the video for economy of space.}

This full integrated system, shown in the supplementary video, uses the RMPs
described in Section~\ref{sec:example RMPs} (detailed in
\blueold{\cite[Appendix D]{cheng2018rmpflowarxiv}}) with a slight modification that the curvature
terms are ignored. Instead, we maintain theoretical stability by using sufficient damping terms as described in Section~\ref{sec:1DExample} and by
operating at slower speeds. Generalization of these RMPs between embodiments was anecdotally
pretty consistent, although, as we demonstrate in our experiments, we would
expect more empirical deviation at higher speeds. For these manipulation tasks,
this early version of the system worked well as demonstrated in the video.

For visual perception, we leveraged consumer depth cameras along with two levels
of perceptual feedback:
\begin{enumerate}[label=(\roman*)]
\item {\it Ambient world:} For the Baxter system we create a voxelized
    representation of the unmodeled ambient world, and use distance fields
    to focus the collision controllers on just the closest obstacle points
    surrounding the arms. This methodology is similar in nature
    to \cite{2017_rss_system}, except we found empirically that attending
    to only the closest point to a skeleton representation resulted in
    oscillation in concaved regions where distance functions might result
    in nonsmooth kinks. We mitigate this issue by finding the closest points
    to a \emph{volume} around each control point, effectively smoothing
    over points of nondifferentiability in the distance field.
\item {\it Tracked objects:} We use the Dense Articulated Real-time
    Tracking (DART) system of \cite{Sch15DAR} to track articulated
    objects in real time through manipulations. This system is able
    to track both the robot and environmental objects, such
    as an articulated cabinet, simultaneously to give accurate
    measurements of their relative configuration effectively obviating
    the need for explicit camera-world calibration. As long as the
    system is initialized in the general region of the object locations
    (where for the cabinet and the robot, that would mean even up to
    half a foot of error in translation and a similar scale of error in rotation),
    the DART optimizer will snap to the right configuration when turned
    on. DART sends information about object locations to the motion generation,
    and receives back information about expected joint configurations (priors)
    from the
    motion system generating a robust world representation usable in
    a number of practical real-world manipulation problems.
\end{enumerate}

Each of our behaviors are decomposed as state machines that use visual feedback
to detect transitions, including transitions to reaction states as needed
to implement behavioral robustness. Each arm is represented as a separate robot
for efficiency, receiving real-time information about other arm's
current state enabling coordination. Both arms are programmed simultaneously
using a high level language that provides the programmer a unified view
of the surrounding world and command of both arms.

\section{Conclusion}

We propose an efficient policy synthesis framework, \flow, for generating policies with non-Euclidean behavior, including motion with velocity dependent metrics that are new to the literature.
In design, \flow is implemented as a computational graph, which can geometrically consistently combine subtask policies into a global policy for the robot.
In theory, we provide conditions for stability and show that \flow is intrinsically coordinate-free.
In the experiments, we demonstrate that \flow can generate smooth and natural motion for various tasks, when proper subtask RMPs are specified. Future work is to further relax the requirement on the quality of designing subtask RMPs by introducing learning components into \flow for additional flexibility.

\appendices

\section{Degenerate GDSs} \label{app:degnerate GDSs}

We discuss properties of degenerate GDSs.
Let us recall the GDS
$(\MM, \Gb, \B, \Phi)$ means the differential equation
\begin{align} \label{eq:GDS general}
	\Mb(\x,\xd) \xdd + \bm\xi_{\G}(\x,\xd)  = - \nabla_\x \Phi(\x) - \Bb(\x,\xd)\xd 
\end{align}
where  $\Mb(\x,\xd) = \Gb(\x,\xd) + \bm\Xi_{\G}(\x,\xd)$.
 For degenerate cases, $\Mb(\x,\xd)$ can be singular and~\eqref{eq:GDS general} define rather a family of differential equations. Degenerate cases are not uncommon; for example, the leaf-node dynamics could have $\G$ being only positive semidefinite. 
Having degenerate GDSs does not change the properties that we have proved, but one must be careful about whether differential equation satisfying~\eqref{eq:GDS general} exist.
For example, the existence is handled by the assumption on $\Mb$ in Theorem~\ref{th:consistency} and the assumption on $\Mb_r$ in Corollary~\ref{cr:consistency}. For \flow, we only need that $\Mb_r$ at the root node is non-singular. In other words, the natural-form RMP created by \pullback at the root node can be resolved in the canonical-form RMP for policy execution.
A sufficient and yet practical condition is provided in Theorem~\ref{th:condition on velocity metric}.

\ifCLASSOPTIONcaptionsoff
  \newpage
\fi

\bibliographystyle{IEEEtran}
\bibliography{refs}

\begin{thebibliography}{10}
\providecommand{\url}[1]{#1}
\csname url@samestyle\endcsname
\providecommand{\newblock}{\relax}
\providecommand{\bibinfo}[2]{#2}
\providecommand{\BIBentrySTDinterwordspacing}{\spaceskip=0pt\relax}
\providecommand{\BIBentryALTinterwordstretchfactor}{4}
\providecommand{\BIBentryALTinterwordspacing}{\spaceskip=\fontdimen2\font plus
\BIBentryALTinterwordstretchfactor\fontdimen3\font minus
  \fontdimen4\font\relax}
\providecommand{\BIBforeignlanguage}[2]{{%
\expandafter\ifx\csname l@#1\endcsname\relax
\typeout{** WARNING: IEEEtran.bst: No hyphenation pattern has been}%
\typeout{** loaded for the language `#1'. Using the pattern for}%
\typeout{** the default language instead.}%
\else
\language=\csname l@#1\endcsname
\fi
#2}}
\providecommand{\BIBdecl}{\relax}
\BIBdecl

\bibitem{rimon-ams-1991}
E.~Rimon and D.~Koditschek, ``The construction of analytic diffeomorphisms for
  exact robot navigation on star worlds,'' \emph{Transactions of the American
  Mathematical Society}, vol. 327, no.~1, pp. 71--116, 1991.

\bibitem{RIEMORatliff2015ICRA}
N.~Ratliff, M.~Toussaint, and S.~Schaal, ``Understanding the geometry of
  workspace obstacles in motion optimization,'' in \emph{IEEE International
  Conference on Robotics and Automation (ICRA)}, 2015.

\bibitem{VijayakumarTopologyMotionPlanning2013}
V.~Ivan, D.~Zarubin, M.~Toussaint, T.~Komura, and S.~Vijayakumar,
  ``Topology-based representations for motion planning and generalization in
  dynamic environments with interactions,'' \emph{International Journal of
  Robotics Research (IJRR)}, vol.~32, no. 9-10, pp. 1151--1163, 2013.

\bibitem{Watterson-TrajOptManifolds-RSS-18}
M.~Watterson, S.~Liu, K.~Sun, T.~Smith, and V.~Kumar, ``Trajectory optimization
  on manifolds with applications to {$SO(3)$} and {$\R^3 \times S^2$},'' in
  \emph{Robotics: Science and Systems (RSS)}, 2018.

\bibitem{ToussaintTrajOptICML2009}
M.~Toussaint, ``Robot trajectory optimization using approximate inference,'' in
  \emph{ICML}, 2009, pp. 1049--1056.

\bibitem{LavallePlanningAlgorithms06}
S.~M. LaValle, \emph{Planning Algorithms}.\hskip 1em plus 0.5em minus
  0.4em\relax Cambridge, U.K.: Cambridge University Press, 2006, available at
  http://planning.cs.uiuc.edu/.

\bibitem{KaramanRRTStar2011}
\BIBentryALTinterwordspacing
S.~Karaman and E.~Frazzoli, ``Sampling-based algorithms for optimal motion
  planning,'' \emph{International Journal of Robotics Research (IJRR)},
  vol.~30, no.~7, pp. 846--894, 2011. [Online]. Available:
  \url{http://arxiv.org/abs/1105.1186}
\BIBentrySTDinterwordspacing

\bibitem{GammellBitStar2014}
J.~D. Gammell, S.~S. Srinivasa, and T.~D. Barfoot, ``{Batch Informed Trees
  (BIT*)}: {S}ampling-based optimal planning via the heuristically guided
  search of implicit random geometric graphs,'' in \emph{IEEE International
  Conference on Robotics and Automation (ICRA)}, 2015.

\bibitem{mukadam2017continuous}
M.~Mukadam, J.~Dong, X.~Yan, F.~Dellaert, and B.~Boots, ``Continuous-time
  {G}aussian process motion planning via probabilistic inference,''
  \emph{International Journal of Robotics Research (IJRR)}, vol.~37, no.~11,
  pp. 1319---1340, 2018.

\bibitem{khatib1987unified}
O.~Khatib, ``A unified approach for motion and force control of robot
  manipulators: The operational space formulation,'' \emph{IEEE Journal on
  Robotics and Automation}, vol.~3, no.~1, pp. 43--53, 1987.

\bibitem{Peters_AR_2008}
J.~Peters, M.~Mistry, F.~Udwadia, J.~Nakanishi, and S.~Schaal, ``A unifying
  framework for robot control with redundant {DOF}s,'' \emph{Autonomous
  Robots}, vol.~24, no.~1, pp. 1--12, 2008.

\bibitem{UdwadiaGaussPrincipleControl2003}
\BIBentryALTinterwordspacing
F.~E. Udwadia, ``A new perspective on the tracking control of nonlinear
  structural and mechanical systems,'' \emph{Proceedings of the Royal Society
  of London A: Mathematical, Physical and Engineering Sciences}, vol. 459, no.
  2035, pp. 1783--1800, 2003. [Online]. Available:
  \url{http://rspa.royalsocietypublishing.org/content/459/2035/1783}
\BIBentrySTDinterwordspacing

\bibitem{udwadia1996analytical}
F.~E. Udwadia and R.~E. Kalaba, \emph{Analytical Dynamics: A New
  Approach}.\hskip 1em plus 0.5em minus 0.4em\relax Cambridge University Press,
  1996.

\bibitem{2017_rss_system}
\BIBentryALTinterwordspacing
D.~Kappler, F.~Meier, J.~Issac, J.~Mainprice, C.~Garcia~Cifuentes,
  M.~W{\"u}thrich, V.~Berenz, S.~Schaal, N.~Ratliff, and J.~Bohg, ``Real-time
  perception meets reactive motion generation,'' \emph{IEEE Robotics and
  Automation Letters}, vol.~3, no.~3, pp. 1864--1871, 2018. [Online].
  Available: \url{https://arxiv.org/abs/1703.03512}
\BIBentrySTDinterwordspacing

\bibitem{mukadam2017approximately}
M.~Mukadam, C.-A. Cheng, X.~Yan, and B.~Boots, ``Approximately optimal
  continuous-time motion planning and control via probabilistic inference,'' in
  \emph{IEEE International Conference on Robotics and Automation}, 2017, pp.
  664--671.

\bibitem{bullo2004geometric}
F.~Bullo and A.~D. Lewis, \emph{Geometric control of mechanical systems:
  modeling, analysis, and design for simple mechanical control systems}.\hskip
  1em plus 0.5em minus 0.4em\relax Springer Science \& Business Media, 2004,
  vol.~49.

\bibitem{ratliff2018riemannian}
N.~D. Ratliff, J.~Issac, D.~Kappler, S.~Birchfield, and D.~Fox, ``Riemannian
  motion policies,'' \emph{arXiv preprint arXiv:1801.02854}, 2018.

\bibitem{walker1982efficient}
M.~W. Walker and D.~E. Orin, ``Efficient dynamic computer simulation of robotic
  mechanisms,'' \emph{Journal of Dynamic Systems, Measurement, and Control},
  vol. 104, no.~3, pp. 205--211, 1982.

\bibitem{cheng2018rmpflow}
C.-A. Cheng, M.~Mukadam, J.~Issac, S.~Birchfield, D.~Fox, B.~Boots, and
  N.~Ratliff, ``{RMP}flow: A computational graph for automatic motion policy
  generation,'' in \emph{The 13th International Workshop on the Algorithmic
  Foundations of Robotics}, 2018.

\bibitem{cheng2018rmpflowarxiv}
------, ``Rmpflow: A computational graph for automatic motion policy
  generation,'' \emph{arXiv preprint arXiv:1811.07049}, 2018.

\bibitem{meng2019NeuralAutoNavigation}
X.~Meng, N.~Ratliff, Y.~Xiang, and D.~Fox, ``Neural autonomous navigation with
  riemannian motion policy,'' in \emph{IEEE International Conference on
  Robotics and Automation (ICRA)}, 2019.

\bibitem{meng2020TopologicalNavigation}
------, ``Scaling local control to large-scale topological navigation,'' in
  \emph{IEEE International Conference on Robotics and Automation (ICRA)}, 2020.

\bibitem{wingo2020Extendings}
B.~Wingo, C.~Cheng, M.~Murtaza, M.~Zafar, and S.~Hutchinson, ``Extending
  riemmanian motion policies to a class of underactuated
  wheeled-inverted-pendulum robots,'' in \emph{IEEE International Conference on
  Robotics and Automation}, 2020.

\bibitem{paxton2019RLDS}
C.~Paxton, N.~Ratliff, C.~Eppner, and D.~Fox, ``Representing robot task plans
  as robust logical-dynamical systems,'' in \emph{IEEE/RSJ International
  Conference on Intelligent Robots and Systems (IROS)}, 2019.

\bibitem{sutanto2019TactileServoing}
G.~Sutanto, N.~Ratliff, B.~Sundaralingam, Y.~Chebotar, Z.~Su, A.~Handa, and
  D.~Fox, ``Learning latent space dynamics for tactile servoing,'' in
  \emph{IEEE International Conference on Robotics and Automation (ICRA)}, 2019.

\bibitem{li2019MultiAgentRMPsArXiv}
\BIBentryALTinterwordspacing
A.~Li, M.~Mukadam, M.~Egerstedt, and B.~Boots, ``Multi-objective policy
  generation for multi-robot systems using riemannian motion policies,''
  \emph{CoRR}, vol. abs/1902.05177, 2019. [Online]. Available:
  \url{http://arxiv.org/abs/1902.05177}
\BIBentrySTDinterwordspacing

\bibitem{li2019LyapunovRMPs}
\BIBentryALTinterwordspacing
A.~Li, C.~Cheng, B.~Boots, and M.~Egerstedt, ``Stable, concurrent controller
  composition for multi-objective robotic tasks,'' \emph{CoRR}, vol.
  abs/1903.12605, 2019. [Online]. Available:
  \url{http://arxiv.org/abs/1903.12605}
\BIBentrySTDinterwordspacing

\bibitem{mukadam2019RMPFusion}
M.~Mukadam, C.-A. Cheng, D.~Fox, B.~Boots, and N.~Ratliff, ``Riemannian motion
  policy fusion through learnable lyapunov function reshaping,'' in
  \emph{Conference on Robot Learning (CoRL)}, 2019.

\bibitem{rana2019LearningRmpsFromDemonstration}
M.~A. Rana, A.~Li, H.~Ravichandar, M.~Mukadam, S.~Chernova, D.~Fox, B.~Boots,
  and N.~Ratliff, ``Learning reactive motion policies in multiple task spaces
  from human demonstrations,'' in \emph{Conference on Robot Learning (CoRL)},
  2019.

\bibitem{albu2002cartesian}
A.~Albu-Schaffer and G.~Hirzinger, ``Cartesian impedance control techniques for
  torque controlled light-weight robots,'' in \emph{IEEE International
  Conference on Robotics and Automation (ICRA)}, vol.~1, 2002, pp. 657--663.

\bibitem{sentis2006whole}
L.~Sentis and O.~Khatib, ``A whole-body control framework for humanoids
  operating in human environments,'' in \emph{IEEE International Conference on
  Robotics and Automation (ICRA)}, 2006, pp. 2641--2648.

\bibitem{lo2016virtual}
S.-Y. Lo, C.-A. Cheng, and H.-P. Huang, ``Virtual impedance control for safe
  human-robot interaction,'' \emph{Journal of Intelligent \& Robotic Systems},
  vol.~82, no.~1, pp. 3--19, 2016.

\bibitem{DRCIntegratedSystemTodorov2013}
T.~Erez, K.~Lowrey, Y.~Tassa, V.~Kumar, S.~Kolev, and E.~Todorov, ``An
  integrated system for real-time model-predictive control of humanoid
  robots,'' in \emph{IEEE/RAS International Conference on Humanoid Robots},
  2013.

\bibitem{OptimalControlTheoryTodorov06}
E.~Todorov, ``Optimal control theory,'' \emph{In Bayesian Brain: Probabilistic
  Approaches to Neural Coding}, pp. 269--298, 2006.

\bibitem{liegeois1977automatic}
A.~Liegeois, ``Automatic supervisory control of the configuration and behaviour
  of multibody mechanisms,'' \emph{IEEE Transactions on Systems, Man and
  Cybernetics}, vol.~7, no.~12, pp. 868--871, 1977.

\bibitem{RatliffCHOMP2009}
N.~Ratliff, M.~Zucker, J.~A.~D. Bagnell, and S.~Srinivasa, ``{CHOMP}: Gradient
  optimization techniques for efficient motion planning,'' in \emph{IEEE
  International Conference on Robotics and Automation (ICRA)}, 2009.

\bibitem{Mukadam-ICRA-16}
M.~Mukadam, X.~Yan, and B.~Boots, ``Gaussian process motion planning.'' in
  \emph{IEEE Conference on Robotics and Automation (ICRA)}, 2016.

\bibitem{Dong-RSS-16}
J.~Dong, M.~Mukadam, F.~Dellaert, and B.~Boots, ``Motion planning as
  probabilistic inference using {G}aussian processes and factor graphs,'' in
  \emph{Robotics: Science and Systems (RSS)}, 2016.

\bibitem{KhatibPotentialFields1985}
O.~Khatib, ``Real-time obstacle avoidance for manipulators and mobile robots,''
  in \emph{IEEE International Conference on Robotics and Automation (ICRA)},
  vol.~2, Mar 1985, pp. 500--505.

\bibitem{flacco2012icra}
F.~Flacco, T.~Kr{\"o}ger, A.~D. Luca, and O.~Khatib, ``A depth space approach
  to human-robot collision avoidance,'' in \emph{IEEE International Conference
  on Robotics and Automation (ICRA)}, May 2012, pp. 338--345.

\bibitem{kaldestad2014icra}
K.~B. Kaldestad, S.~Haddadin, R.~Belder, G.~Hovland, and D.~A. Anisi,
  ``Collision avoidance with potential fields based on parallel processing of
  3{D}-point cloud data on the {GPU},'' in \emph{IEEE International Conference
  on Robotics and Automation (ICRA)}, May 2014, pp. 3250--3257.

\bibitem{Nakanishi_IJRR_2008}
J.~Nakanishi, R.~Cory, M.~Mistry, J.~Peters, and S.~Schaal, ``Operational space
  control: A theoretical and empirical comparison,'' \emph{International
  Journal of Robotics Research (IJRR)}, vol.~6, pp. 737--757, 2008.

\bibitem{platt2011multiple}
R.~Platt, M.~E. Abdallah, and C.~W. Wampler, ``Multiple-priority impedance
  control.'' in \emph{IEEE International Conference on Robotics and Automation
  (ICRA)}, 2011, pp. 6033--6038.

\bibitem{IjspeertDMPs2013}
A.~J. Ijspeert, J.~Nakanishi, H.~Hoffmann, P.~Pastor, and S.~Schaal,
  ``Dynamical movement primitives: Learning attractor models for motor
  behaviors,'' \emph{Neural Computation}, vol.~25, no.~2, pp. 328--373, Feb
  2013.

\bibitem{MainpriceWarpingIROS2016}
J.~Mainprice, N.~Ratliff, and S.~Schaal, ``Warping the workspace geometry with
  electric potentials for motion optimization of manipulation tasks,'' in
  \emph{IEEE/RSJ International Conference on Intelligent Robots and Systems
  (IROS)}, oct 2016.

\bibitem{ClassicalMechanicsTaylor05}
J.~R. Taylor, \emph{Classical Mechanics}.\hskip 1em plus 0.5em minus
  0.4em\relax University Science Books, 2005.

\bibitem{NashEmbeddingTheorem1956}
J.~Nash, ``The imbedding problem for {R}iemannian manifolds,'' \emph{Annals of
  Mathematics}, vol.~63, no.~1, pp. 20--63, 1956.

\bibitem{khalil1996noninear}
H.~K. Khalil, ``Noninear systems,'' \emph{Prentice-Hall, New Jersey}, vol.~2,
  no.~5, pp. 5--1, 1996.

\bibitem{lee2009manifolds}
J.~M. Lee, \emph{Manifolds and differential geometry}.\hskip 1em plus 0.5em
  minus 0.4em\relax Graduate Studies in Mathematics, vol. 107, American
  Mathematical Society, 2009.

\bibitem{Featherstone08}
R.~Featherstone, \emph{Rigid Body Dynamics Algorithms}.\hskip 1em plus 0.5em
  minus 0.4em\relax Springer, 2008.

\bibitem{Sch15DAR}
T.~Schmidt, R.~Newcombe, and D.~Fox, ``{DART}: Dense articulated real-time
  tracking with consumer depth cameras,'' \emph{Autonomous Robots}, vol.~39,
  no.~3, 2015.

\end{thebibliography}

\newpage

\begin{IEEEbiography}[{\includegraphics[width=1in,height=1.25in,clip,keepaspectratio]{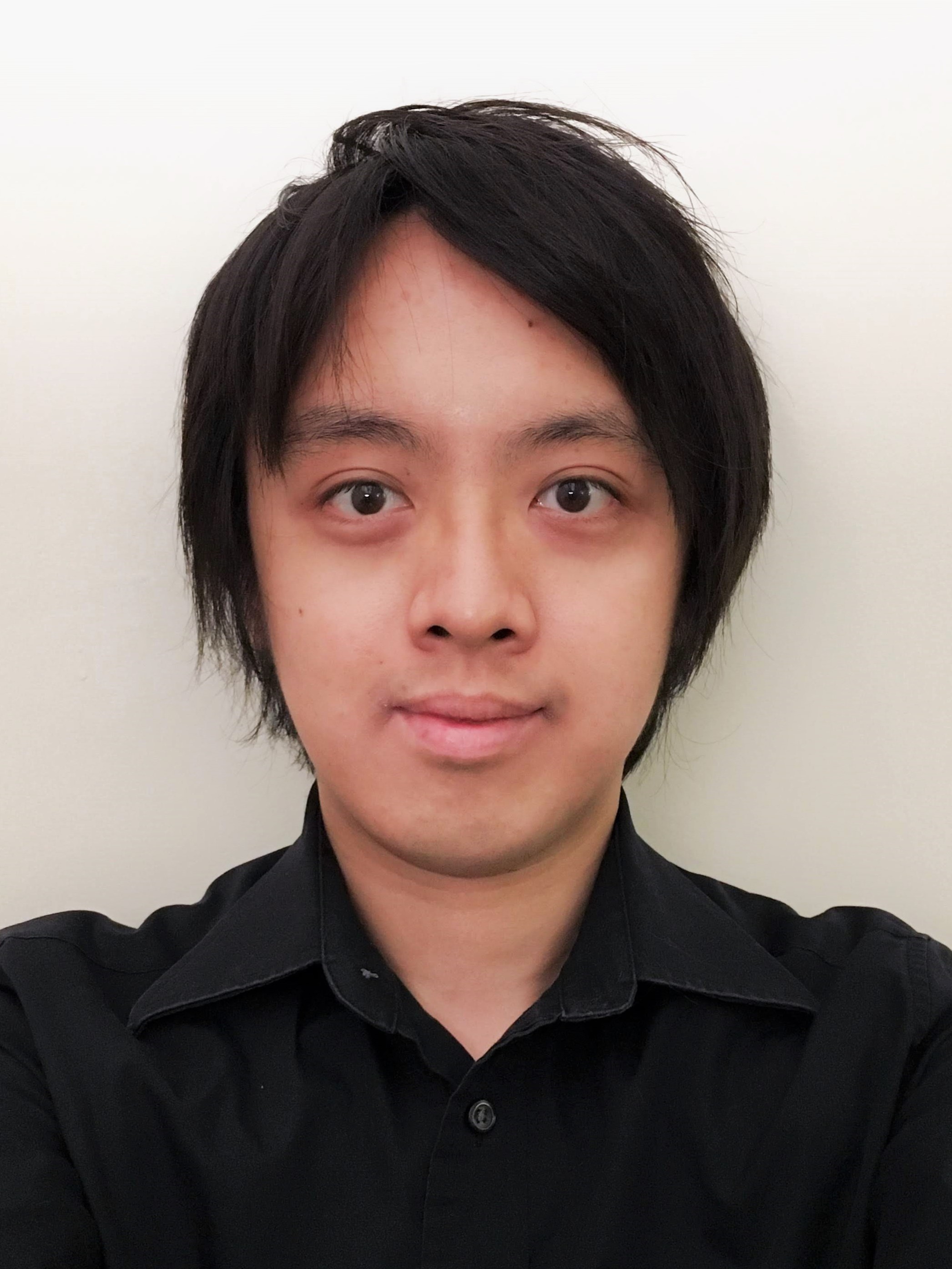}}]{Ching-An Cheng}
is a Robotics Ph.D. student at Institute for Robotics and Intelligent Machines, Georgia Tech. Previously he received M.S. in Mechanical Engineering, B.S. in Electrical Engineering, and B.S. in Mechanical Engineering from National Taiwan University. He is interested in developing theoretical foundations toward efficient and principled robot learning. His research concerns sample efficiency, structural properties, and uncertainties in reinforcement/imitation learning, online learning, and integrated motion planning and control.
\end{IEEEbiography}

\begin{IEEEbiography}[{\includegraphics[width=1in,height=1.25in,clip,keepaspectratio]{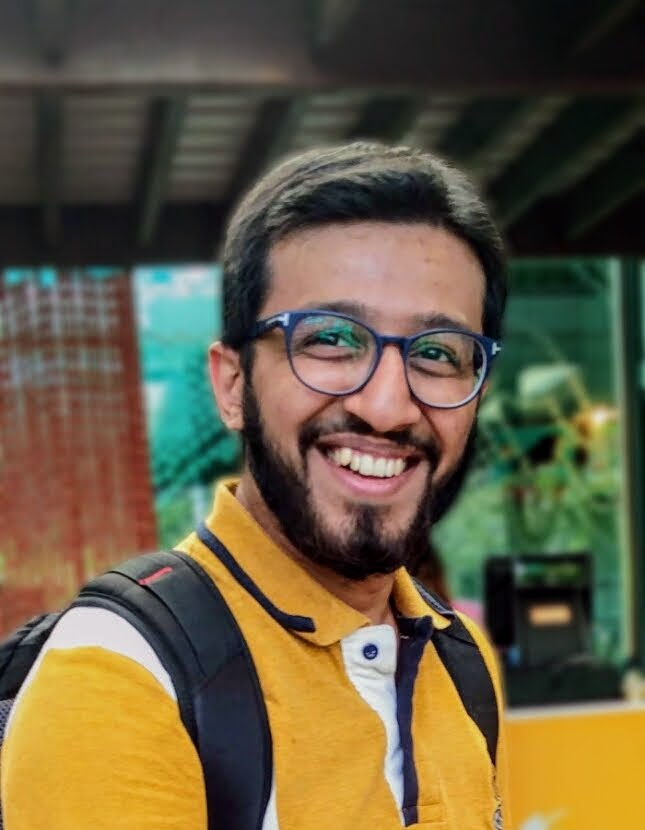}}]{Mustafa Mukadam}
is a Ph.D. student in Robotics at Georgia Institute of Technology and a member of the Institute for Robotics and Intelligent Machines. Previously, he received his M.S. degree in Aerospace Engineering from University of Illinois at Urbana-Champaign. His research focuses on motion planning, learning from demonstration, estimation, and structured techniques to bridge robotics and machine learning, with applications in autonomous navigation and mobile manipulation.
\end{IEEEbiography}

\begin{IEEEbiography}[{\includegraphics[width=1in,height=1.25in,clip,keepaspectratio]{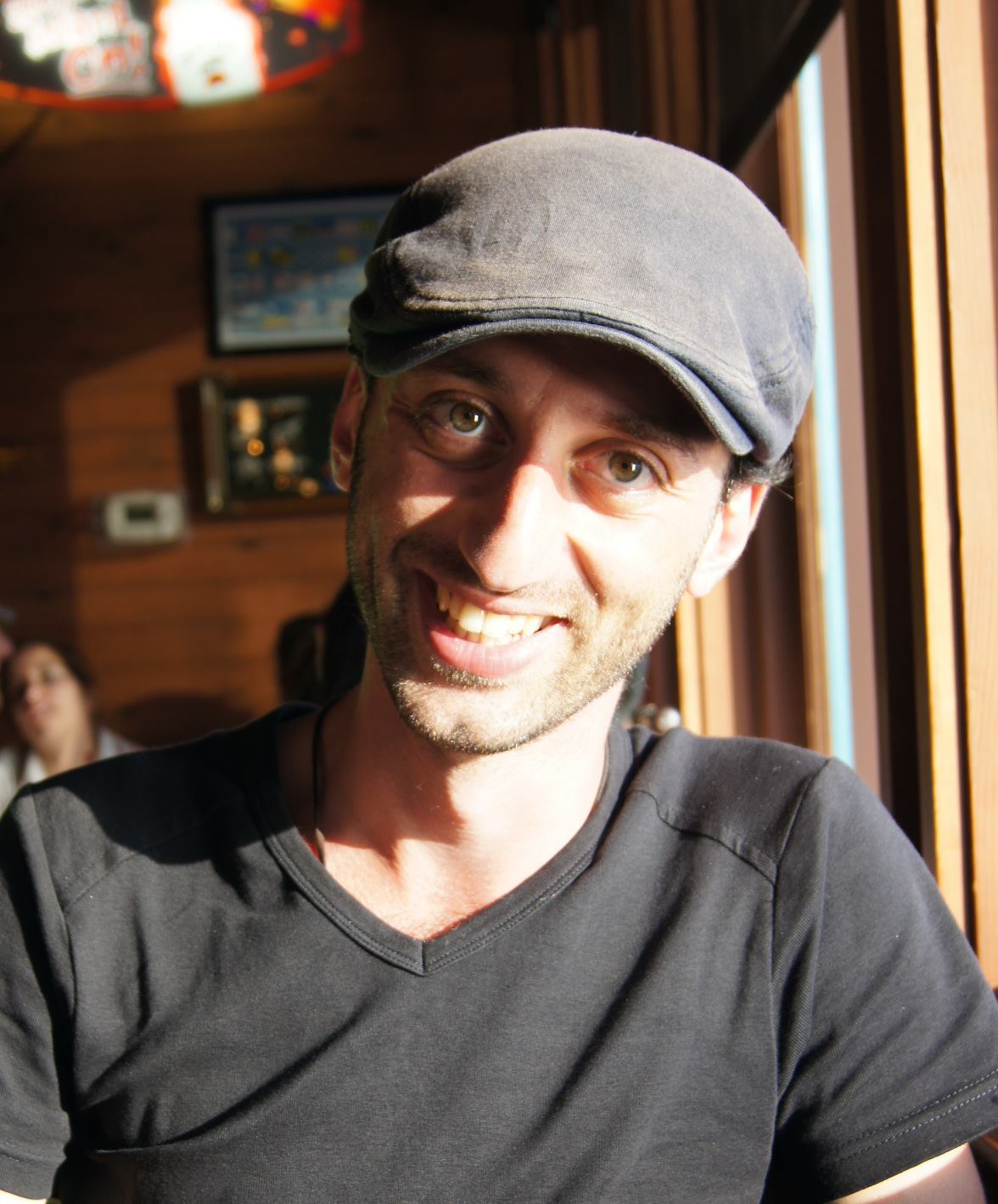}}]{Jan Issac}
is a software engineer at Nvidia with extensive experience API design, in low
level, and high level programming applied in complex systems such as
humanoid robotics.
Jan studied computer science at Kalrsruhe Institute of Technology (KIT) Germany
and Royal Institute of Technology (KTH) Stockholm, Sweden. He did his final
thesis on stochastic filtering for high dimensional on-line object tracking
at Computational Learning and Motor Control Lab (CLMC) at University of
Southern California (USC), Los Angeles, and Max Planck Institute of
Intelligent Systems - Autonomous Motion Department (AMD MPI-IS), Germany.
He co-founded Lula Robotics and helped design and build the system
which is now central to Nvidia's manipulation research.
He earned his Diploma (M.Sc.) in computer science from KIT in 2014.
\end{IEEEbiography}

\begin{IEEEbiography}[{\includegraphics[width=1in,height=1.25in,clip,keepaspectratio]{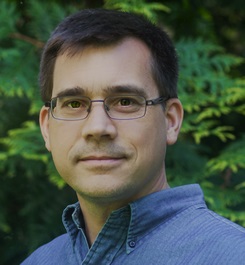}}]{Stan Birchfield}
is a principal research scientist at NVIDIA, exploring the intersection of computer vision and robotics. Previously, he was a tenured professor at Clemson University, where he led research in computer vision, visual tracking, mobile robotics, and the perception of highly deformable objects. He remains an adjunct faculty member at Clemson. He has also conducted research at Microsoft, was the principal architect of a commercial product at a startup company in the Bay Area, co-founded a startup with collaborators at Clemson, and served as a consultant for various companies. He has authored or co-authored nearly 100 publications, as well as a textbook on image processing and analysis; and his open-source software has been used by researchers around the world. He received his Ph.D. in electrical engineering, with a minor in computer science, from Stanford University.
\end{IEEEbiography}

\begin{IEEEbiography}[{\includegraphics[width=1in,height=1.25in,clip,keepaspectratio]{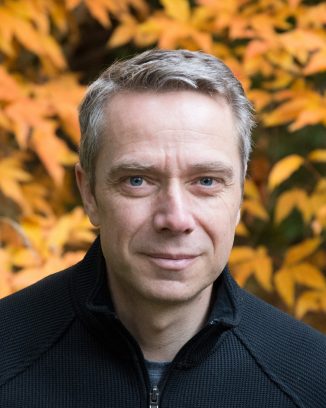}}]{Dieter Fox}
is a Professor in the Department of Computer Science \& Engineering at the University of Washington. He grew up in Bonn, Germany, and received his Ph.D. in 1998 from the Computer Science Department at the University of Bonn. He joined the UW faculty in the fall of 2000. He is currently on partial leave from UW and joined NVIDIA to start a Robotics Research Lab in Seattle. His research interests are in robotics, artificial intelligence, and state estimation. He is the head of the UW Robotics and State Estimation Lab RSE-Lab and recently served as the academic PI of the Intel Science and Technology Center for Pervasive Computing ISTC-PC. He is a Fellow of the AAAI and IEEE, and served as an editor of the IEEE Transactions on Robotics.
\end{IEEEbiography}

\begin{IEEEbiography}[{\includegraphics[width=1in,height=1.25in,clip,keepaspectratio]{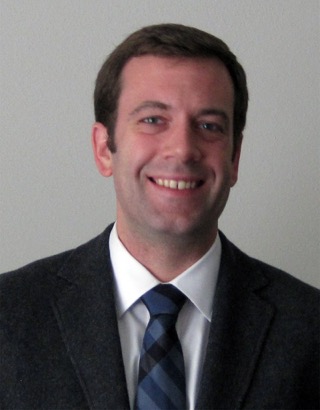}}]{Byron Boots}
is an Assistant Professor in the College of Computing at Georgia Tech. He directs the Georgia Tech Robot Learning Lab, affiliated with the Center for Machine Learning and the Institute for Robotics and Intelligent Machines. Byron’s research focuses on development of theory and systems that tightly integrate perception, learning, and control. He received his Ph.D. in Machine Learning from Carnegie Mellon University and was a postdoctoral researcher in Computer Science and Engineering at the University of Washington.
\end{IEEEbiography}

\begin{IEEEbiography}[{\includegraphics[width=1in,height=1.25in,clip,keepaspectratio]{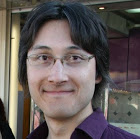}}]{Nathan Ratliff}
is a distinguished research scientist at Nvidia studying
behavior generation and robotic systems. He received his PhD in Robotics from
Carnegie Mellon University under Prof. J. Andrew Bagnell working closely
with Siddhartha Srinivasa. He has worked at Amazon, the Toyota
Technological Institute in Chicago, Intel Lab, Google, the Max Planck Institute
for Intelligent Systems, and the
University of Stuttgart, and prior to joining Nvidia he co-founded Lula
Robotics where he drove the designed and development the system which has become
the foundation of Nvidia’s manipulation platform.
\end{IEEEbiography}

\end{document}